\definecolor{bgcolor}{rgb}{0.8,1,1}
\definecolor{bgcolor2}{rgb}{0.8,1,0.8}
\definecolor{niceblue}{rgb}{0.0,0.19,0.56}
\definecolor{PineGreen}{RGB}{0,110,51}
\definecolor{BrickRed}{RGB}{143,20,2}
\DeclareMathOperator*{\argmax}{arg\,max}
\DeclareMathOperator*{\argmin}{arg\,min}
\newcommand{\R}{\mathbb{R}}
\def\<#1,#2>{\left\langle #1,#2\right\rangle}
\newcolumntype{Y}{>{\centering\arraybackslash}X}
\newcommand{\algname}[1]{{\sf  #1}\xspace}
\newcommand{\EE}{\mathbb{E}}
\newlength{\dhatheight}
\def\la{\langle}
\def\ra{\rangle}
\newtheorem{lemma}{Lemma}
\newtheorem{theorem}{Theorem}
\newtheorem{corollary
}{Corollary
}
\definecolor{pearDark}{HTML}{2980B9}
\definecolor{Myblue}{rgb}{0.86,0.84,0.92}  
\definecolor{Mypink}{rgb}{0.98,0.84,0.85}  %
\definecolor{Myred}{rgb}{0.93,0.29,0.17}
\definecolor{Mygreen}{rgb}{0.31,0.78,0.47}
\definecolor{MyLightgreen}{rgb}{0.7,0.95,0.7}
\definecolor{Myorange}{rgb}{1.0, 0.67451, 0.10980}
\definecolor{GanBlue}{rgb}{0.21569, 0.20392, 0.91765}
\definecolor{Mypurple}{rgb}{0.58, 0.40, 0.74}
\newcommand{\rlb}[1]{\colorbox{Myred}{#1}}
\newcommand{\gcl}[1]{\colorbox{Mygreen}{#1}}
\newcommand{\lgcl}[1]{\colorbox{MyLightgreen}{#1}}
\newtheorem{definition}{Definition}
\title{Universal Inverse Distillation for Matching Models with Real-Data Supervision (No GANs)}
\author{Nikita Kornilov\footnote[1]{Equal contribution} \\
Applied AI Institute, Moscow, Russia\\
MIRAI, Moscow, Russia\\
BRAIn Lab, Moscow, Russia\\
\texttt{jhomanik14@gmail.com} \\
\And
David Li\footnote[1]{Equal contribution} \\
 AI Foundation lab, Moscow, Russia \\
MBZUAI, Abu Dhabi, UAE\\
\texttt{David.Li@mbzuai.ac.ae} \\
\AND
 Tikhon Mavrin\footnote[1]{Equal contribution} \\
Applied AI Institute, Moscow, Russia\\
\texttt{tixonmavrin@gmail.com}
\And
\hspace{28mm}Aleksei Leonov \\
 \hspace{28mm}AI Foundation lab, Moscow, Russia \\
 \hspace{28mm}MIRAI, Moscow, Russia 
 \AND
 Nikita Gushchin \\
Applied AI Institute, Moscow, Russia\\
 AXXX, Moscow, Russia
 \And
  \hspace{28,5mm}Evgeny Burnaev\\
  \hspace{28,5mm}Applied AI Institute, Moscow, Russia \\
  \hspace{28,5mm}AXXX, Moscow, Russia
 \AND 
 Iaroslav Koshelev \\
 AI Foundation lab, Moscow, Russia 
 \And
 \hspace{31mm}Alexander Korotin\\
 \hspace{31mm}Applied AI Institute, Moscow, Russia\\
  \hspace{31mm}AXXX, Moscow, Russia\\
\hspace{31mm}\texttt{iamalexkorotin@gmail.com}
}
\begin{document}

\maketitle
\footnotetext[1]{Equal contribution}
\begin{abstract}
While achieving exceptional generative quality, modern diffusion, flow, and other matching models suffer from slow inference, as they require many steps of iterative generation. Recent distillation methods address this problem by training efficient one-step generators under the guidance of a pre-trained teacher model. However, these methods are often constrained to only one specific framework, e.g., only to diffusion or only to flow models. Furthermore, these methods are originally data-free, and to benefit from the usage of real data, it is required to use an additional complex adversarial training with an extra discriminator model. In this paper, we present \textbf{RealUID}, a universal distillation framework for all matching models that seamlessly incorporates real data into the distillation procedure without GANs. Our \textbf{RealUID} approach offers a simple theoretical foundation that covers previous distillation methods for Flow Matching and Diffusion models, and can be also extended to their modifications, such as Bridge Matching and Stochastic Interpolants. The code can be found in \url{https://github.com/David-cripto/RealUID}.
\end{abstract}

\section{Introduction}\label{sec: related works}



In generative modeling, the goal is to learn to sample from complex data distributions (e.g., images), and two powerful paradigms for it are the \textbf{diffusion models} (DM) and the \textbf{flow matching} (FM) models. While they share common principles and are even equivalent under certain conditions \citep{holderrieth2024generator, gao2025diffusion}, they are typically studied separately. Diffusion models \citep{sohl2015deep, ho2020denoising, song2021scorebased}  transform data into noise through a forward process and then learn a reverse-time stochastic differential equation (SDE) to recover the data distribution. Training minimizes score-matching objectives, yielding unbiased estimates of intermediate scores. Sampling requires simulating the reverse dynamics, which is computationally heavy but delivers high-quality and diverse results. Flow Matching \citep{lipman2023flow, liu2022rectified} instead interpolates between source and target distributions by learning the vector field of an ordinary differential equation (ODE). The field is estimated through unbiased conditional objectives, but the resulting ODE often has curved trajectories, making sampling costly due to expensive integration. Beyond these, \textbf{Bridge Matching} \citep{peluchetti2023non, liu2022let} and \textbf{Stochastic Interpolants} \citep{albergo2023stochastic} generalize the framework and naturally support \emph{data couplings}, which are crucial for data-to-data translation. 
Since all of the above optimize \emph{conditional matching} objectives to recover an ODE/SDE for generation, we refer to them collectively as \emph{matching models}.



Despite their success, matching models share a major drawback: sampling is slow, as generation requires integrating many steps of an SDE or ODE. To address this problem, a range of distillation techniques have been proposed to compress multi-step dynamics into efficient one-step or few-step generators. Although matching models follow a similar mathematical framework, many distillation works consider only one particular framework, e.g., only diffusion models \citep{zhou2024adversarial, zhou2024score}, Flow Matching \citep{huang2024flow}, or Bridge Matching \citep{gushchin2025inverse}. Furthermore, these distillation methods are data-free by construction and cannot benefit from the utilization of real data without using additional GAN-based losses. \textit{Thus, the following issues remain:}

\vspace{-2.0mm}
\begin{enumerate}[leftmargin=*]
    \item Similar distillation techniques developed separately for similar matching models frameworks.
    \item Absence of a natural way to incorporate real data in distillation procedures (without GANs).
\end{enumerate}

\vspace{-1.5mm}
\textbf{Contributions.} In this paper, we address these issues and present the following \textbf{main contributions}:
\vspace{-1.5mm}
\begin{enumerate}[leftmargin=*]
    \item We present the \textit{Universal Inverse Distillation with real data (RealUID)} framework for matching models, including diffusion and flow matching models (\S \ref{sec: unified view}) as well as Bridge Matching and Stochastic Interpolants (Appendix~\ref{app:UID-bridge-interpolants}). It unifies previously introduced Flow Generator Matching (FGM), Score Identity Distillation (SiD) and Inverse Bridge Matching Distillation (IBMD) methods (\S \ref{sec: relation to prior uid}) for flow, score and bridge matching models respectively, provides simple yet rigorous theoretical explanations based on a linearization technique, and reveals the connections between these methods and inverse optimization (\S \ref{sec: connection with inverse}). 
    \item Our RealUID introduces a novel and natural way to incorporate real data directly into the distillation loss, eliminating the need for extra adversarial losses which require additional discriminator networks used in GANs from the previous works (\S \ref{sec: unified dist with data}). 
\end{enumerate}

\section{Backgrounds on training and distilling matching models}
Here, we describe diffusion, flow (\S \ref{sec:diffusion-flows}) and other types of matching models (\S \ref{sec: unified flow loss}) along with the distillation methods for them (\S \ref{sec:background-distillations}). Then, we discuss how GANs can be used to add real data into the distillation procedure  (\S \ref{sec: GANs in distilation}).

\textbf{Preliminaries.} We work on the $D$-dimensional Euclidean space $\mathbb{R}^D$. This space is equipped with the standard scalar product $\la x, y \ra = \sum_{d=1}^D x_dy_d$, the $\ell_2$-norm $\|x\| = \sqrt{\la x, x \ra}$ and the $\ell_2$-distance $\|x-y\|,  \forall x, y \in \R^D.$ We consider probability distributions from the set $\mathcal{P}(\mathbb{R}^D)$ of absolutely continuous distributions with finite variance and support on the whole $\mathbb{R}^D$.

\subsection{Diffusion and flow models}\label{sec:diffusion-flows}

\textbf{Diffusion models} \citep{sohl2015deep, ho2020denoising, song2021scorebased} consider a forward noising process $p_t$ that gradually transforms clean data $p_0$ into a noise $p_T$ on the time interval $[0,T]$:
\begin{equation}
    dx_t = f_t\cdot  x_t \cdot dt + g_t \cdot d\text{w}_t, \quad x_0 \sim p_0,
    \nonumber
\end{equation}
where $f_t$ and $g_t$ are time-dependent scalars and $\text{w}_t$ is a standard Wiener process. This process defines a conditional distributions $p_t(x_t|x_0) = \mathcal{N}(\alpha_t x_0| \sigma^2_t \mathbf{I}),$ where
\begin{gather}
    \alpha_t = \exp \left( \int_0^t f_s \, ds \right),  \quad \sigma_t = \left( \int_0^t g_s^2 \exp \left( -2 \int_0^s f_u \, du \right) ds \right)^{1/2}.
    \nonumber
\end{gather}
Each conditional distribution admits a conditional score function, describing it:
\begin{equation}
    s_t(x_t|x_0) := \nabla_{x_t} \log p_t(x_t|x_0) = -(x_t - \alpha_t x_0)/\sigma_t^2.
    \nonumber
\end{equation}
The reverse dynamics from the noise distribution $p_T$ to the data distribution $p_0$ is provided by the following \textit{reverse-time} SDE with a reverse-time Wiener process $\bar{\text{w}}_t$:
\begin{equation}
    dx_t = (f_t \cdot x_t - g^2_t \cdot s_t(x_t))dt + g_t d\bar{\text{w}}_t,
    \nonumber
\end{equation}
where $s_t(x_t) \!=\! \mathbb{E}_{x_0 \sim p_0(\cdot|x_t)}[s_t(x_t|x_0)]$ is the score function of $p_t(x_t) \! = \! \int \! p(x_t|x_0)p(x_0)dx_0$.
This unconditional score function is learned via minimizing the denoising score matching (DSM) loss:
\begin{equation}\label{eq:DSM}
    \mathcal{L}_{\text{DSM}}(s', p_0) = \mathbb{E}_{t\sim [0,T], x_0 \sim p_0, x_t \sim p_t(\cdot|x_0)} \left[ \gamma_t \|s'_t(x_t) - s_t(x_t|x_0) \|_2^2 \right],
\end{equation}
where $\gamma_t$ are some positive weights. The reverse dynamics admits a probability flow ODE (PF-ODE):
$$
    dx_t = u_t(x_t)dt, \quad u_t(x_t) := (f_t \cdot x_t - g^2_t \cdot s_t(x_t)/2),
$$
which provides faster inference than the SDE formulation.

\vspace{-1.0mm}
\textbf{Flow Matching} framework \citep{lipman2023flow, liu2023flow} constructs the flow directly by learning the drift $u_t(x_t)$. Specifically, for each data point $x_0 \sim p_0$, one defines a conditional flow $p_t(x_t|x_0)$ with the corresponding conditional vector field $u_t(x_t|x_0)$ generating it via ODE:
$$dx_t = u_t(x_t|x_0) dt.$$
Then, to construct the flow between the data $p_0$ and noise $p_T$, one needs to compute the unconditional vector field $u_t(x_t) = \mathbb{E}_{x_0 \sim p_0(\cdot|x_t)}[u_t(x_t|x_0)]$ which generates the flow $p_t(x_t) \! = \! \int \! p(x_t|x_0)p(x_0)dx_0$. It can be done by minimizing the following Conditional Flow Matching (CFM) loss:
\begin{eqnarray}
    \mathcal{L}_{\text{CFM}}(v, p_0) = \mathbb{E}_{t\sim [0,T], x_0 \sim p_0, x_t \sim p_t(\cdot|x_0)} \left[ \|v_t(x_t) - u_t(x_t|x_0) \|_2^2 \right]. \label{eq: CFM loss}
\end{eqnarray}
 In practice, the most popular choice is the Gaussian conditional flows $p_t(x_t|x_0) = \mathcal{N}(\alpha_t x_0, \sigma^2_t \mathbf{I})$. For this conditional flow samples can be obtained as $x_t = \alpha_t x_0 + \sigma_t \epsilon$,  $\epsilon \sim \mathcal{N}(0, \mathbf{I})$ and the conditional drift can be calculated as $u_t(x_t|x_0) = \dot{\alpha_t}x_0 + \dot{\sigma}_t \epsilon$.

We recall \underline{data-to-data models working with data couplings}, such as \textbf{Bridge Matching} and \textbf{Stochastic Interpolants}, in Appendices~\ref{app: bridge matching} and \ref{app: stoch inter}, respectively.

\subsection{Universal loss for matching models}\label{sec: unified flow loss}

From a mathematical point of view, it was shown in \citep{holderrieth2024generator, gao2025diffusion} that flow and diffusion models basically share the same loss structure. We recall this structure, but use our own notation. We call diffusion and flow models and their extensions as \textit{matching models}.

A matching model constructs a probability path $p_t$ on the time interval $[0,T]$, transforming the desired data  $p_0\in \mathcal{P}(\R^D)$ to the noise  $p_T \in \mathcal{P}(\R^D)$. This path is built as a mixture of simple conditional paths $p_t(\cdot|x_0)$ conditioned on  samples $x_0 \sim p_0$, i.e., $p_t(x_t) = \int_{\R^D} p_t(x_t|x_0) p_0(x_0) dx_0, \forall x_t \in\R^D.$ The path $p_t$ determines the \textit{function} $f^{p_0}: [0,T] \times \R^D \to \R^D$ which recovers it (e.g., score function or drift). The conditional paths also determine their own simple \textit{conditional functions} $f^{p_0}(\cdot|x_0)$ so that they express $f^{p_0}_t(x_t) = \EE_{x_0 \sim p_0(\cdot| x_t)} [f^{p_0}_t(x_t|x_0)]$, where $p_0(\cdot| x_t)$ is a data distribution $p_0$ conditioned on sample $x_t$ at time $t$. Since the function $f^{p_0}$ cannot be computed directly, it is approximated by a \textit{trainable function} $f: [0,T] \times \R^D \to \R^D$ at each time $t$ from $[0,T]$ and point $x_t \sim p_t$ using known analytically conditional functions:
\begin{equation*}
     \| f_t(x_t) - f_t^{p_0}(x_t) \|^2 =  \| f_t(x_t) - \EE_{x_0 \sim p_0(\cdot| x_t)}  [f^{p_0}_t(x_t|x_0)] \|^2 \notag \propto  \EE_{x_0 \sim p_0(\cdot| x_t)}  [\| f_t(x_t) -  f^{p_0}_t(x_t|x_0) \|^2].   \notag
\end{equation*}
We also change the sampling order  ${x_t \sim p_t, x_0 \sim p_0(\cdot| x_t)}$ to more natural ${x_0 \sim p_0, x_t\sim p_t(\cdot| x_0)}$. 
\begin{definition} \label{def: UM loss}
    We define  \textbf{Universal Matching (UM)} loss $\mathcal{L}_{\text{UM}} (f, p_0)$ that takes trainable function $f$ and distribution $p_0 \in \mathcal{P}(\R^D)$ as arguments and upon minimization over $f$ returns the function $f^{p_0}$:
\begin{equation}
    \!\!\mathcal{L}_{\text{UM}}(f, p_0)\!\! := \EE_{t \sim [0,T]} \EE_{x_0 \sim p_0, x_t\sim p_t(\cdot| x_0)}  [\| f_t(x_t) - f^{p_0}_t(x_t|x_0) \|^2],   f^{p_0} \!\!:=\!\argmin{_f} \mathcal{L}_{\text{UM}}(f, p_0). \label{eq: flow loss}
\end{equation}
The notation $t \sim [0,T]$ hides \underline{time sampling and loss weighting} inherent to the given matching model.
\end{definition}


\subsection{Distillation of matching-based models}\label{sec:background-distillations}
To solve the long inference problem of matching models, a line of distillation approaches sharing similar principles was introduced: \textbf{Score Identity Distillation} \cite[\textbf{SiD}]{zhou2024score, zhou2024adversarial}, \textbf{Flow Generator Matching} \cite[\textbf{FGM}]{huang2024flow}, and \textbf{Inverse Bridge Matching Distillation} \cite[\textbf{IBMD}]{gushchin2025inverse}, for diffusion, flow, and bridge matching models, respectively.

The \textbf{SiD} approach trains a \textit{student generator} $G_\theta: \mathcal{Z} \to \mathbb{R}^D$ (parameterized by $\theta$) that produces a distribution $p_0^\theta$ from a latent distribution $p^{\mathcal{Z}}$ on $\mathcal{Z}$. This approach minimizes the squared $\ell_2$-distance between the known \textit{teacher score function} $s^* := \argmin_{s'} \mathcal{L}_{\text{DSM}}(s', p_0^*)$ on real data $p_0^*$ and the unknown \textit{student score function} $s^\theta$:
\begin{eqnarray}
    \min{_\theta } \EE_{t \sim [0,T]} \EE_{x^\theta_t \sim p_t^\theta} [\| s^\theta_t(x^\theta_t) - s_t^*(x^\theta_t)\|^2], \quad \text{ s.t. }s^\theta = \argmin{_{s'}} \mathcal{L}_{\text{DSM}}(s', p_0^\theta),\label{eq: score l2 dist}
\end{eqnarray}
where $p_t^\theta$ is the forward noising process for the generated data $p_0^\theta$.
The authors propose the tractable loss with parameter $\alpha_{\text{SiD}}$ to approximate the real gradients of \eqref{eq: score l2 dist} :
\begin{eqnarray}
    \mathcal{L}_{\text{SiD}}(\theta) &:=& \EE_{t \sim [0,T]} \EE_{z \sim p^{\mathcal{Z}}, x_0^\theta = G_\theta(z), x^\theta_t \sim p_t^\theta}  [ -  2\omega_t \cdot \alpha_{\text{SiD}}\|s^*_t(x_t^\theta)  - s^{sg[\theta]}_t(x_t^\theta)\|^2 \notag \\
    &+& 2  \omega_t \la s^*_t(x_t^\theta) - s^{sg[\theta]}_t(x_t^\theta), s^*_t(x_t^\theta)  - s_t^\theta(x^\theta_t|x^\theta_0)\ra], \hspace{3pt} \!s^\theta\!=\! \argmin{_{s'}} \mathcal{L}_{\text{DSM}}(s', p_0^\theta),\label{eq: Sid loss} 
\end{eqnarray}
where $w_t$ are normalizing weights and gradients w.r.t. $\theta$ are not calculated for the variables under stop-gradient $sg[\cdot]$ operator. The SiD pipeline is two alternating steps: first, refine the \textit{fake score} $s^{sg[\theta]}$ by minimizing DSM loss \eqref{eq:DSM} on new $p_0^\theta$ from the previous step. Then, update the generator $G_\theta$ using the gradient of \eqref{eq: Sid loss} with frozen $s^{sg[\theta]}$. The $\alpha_{\text{SiD}}$ parameter is chosen from  the range $[0.5, 1.2]$, although theoretically only the value $\alpha_{\text{SiD}} = 0.5$ restores true gradient as we show in our paper. 

The authors of \textbf{FGM} propose a similar approach, but for the flow matching models. Specifically, they also use a generator $G_{\theta}$ to produce a distribution $p_0^\theta$, but instead of DSM loss \eqref{eq:DSM}, consider CFM loss \eqref{eq: CFM loss}. 
The method minimizes the squared $\ell_2$-distance between the student and teacher drifts:
\begin{eqnarray}
    \min{_\theta}\EE_{t \sim [0,T]} \EE_{x_t \sim p_t^\theta} [\| u^\theta_t(x_t) - u_t^*(x_t)\|^2], \quad \text{s.t. }u^\theta := \argmin\text{$_{v}$} \mathcal{L}_{\text{CFM}}(v, p_0^\theta), \label{eq: FM l2 dist}
\end{eqnarray}
where the interpolation path $p_t^\theta$ is constructed between the noise $p_T$ and generator $p_0^\theta$ distributions. 
To avoid the same problem of differentiating through $\argmin$ operator as in SiD, the authors derive a tractable loss whose gradients match those of \eqref{eq: FM l2 dist}:
\begin{eqnarray}
    \mathcal{L}_{\text{FGM}}(\theta) \hspace{-2pt} &:=& \EE_{t \sim [0,T]} \EE_{z \sim p^Z, x^\theta_0 = G_\theta(z), x^\theta_t \sim p_t^\theta}  [ -\|u^*_t(x^\theta_t)  - u^{sg[\theta]}_t(x^\theta_t)\|^2 \label{eq: FGM loss} \\
    \hspace{-2pt}&+&  2 \la  u^*_t(x^\theta_t) - u^{sg[\theta]}_t(x^\theta_t), u^*_t(x^\theta_t)  - u^\theta_t(x^\theta_t|x^\theta_0)\ra], \text{ s.t. }u^{\theta} = \argmin{_v} \mathcal{L}_{\text{CFM}}(v, p_0^\theta). \notag 
\end{eqnarray}
For data-to-data bridge matching models, the \textbf{IBMD} method applies the same idea of minimizing the difference between student and teacher drifts using a similar loss. Notably, all these approaches (SiD, FGM, IBMD) are \textit{data-free}, i.e., they do not use any real data from $p_0^*$ to train a generator. 

\subsection{GANs for real data incorporation} \label{sec: GANs in distilation}

Although FGM and SiD methods exhibit strong performance in one-step generation, the generator in these methods is trained under the guidance of the teacher model alone. It means that the generator cannot get more information about the real data that the teacher has learned. For example, it is not expected to correct the teacher's errors. To address this problem, recent works \citep{yin2024improved, zhou2024adversarial} propose adding real data via  GANs \citep{goodfellow2014generative}. In such approaches, the encoder of fake model is typically augmented with an extra discriminator head $D$ that distinguishes between the generated and real data noising processes via the following adversarial loss:
\begin{equation}
\mathcal{L}_{\text{adv}}
= \EE_{t \sim[0,T]}\bigl[\mathbb{E}_{x^*_t \sim p_t^*} \bigl[ \ln D_t\bigl(x^*_t\bigr) \bigr] + \mathbb{E}_{x_t^\theta \sim p^\theta_t} \bigl[\ln [1 - D_t\bigl(x_t^\theta\bigr)] \bigr]\bigr].
\label{eq: gan loss}
\end{equation}
The overall objective in such hybrid frameworks \citep{zhou2024adversarial} consists of \textit{generator loss:}
\begin{equation}
\mathcal{L}_{G_\theta} =  \mathcal{L}_{\text{FGM/SiD}} (\theta) 
+ \lambda_{\text{adv}}^{G_\theta} \mathcal{L}_{\text{adv}}(\theta),
\notag
\end{equation}
\textit{And fake model/discriminator loss:}
\begin{equation}
\mathcal{L}_D =  \mathcal{L}_{\text{CFM/DSM}} 
+ \lambda_{\text{adv}}^D \mathcal{L}_{\text{adv}}.
\notag
\end{equation}
Here,  $\lambda_{\text{adv}}^{G_\theta}$ and $\lambda_{\text{adv}}^D$ are weighting coefficients for the adversarial components. \color{black} Despite empirical gains, the GAN augmentation entails nontrivial costs: it necessitates architectural modifications, such as an auxiliary discriminator head, and inherits the well-known optimization problems of adversarial training, such as non-stationary objectives, mode collapse, and sensitivity to training dynamics.

\section{Universal distillation of matching models with real data} \label{sec: unified view}

In this section, we present our novel RealUID approach for matching models enhanced by real data. First, we show that the previous data-free distillation methods can be unified under the single UID framework (\S \ref{sec: inv matching no data}). Then, we describe how this framework is connected to prior works (\S \ref{sec: relation to prior uid}) and inverse optimization (\S \ref{sec: connection with inverse}). Using this intuition, we propose and discuss the real data modified UID framework (RealUID) with a natural way to incorporate real data without GANs (\S \ref{sec: unified dist with data}). 

\subsection{Universal Inverse Distillation} \label{sec: inv matching no data}
To learn a complex real data distribution $p_0^*$, one usually trains a \textit{teacher function} $f^* := \argmin_f \mathcal{L}_{\text{UM}}(f, p_0^*)$ that is then used in a multi-step sampling procedure (Def. \ref{def: UM loss}). To avoid time-consuming sampling, one can train a simple \textit{student generator} $G_\theta:\mathcal{Z} \to \R^D$ with parameters $\theta$ to reproduce the real data $p_0^*$ from the distribution $p^\mathcal{Z}$ on the latent space $\mathcal{Z}$. The teacher function serves as a guide that shows how close the student distribution $p_0^\theta$ and the real data $p_0^*$ are. FGM and SiD methods (\S \ref{sec:background-distillations}) train such generator via minimizing the squared $\ell_2$-distance between the known teacher function $f^*$ and an unknown \textit{student function} $f^\theta := \argmin_f \mathcal{L}_{\text{UM}}(f, p_0^\theta)$:
\begin{align}
    &\EE_{t \sim [0,T]} \EE_{x^\theta_t \sim p_t^\theta} [\| f^*_t(x^\theta_t) - f^\theta_t(x^\theta_t) \|^2] = 
    \EE_{t \sim [0,T]} \EE_{x^\theta_t \sim p_t^\theta} [\| f^*_t(x^\theta_t) - \EE_{x_0^{\theta} \sim p_0^{\theta}(\cdot| x^\theta_t)}[   f^\theta_t(x^\theta_t|x_0^\theta)] \|^2 ] \notag \\
    &= \EE_{t \sim [0,T]} \EE_{x^\theta_t \sim p_t^\theta} [\| f^*_t(x^\theta_t)\|^2]  - 2 \cdot  \EE_{t \sim [0,T]} \EE_{x^\theta_t \sim p_t^\theta, x_0^{\theta} \sim p_0^{\theta}(\cdot| x^\theta_t)} [\la f^*_t(x^\theta_t),   f^\theta_t(x^\theta_t|x_0^\theta) \ra] \notag \\
    &+ \underbrace{\EE_{t \sim [0,T]} \EE_{x^\theta_t \sim p_t^\theta} [\| \EE_{x_0^{\theta} \sim p_0^{\theta}(\cdot| x^\theta_t)}  [ f^\theta_t(x^\theta_t|x_0^\theta)]\|^2]}_{\text{not tractable}},
    \label{eq: final term}
\end{align}
where $p_t^\theta$ is the probability path constructed between generated data $p_0^\theta$ and noise $p_T$.
The problem is that the final term \eqref{eq: final term} cannot be calculated directly, since it involves the math expectation inside the squared norm, unlike the other terms which are linear in the expectations. It means that a simple estimate of $\| f^\theta_t(x^\theta_t|x_0^\theta)\|^2$ using samples $x_0^\theta$ and $x^\theta_t$ will be \textit{biased}. Moreover, to differentiate through the math expectation inside the norm, an explicit dependence of $p_0^\theta$  on $\theta$ is required, while, in practice, usually only dependence of samples $x_0^\theta$ on $\theta$ is known.
\paragraph{Making loss tractable via linearization.} To resolve this problem, we use the identity $\|a||^2 = \max_{b\in \R^D} \{-\|b\|^2 + 2\la b,a\ra\}, \forall a \in \R^D $. For a fixed time $t$ and point $x^\theta_t$, we reformulate the squared norm \eqref{eq: final term} as this identity and parametrize vector $b$ via an auxiliary function $\delta: [0,T] \times \R^D \to \R^D$:
\begin{eqnarray}
    && \EE_{\substack{t \sim [0,T], \\ x^\theta_t \sim p_t^\theta}} [\| f^*_t(x^\theta_t) - f^\theta_t(x^\theta_t) \|^2] = \max_{\delta_t(x_t^\theta)} \EE_{\substack{t \sim [0,T], \\ x^\theta_t \sim p_t^\theta}}  \left[ - \|\delta_t(x_t^\theta)\|^2 +  2\la \delta_t(x_t^\theta), f^*_t(x^\theta_t) - f^\theta_t(x^\theta_t)\ra \right]\notag \\
    &&{=}\max_{\delta_t(x_t^\theta)}  \EE_{\substack{t \sim [0,T], \\ x^\theta_t \sim p_t^\theta}}  \bigl[ - \|\delta_t(x_t^\theta)\|^2 +  2\la \delta_t(x_t^\theta), f^*_t(x^\theta_t) \ra - \underset{\text{linear and tractable}}{\underbrace{2\la \delta_t(x_t^\theta), \EE_{x_0^{\theta} \sim p_0^{\theta}(\cdot| x^\theta_t)}[f^\theta_t(x^\theta_t|x_{0}^\theta)]\ra}} \bigr]. \label{eq: l_2 2 linear delta no data}
\end{eqnarray}
Now, all loss terms are linear and can be sampled. The parameterization $\delta = f^* - f$ with a \textit{fake function} $f:[0,T] \times \R^D \to \R^D$ allows us to get an elegant form:
\begin{align}
&\!\!\eqref{eq: l_2 2 linear delta no data}\!\!=\!\! \max_{f_t(x_t^\theta)} \EE_{\substack{t \sim [0,T], x^\theta_0 \sim p^\theta_0, \\ x^\theta_t\sim p^\theta_t(\cdot| x^\theta_0)}}\!\!\left\{ - \|f^*_t(x^\theta_t)\!-\! f_t(x^\theta_t)\|^2\!\!+  2\la f^*_t(x^\theta_t)\!-\!f_t(x^\theta_t), f^*_t(x^\theta_t)\!-\!f^\theta_t(x^\theta_t|x_0^\theta)\ra \right\} \label{eq: explicit uid}\\
&=\!\!\max_{f_t(x_t^\theta)}\!\bigl\{ \underset{=\mathcal{L}_{\text{UM}}(f^*, p_0^\theta)}{\underbrace{\EE_{\substack{t \sim [0,T], x^\theta_0 \sim p^\theta_0, \\ x^\theta_t\sim p^\theta_t(\cdot| x^\theta_0)}} [\|f^*_t(x^\theta_t)\!-\!  f^\theta_t(x^\theta_t|x_{0}^\theta)\|^2]}}\!-\! \underset{=\mathcal{L}_{\text{UM}}(f, p_0^\theta)}{\underbrace{\EE_{\substack{t \sim [0,T], x^\theta_0 \sim p^\theta_0, \\ x^\theta_t\sim p^\theta_t(\cdot| x^\theta_0)}}[\|f_t(x^\theta_t)\!-\!  f^\theta_t(x^\theta_t|x_{0}^\theta)\|^2]}} \bigr\}. \label{eq: min max uid}\end{align} 

\paragraph{Summary.}
We build a universal distillation framework as a single min-max optimization \eqref{eq: Inv loss no data}, implicitly minimizing squared $\ell_2$-distance between teacher and student functions. When real and generated probability paths match, these functions match as well, and the distance attains its minimum.
\begin{theorem}[\textbf{Real data generator minimizes UID loss}] \label{thm: inv loss no data}
Let teacher $f^* := \argmin_f \mathcal{L}_{\text{UM}}(f, p_0^*)$ be the minimizer of UM loss (Def. \ref{def: UM loss}) on  real data $p_0^* \in \mathcal{P}(\R^D)$. Then, real data generator $G_{\theta^*}$ s.t. $p_0^{\theta^*} = p_0^*$ is a solution to the min-max optimization of \textbf{Universal Inverse Distillation (UID) loss} $\mathcal{L}_{\text{UID}}(f, p_0^\theta)$ over fake function $f$ and generator distribution $p_0^\theta$:
\begin{eqnarray} 
    \min{_\theta} \max{_f} \left\{\mathcal{L}_{\text{UID}}(f, p_0^\theta) :=  \mathcal{L}_{\text{UM}}(f^*, p_0^\theta) - \mathcal{L}_{\text{UM}}(f, p_0^\theta) \right\}.\label{eq: Inv loss no data}
\end{eqnarray} 
\end{theorem}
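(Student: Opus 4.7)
The argument essentially formalizes and inverts the derivation that appears in equations \eqref{eq: final term}--\eqref{eq: min max uid}. The plan is to analyze the inner maximization first, then the outer minimization, and finally exhibit $\theta^*$ as a minimizer.

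\textbf{Step 1 (Inner maximization).} For a fixed generator distribution $p_0^\theta$, I would evaluate $\max_f \mathcal{L}_{\text{UID}}(f, p_0^\theta) = \mathcal{L}_{\text{UM}}(f^*, p_0^\theta) - \min_f \mathcal{L}_{\text{UM}}(f, p_0^\theta)$. By Definition~\ref{def: UM loss} applied to the distribution $p_0^\theta$, the minimizer of $\mathcal{L}_{\text{UM}}(\cdot, p_0^\theta)$ is exactly $f^\theta$, so the inner problem has the value $\mathcal{L}_{\text{UM}}(f^*, p_0^\theta) - \mathcal{L}_{\text{UM}}(f^\theta, p_0^\theta)$. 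I would then reverse the linearization identity in \eqref{eq: min max uid}: using $f^\theta_t(x_t^\theta) = \EE_{x_0^\theta \sim p_0^\theta(\cdot|x_t^\theta)} f^\theta_t(x_t^\theta|x_0^\theta)$ together with the bias-variance style decomposition $\EE\|a - Y\|^2 = \|a - \EE Y\|^2 + \EE\|Y - \EE Y\|^2$ applied to both UM terms, the unknown conditional variance cancels and I obtain
\begin{equation}
\max_f \mathcal{L}_{\text{UID}}(f, p_0^\theta) \;=\; \EE_{t \sim [0,T]} \EE_{x_t^\theta \sim p_t^\theta} \|f^*_t(x_t^\theta) - f^\theta_t(x_t^\theta)\|^2. \notag
\end{equation}

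\textbf{Step 2 (Outer minimization and optimality of $\theta^*$).} The right-hand side above is pointwise non-negative, so $\min_\theta \max_f \mathcal{L}_{\text{UID}}(f, p_0^\theta) \geq 0$. For $\theta = \theta^*$ with $p_0^{\theta^*} = p_0^*$, the path $\{p_t^{\theta^*}\}$ coincides with $\{p_t^*\}$, hence $\mathcal{L}_{\text{UM}}(\cdot, p_0^{\theta^*}) = \mathcal{L}_{\text{UM}}(\cdot, p_0^*)$ as functionals of $f$, so their unique minimizers coincide: $f^{\theta^*} = f^*$. Plugging this into the Step~1 identity makes the integrand vanish, giving $\max_f \mathcal{L}_{\text{UID}}(f, p_0^{\theta^*}) = 0$. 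Therefore $\theta^*$ attains the lower bound and is a solution of the min-max problem \eqref{eq: Inv loss no data}.

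\textbf{Expected main obstacle.} The algebraic manipulation is routine once the structure is in place; the only subtle point I would be careful about is the identification of the minimizer of the UM loss as the conditional expectation $f^{p_0}$. This requires that the minimizer in Definition~\ref{def: UM loss} is essentially unique on the support of $p_t$ (so that $p_0^* = p_0^\theta$ really forces $f^* = f^\theta$) and that the swap between the pointwise maximization over values $f_t(x_t^\theta)$ in \eqref{eq: l_2 2 linear delta no data}--\eqref{eq: min max uid} and the outer expectation over $(t, x_t^\theta)$ is legitimate. Both are standard consequences of the assumption that distributions in $\mathcal{P}(\R^D)$ have full support and finite second moments, so the weighted $L^2$ projection onto conditional expectations is well-defined; I would state these as mild regularity hypotheses rather than prove them from scratch.
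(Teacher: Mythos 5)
Your proof is correct and follows essentially the same route as the paper: the inner maximization recovers the student function $f^\theta$ and evaluates to the squared $\ell_2$-distance $\EE_{t}\EE_{x_t^\theta\sim p_t^\theta}\|f^*_t(x_t^\theta)-f^\theta_t(x_t^\theta)\|^2$ (this is exactly Lemma~\ref{lem: inv no data l2 dist}), which is non-negative and vanishes when $p_0^{\theta^*}=p_0^*$ forces $f^{\theta^*}=f^*$. Your bias--variance decomposition is just the paper's pointwise quadratic maximization in $\delta$ read in the opposite direction, so the two derivations are equivalent.
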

\begin{lemma}[\textbf{UID loss minimizes squared $\ell_2$-distance}] \label{lem: inv no data l2 dist}
     Maximization of UID loss \eqref{eq: Inv loss no data}  over fake function $f$ 
     represents the squared $\ell_2$-distance between the student function $f^\theta := \argmin_f \mathcal{L}_{\text{UM}}(f, p_0^\theta)$ and the teacher $f^* := \argmin_f \mathcal{L}_{\text{UM}}(f, p_0^*)$:
\begin{equation}
    \max{_f} \mathcal{L}_{\text{UID}}(f, p_0^\theta) = \EE_{t \sim [0,T]} \EE_{x^\theta_t \sim p_t^\theta} [\| f^*_t(x^\theta_t) - f^\theta_t(x^\theta_t) \|^2].
    \label{eq: distance  for UID}\end{equation}
\end{lemma}
\vspace{-5pt} 
 In UID framework, the trained fake model simply learns the current student function $f^\theta$ by minimizing UM loss $\mathcal{L}_{\text{UM}}(f, p_0^\theta)$. Note that for points $x_t^\theta$ out of the generator's domain s.t. \( p_t^\theta(x^\theta_t) \approx 0 \), the distance \eqref{eq: distance for UID} vanishes, and {the generator cannot receive feedback from the uncovered real data}. Moreover, if the teacher function is inaccurate, the generator will learn it with all inaccuracies. 

\subsection{Relation to prior distillation works}\label{sec: relation to prior uid}
FGM and SiD approaches formulate distillation as a constraint minimization of generator loss subject to the optimal fake model. For generator updates, the explicit UID loss \eqref{eq: explicit uid} matches FGM loss \eqref{eq: FGM loss} and SiD loss \eqref{eq: Sid loss} with $\alpha_{\text{SiD}} = 0.5$. For a fake model, it also minimizes the \text{UM} loss on the generated data. The work \citep{gushchin2025inverse} was the first to formulate the distillation of bridge matching models in their IBMD framework as a min-max optimization of the single loss \eqref{eq: min max uid}. 

Although previous works derive the same losses, we give a new, simple explanation using a linearization technique. \textit{This technique is more powerful and general for handling intractable terms than complex proofs for concrete models from FGM, SiD or IBMD.} It allows us to build \underline{other distillations}, e.g., a loss for minimizing the $\ell_2$-distance instead of the squared one (see Appendix~\ref{sec: normalized loss}).

\subsection{Connection with Inverse Optimization} \label{sec: connection with inverse}
We derive UID loss \eqref{eq: Inv loss no data} by minimizing the squared $\ell_2$-distance between teacher and student functions. However, this loss admits another interpretation: its structure is typical for inverse optimization \citep{chan2025inverse}. In this framework, one considers a parametric family of optimization problems $\min_f \mathcal{L}(f,\theta)$ with objective loss $\mathcal{L}(f, \theta)$ depending on argument $f$ and parameters $\theta$. The goal is to find the parameters $\theta^*$ that yield a known, desired solution $f^* = \argmin_f \mathcal{L}(f,\theta^*)$. One standard way to recover the required parameters is to solve the same min-max problem as \eqref{eq: Inv loss no data}:
\begin{equation}
    \min{_\theta} \max{_f}\left\{ \mathcal{L}(f^*, \theta) - \mathcal{L}(f, \theta)  \right\} \sim \min{_\theta} \bigl \{ \mathcal{L}(f^*, \theta) - \min{_f}\{\mathcal{L}(f, \theta) \} \bigr\}. \label{eq: inv general form}
\end{equation}
\underline{The inverse problem \eqref{eq: inv general form} always has minimum $0$ which is attained when $\theta = \theta^*$.} 

Although the inverse optimization can handle arbitrary losses $\mathcal{L}$, it does not describe the properties of the optimized functions or how to find solutions. In our case, we show that all losses are tractable and minimize the distances between teacher and student functions (Lemmas \ref{lem: inv no data l2 dist} and \ref{lem: M-UID distance}). 
\vspace{-1mm}
\subsection{RealUID: natural approach for real data incorporation}\label{sec: unified dist with data}
\vspace{-1mm}

\begin{figure*}[!t]
    \centering
    \vspace{-7mm}
    \includegraphics[width=1.1\linewidth]{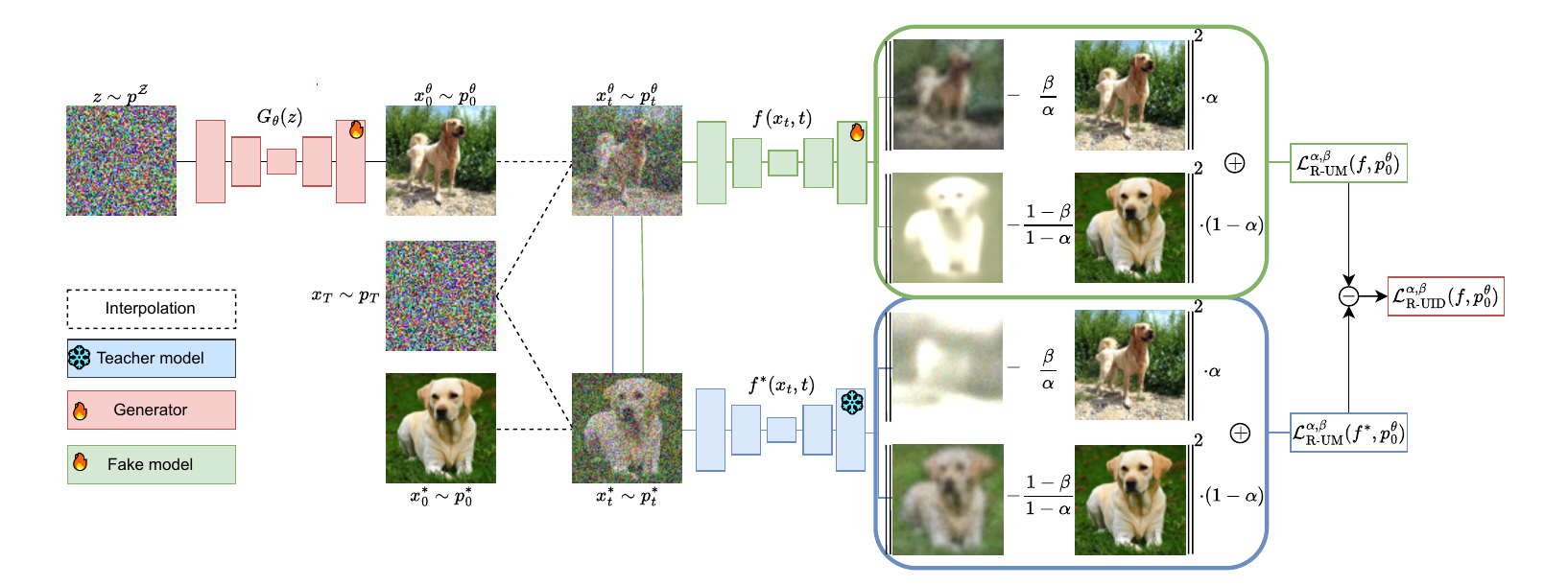}
    \vspace{-4mm}
    \caption{\small Pipeline of \textbf{our RealUID distillation framework} (\S \ref{sec: unified view}) with the direct incorporation of real data $p^*_0$ adjusted by parameters $\alpha, \beta \in (0,1]$. The figure depicts flow matching models predicting denoised samples. It distills a costly frozen teacher model $f^*$ (blue) into a one-step generator $G_\theta$ (red) upon min-max optimization of $\mathcal{L}^{\alpha, \beta}_{\text{R-UID}}(f, p_0^\theta)$ loss over fake model $f$ (green) and generator distribution $p_0^\theta$ with parameters $\theta$. It updates the fake model several times per one generator update for stability. \underline{Algorithm's pseudocode} is located in Appendix~\ref{sec: M-UID for FM}.
    }
    \label{fig:visual-abstract}
    \vspace{-5.5mm}
\end{figure*}

Previous distillation methods add real data during training only via GANs with extra discriminator and adversarial loss. We propose a simpler, more natural way that requires no extra models or losses.

Based on intuition from inverse optimization (\S \ref{sec: connection with inverse}), we see that the min-max inverse problem \eqref{eq: inv general form} is compatible with other losses. It allows us to redesign the \text{UM} loss \eqref{eq: flow loss} to incorporate real data into it. A key constraint is that the loss must still yield the same teacher upon minimization on the real data. Thus, we derive a novel Unified Matching loss with real data - a weighted sum of two \text{UM}-like losses on generated and real data parameterized by $\alpha, \beta \in (0,1]$ which control the weights. 
\begin{definition}\label{def: M-UM loss}
    We define \textbf{Universal Matching loss with real data (RealUM)} $\!\mathcal{L}^{\alpha, \beta}_{\text{R-UM}}(f, p_0^\theta)$  that is parametrized by $\alpha,\!\beta\!\in\!\!(0,\!1]$ and takes trainable function $f$ and generated data $p^\theta_0$ as arguments:
\begin{eqnarray}
\mathcal{L}^{\alpha, \beta}_{\text{R-UM}}(f, p_0^\theta) &=& \underset{\text{generated data } p^\theta_0\text{ term}}{\underbrace{\alpha\cdot \EE_{t \sim [0,T]} \EE_{x^\theta_0 \sim p^\theta_0, x^\theta_t \sim p^\theta_t(\cdot| x^\theta_0)} \left[\| f_t(x^\theta_t)  - \frac{\beta}{\alpha}f^\theta(x^\theta_t|x^\theta_0) \|^2\right]}} \notag \\
     &+&   \underset{\text{real data }p^*_0\text{ term}}{\underbrace{(1-\alpha)\cdot\EE_{t \sim [0,T]} \EE_{x^*_0 \sim p^*_0, x_t^{*} \sim p_t^{*}(\cdot| x^*_0)}\left[\| f_t(x^*_t) - \frac{1 -\beta}{1 - \alpha}  f^*_t(x^*_t|x_0^*) \|^2\right]}}. \label{eq: flow loss data} \vspace{-10pt}
\end{eqnarray}
For $\alpha=1$, we consider only $\beta = 1$, i.e., the pure generated data term.

\end{definition}
RealUM loss \eqref{eq: flow loss data} for all $\alpha, \beta$ and UM loss \eqref{eq: flow loss} yield the same teacher when input is real data $p_0^*$, since if we consider only the $f$-dependent terms in the losses, we have: 
$$\scriptsize \mathcal{L}^{\alpha,\beta}_{\text{R-UM}}(f, p_0^*) \propto \EE_{t, x_0^*, x_t^*}[\underset{=1}{\underbrace{[\alpha + (1-\alpha)]}} \cdot  \la f_t(x^*_t),f_t(x^*_t) \ra + 2 \underset{=1}{\underbrace{[\alpha \cdot \frac{\beta}{\alpha} + (1-\alpha) \cdot \frac{1 - \beta}{1- \alpha}}}] \la f_t(x^*_t),f^*_t(x^*_t|x_0^*) \ra  ] $$
\vspace{-10pt}
$$\scriptsize \propto \EE_{t, x_0^*, x_t^*}[\| f_t(x^*_t)- f^*_t(x^*_t|x_0^*)\|^2 ] \Longrightarrow \argmin{_f} \mathcal{L}^{\alpha,\beta}_{\text{R-UM}}(f, p_0^*) =  \argmin{_f} \mathcal{L}_{\text{UM}}(f, p_0^*) = f^*.\vspace{+5pt}$$
Hence, the min-max inverse scheme \eqref{eq: inv general form} with RealUM loss and the old teacher $f^*$ still has a real data generator as a solution, but now real data is incorporated via the real data terms of $\mathcal{L}^{\alpha,\beta}_{\text{R-UM}}(f, p_0^\theta)$:
$$\min{_{\theta}}\{\underset{\geq 0 }{\underbrace{\mathcal{L}^{\alpha,\beta}_{\text{R-UM}}(f^*, p_0^\theta) - \min{_f}\{\mathcal{L}^{\alpha,\beta}_{\text{R-UM}}(f, p_0^\theta)\}}}\} \overset{p_0^\theta = p_0^*}{=} \mathcal{L}^{\alpha,\beta}_{\text{R-UM}}(f^*, p_0^*) - \underset{=\mathcal{L}^{\alpha,\beta}_{\text{R-UM}}(f^*, p_0^*)}{\underbrace{\min{_f}\{\mathcal{L}^{\alpha,\beta}_{\text{R-UM}}(f, p_0^*)\}}} = 0.$$

\begin{theorem}[\textbf{Real data generator minimizes RealUID loss}]\label{thm: inv loss with data}
Let teacher $f^* := \argmin_f \! \mathcal{L}_{\text{UM}}(f, p_0^*)$ be the minimizer of \text{UM} loss (Def. \ref{def: UM loss}) on real data $p_0^*\!\in\!\mathcal{P}(\R^D)$. Then, real data generator $G_{\theta^*}$ s.t. $p_0^{\theta^*} = p^*_0$ is a solution to the min-max optimization of \textbf{Universal Inverse Distillation  loss with real data (\text{RealUID})} $\mathcal{L}^{\alpha, \beta}_{\text{R-UID}}(f, p_0^\theta)$ over fake function $f$ and generator distribution $p_0^\theta$ (see Def. \ref{def: M-UM loss}):  
\begin{eqnarray} 
    \min{_\theta} \max{_f} \left\{\mathcal{L}^{\alpha, \beta}_{\text{R-UID}}(f, p_0^\theta) :=  \mathcal{L}^{\alpha, \beta}_{\text{R-UM}}(f^*, p_0^\theta) - \mathcal{L}^{\alpha, \beta}_{\text{R-UM}}(f, p_0^\theta) \right\}.   \label{eq: Inv loss with data}
\end{eqnarray}
\end{theorem}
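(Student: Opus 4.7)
The plan is to apply the inverse-optimization template already used for Theorem~1, now with the data-augmented loss $\mathcal{L}^{\alpha,\beta}_{\text{R-UM}}$ in place of $\mathcal{L}_{\text{UM}}$. Two ingredients will suffice: (a) the inner maximum $\max_f \mathcal{L}^{\alpha,\beta}_{\text{R-UID}}(f, p_0^\theta)$ is always nonnegative, and (b) it vanishes when $p_0^\theta = p_0^*$. Together these imply that $\theta^*$ attains the outer minimum with value $0$, which is what the theorem claims.

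For (a) I would simply observe that
$$\max_f \mathcal{L}^{\alpha,\beta}_{\text{R-UID}}(f, p_0^\theta) = \mathcal{L}^{\alpha,\beta}_{\text{R-UM}}(f^*, p_0^\theta) - \min_f \mathcal{L}^{\alpha,\beta}_{\text{R-UM}}(f, p_0^\theta) \;\geq\; 0,$$
because the right-hand side is a particular value minus the corresponding minimum. This is identical in spirit to the bound implicit in Theorem~1.

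The substantive step is (b), which reduces to the key identity $\argmin_f \mathcal{L}^{\alpha,\beta}_{\text{R-UM}}(f, p_0^*) = f^*$ asserted without proof right after Definition~2. My plan is to verify it by a pointwise first-order condition on $f_t(x_t)$. Setting $p_0^\theta = p_0^*$ merges both summands of $\mathcal{L}^{\alpha,\beta}_{\text{R-UM}}$ into expectations over the same marginal $p^*_t$, and in that case $f^\theta_t(\cdot|x_0) = f^*_t(\cdot|x_0)$ since the conditional functions depend only on the shared conditional paths. Expanding the two squared norms, the quadratic part in $f_t(x_t)$ yields $2 f_t(x_t)$, while the cross terms combine with the weights $\beta/\alpha$ and $(1-\beta)/(1-\alpha)$ into
$$2\bigl(\alpha \cdot \tfrac{\beta}{\alpha} + (1-\alpha) \cdot \tfrac{1-\beta}{1-\alpha}\bigr)\,\mathbb{E}_{x_0 \sim p^*_0(\cdot|x_t)} f^*_t(x_t|x_0) = 2(\beta + (1-\beta))\,f^*_t(x_t) = 2 f^*_t(x_t),$$
where the tower identity $\mathbb{E}_{x_0 \sim p^*_0(\cdot|x_t)} f^*_t(x_t|x_0) = f^*_t(x_t)$ from the preamble to Definition~1 is used. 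Setting the derivative to zero gives $f_t(x_t) = f^*_t(x_t)$ pointwise, proving the identity. Substituting $f = f^*$ back shows that the inner maximum in \eqref{eq: Inv loss with data} vanishes at $\theta^*$.

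The main obstacle is the boundary case $\alpha = 1$, where the real-data summand of Definition~2 is degenerate and must be interpreted via the limiting expression $2(1-\beta)\la f_t(x^*_t), f^*_t(x^*_t|x^*_0)\ra$. In that regime the real-data contribution is linear rather than quadratic in $f$, so the first-order condition must be rederived from scratch; however, the cross-term coefficient contributed by the real-data part is still exactly $2(1-\beta)$, which combines with the $2\beta$ coming from the generated-data term to reproduce $2 f^*_t(x_t)$ and hence the same minimizer $f^*$. A minor technicality is the legitimacy of pointwise-in-$x_t$ minimization, which is valid because $f_t(x_t)$ can be chosen independently at each point and because the full-support assumption on $\mathcal{P}(\mathbb{R}^D)$ made in the preliminaries guarantees that $p^*_0(\cdot|x_t)$ is well-defined everywhere needed.
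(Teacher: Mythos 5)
Your proposal is correct and follows essentially the same route as the paper: nonnegativity of the inner maximum (a value minus the corresponding infimum) plus the key identity $\argmin_f \mathcal{L}^{\alpha,\beta}_{\text{R-UM}}(f, p_0^*) = f^*$, which forces the objective to vanish at $p_0^{\theta^*}=p_0^*$. Your pointwise first-order-condition verification of that identity (including the degenerate $\alpha=1$ case) actually fills in a step the paper only asserts in the main text; the paper's appendix instead reaches the same conclusion via Lemma~\ref{lem: M-UID distance}, which computes the exact maximizer over $f$ for general $p_0^\theta$ by a density-ratio reweighting.
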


We provide \underline{analysis of RealUID} in Appendix \ref{sec: theory m-uid}, below we highlight the most important findings.

\color{black}
\paragraph{Role of coefficients $\alpha, \beta$.}
Our RealUID uses real data only to minimize $\mathcal{L}^{\alpha,\beta}_{\text{R-UM}}(f, p_0^\theta)$ loss over fake function $f$. Thus, the trained fake function 
memorizes both the real data and the generator's current state.  In turn, the generator is influenced by the real data indirectly, only via this fake function. As shown in Lemma \ref{lem: M-UID distance}, RealUID implicitly minimizes the rescaled distance \eqref{eq: RealUID dist} between the teacher and generator functions. This distance is still minimal when $p_0^\theta = p_0^*$, alternatively proving Theorem~\ref{thm: inv loss with data}. 

\begin{lemma}[\textbf{Distance minimized by RealUID loss}]\label{lem: M-UID distance}
Maximization of RealUID loss \eqref{eq: flow loss data} over fake function $f$ represents the weighted squared $\ell_2$-distance between the student function $f^\theta := \argmin_f \mathcal{L}_{\text{UM}}(f, p_0^\theta)$ and the teacher $f^* := \argmin_f \mathcal{L}_{\text{UM}}(f, p_0^*)$ :
\begin{equation}
\scriptsize
     \max_f \mathcal{L}^{\alpha, \beta}_{\text{R-UID}}(f, p_0^\theta)  \!=\!\EE_{\substack{t \sim [0,T], \\ x^*_t \sim p_t^*}} \!\left[ \!\!\frac{\|  \frac{\beta}{\alpha}\cdot [  p_t^*(x_t^*)f^*_t(x_t^*) -   p_t^\theta(x_t^*) f^\theta_t(x_t^*)] + (p_t^\theta(x_t^*) - p_t^*(x_t^*)) \cdot f^*_t(x_t^*)\|^2}{p_t^*(x_t^*)((1-\alpha)p_t^*(x_t^*) + \alpha p_t^\theta(x_t^*))/\alpha^2 }\!\!\right]. \label{eq: RealUID dist}
\end{equation} 
\end{lemma}
The \underline{proof of Lemma \ref{lem: M-UID distance}} is located in Appendix \ref{sec: m-uid dist lemma}. With the help of real data, our RealUID loss now 
provides the generator with the \underline{feedback on the real data domain it needs to cover}, i.e., the distance \eqref{eq: RealUID dist} does not vanish for points $x_t$ s.t. $p^\theta(x_t) \approx 0, p^*(x_t) \gg 0$ (see Appendix \ref{sec: coefs explain}). 
{\color{black} Moreover, if teacher function is inaccurate, \underline{RealUID can now provably fix teacher's errors} (see Appendix \ref{sec: error correction}).}


\vspace{-8pt}
\paragraph{Choice of coefficients $\alpha, \beta$.} 
Lemma \ref{lem: M-UID distance} shows that, instead of values $\alpha$ and $\beta$,  actually the values $\alpha$ and $\nicefrac{\beta}{\alpha}$ determine the balance between real and generated data in the minimized distance \eqref{eq: RealUID dist}. Furthermore, coefficient $\alpha$ only sets the general scaling of the distance, while \textit{$\nicefrac{\beta}{\alpha}$ plays the most important role}, as it determines the relation between $f^\theta_t$ and $f^*_t$ inside the distance.

The value $\nicefrac{\beta}{\alpha} = 1$ yields the distance identical to the data-free distance \eqref{eq: distance  for UID} up to scaling. Thus, even when $\alpha = \beta < 1$ and real data is formally added, it may have no effect on the generator. Excessively low $\alpha$ and $\beta$ diminish the effect of the generated data terms in the trained fake function, 
leading to vanishing gradients. The same issue occurs with $ \nicefrac{\beta}{\alpha} \ll 1$ in \eqref{eq: RealUID dist}, while  $\nicefrac{\beta}{\alpha} \gg 1$ diminish the effect of real data in the right term of \eqref{eq: RealUID dist}. 
See complete distance analysis in  Appendix \ref{sec: coefs explain}. Moreover, if teacher function is inaccurate, only the choice $\nicefrac{\beta}{\alpha} \neq 1$ can fix teacher's errors (see Appendix~\ref{sec: error correction}).

\textbf{Hence, good coefficients $\alpha, \beta \in (0,1]$ can be chosen by first finding good $\nicefrac{\beta}{\alpha} \neq 1$, as it has the largest impact, and then adjusting $\alpha < 1$. Both $\nicefrac{\beta}{\alpha}$ and $\alpha$ should be close to $1.$}

\color{black}

\paragraph{Comparison with GAN-based methods.} Unlike SiD and FGM with GANs \eqref{eq: gan loss}, we do not use extra adversarial losses and discriminator to incorporate real data. We only modify UM loss, preserving its core structure and fake model architecture. While general adversarial loss is unrelated to the main distillation loss and has uninterpretable scaling hyperparameters, our RealUID loss and weighting coefficients $\alpha, \beta \in (0,1]$ come naturally from the data-free UID loss. The original UID loss \eqref{eq: Inv loss no data}, equivalent to FGM \eqref{eq: FGM loss} and SiD \eqref{eq: Sid loss} with $\alpha_{\text{SiD}} = 0.5$, is obtained when $\alpha = \beta = 1$. 

 \paragraph{Alternative loss form.} 
Our \underline{RealUID is implicitly related to the linearization scheme} used to obtain data-free UID (\S \ref{sec: inv matching no data}). \textit{The loss \eqref{eq: Inv loss with data} can be derived by splitting each term in the linearized UID loss \eqref{eq: l_2 2 linear delta no data} between real and generated data according to proportions $\alpha$ and $\beta$} (see Appendix \ref{sec: alternative form}). This form helps to prove RealUID's properties and extend it beyond the inversion scheme (\S \ref{sec: discussion}). \color{black} \vspace{-8pt}

\paragraph{Extension for Bridge Matching and Stochastic Interpolants frameworks.} In Appendix~\ref{app: RealUID coupling}, we demonstrate that \underline{our framework can be easily extended to data-to-data matching models} by parameterizing the generated data coupling $\pi^{\theta}(x_0, x_T)$ instead of the data distribution $p^{\theta}_0$.

\section{Experiments}

All our PyTorch implementations and the latest checkpoints are publicly available in 
\begin{center}
    \url{https://github.com/David-cripto/RealUID}.
\end{center}
This section provides an ablation study and evaluation of our RealUID, assessing both its performance and computational efficiency. We begin in (\S\ref{sec: experimental setup}) by detailing the experimental setup based on flow matching models. In (\S\ref{sec: experiments unified benchmarking}), we show that our incorporation of real data via coefficients $\alpha, \beta$ improves performance, speeds up convergence, and enables effective fine-tuning. In (\S\ref{sec: experiments benchmarking}), we assess the benchmark performance and computational demands of RealUID relative to SOTA methods. \underline{Additional experimental details and results} are provided in Appendix~\ref{app: exp details and res}.

\subsection{Experimental setup}
\label{sec: experimental setup}

\paragraph{Datasets and Evaluation Protocol.}
{The experiments were conducted on the CIFAR-10 dataset with $32 \times 32$ resolution \citep{krizhevsky2009learning} and  on the CelebA dataset with $64 \times 64$ resolution \citep{liu2015faceattributes}, see Appendix~\ref{sec: experiments celeba}.} In line with the prior works \citep{karras2019style, karras2022elucidating}, we report test FID scores \citep{heusel2017gans}, computed using 50k generated samples.

\paragraph{Implementation Details.}
We implement our RealUID framework for flow matching models from Appendix~\ref{sec: M-UID for FM}. In contrast to prior studies \citep{zhou2024score, zhou2024adversarial, huang2024flow} which employ the computationally demanding EDM architecture \citep{karras2022elucidating} our work adopts a more lightweight alternative \citep{tong2024improving}. We also train our own flow matching teacher models using CFM loss \eqref{eq: CFM loss}. Further \underline{implementation details and efficiency analysis} are provided in Appendix~\ref{app: experimental details}.

\subsection{Benchmarking methods under a unified experimental configuration} 
\label{sec: experiments unified benchmarking}


We evaluate RealUID under a unified experimental protocol (fixed architecture and implementation). We begin by conducting an ablation over $\alpha,\beta$ to assess the influence of real-data incorporation. We then compare RealUID to a GAN-based alternative, showing that RealUID achieves comparable or superior accuracy. Furthermore, we analyze convergence, indicating that RealUID variants with real data train substantially faster than baselines without real-data. Finally, we explore a fine-tuning stage initialized from strong RealUID checkpoints, showing further performance gains.

\paragraph{Ablation study of coefficients $\alpha, \beta$.} 

We restrict the search for optimal parameters $\alpha$ and $\beta$ to values near $1$, specifically $\alpha, \beta \in [0.85, 1.0]$ with increments of $0.02$. Setting these parameters too low leads to noisy generated samples. Following the analysis in (\S \ref{sec: unified dist with data}), we perform a grid search over the values $\alpha$ and $\nicefrac{\beta}{\alpha}$ instead of the original $\alpha$ and $\beta$. The results are reported in Table~\ref{tab: ablation table}. As a baseline, we highlight the UID model without data incorporation, i.e., our RealUID with $\alpha=1.0, \beta=1.0$.

As shown in the table, the ratio $\nicefrac{\beta}{\alpha}$ has the largest impact on the final metrics, while $\alpha$ only adjusts them. Using real data with $\nicefrac{\beta}{\alpha} = 1$ or with large values outside the range $[0.98, 1.02]$ consistently degrades performance. In contrast, values $\nicefrac{\beta}{\alpha} = 0.98$ or $\nicefrac{\beta}{\alpha} = 1.02$ outperform the baseline for a majority of $\alpha$. \underline{Note that these practical results match the theoretical description in (\S \ref{sec: unified dist with data}).}
\color{black}

\begin{table*}[h!]
\vspace{-1mm}
\begin{minipage}{0.52\textwidth}
\begin{adjustbox}{width=\linewidth, center}

\begin{tabular}{clrrrrr}
\toprule[1.5pt]
 Generation & $\alpha \diagdown \frac{\beta}{\alpha}$& 0.96 & 0.98 & 1.00 & 1.02 & 1.04 \\
\midrule
\multirow{6}{*}{Unconditional} & 0.90 & 2.66 & \lgcl{2.44} & 2.66 & \gcl{2.25} & 2.55 \\
& 0.92 & 2.73 & \gcl{2.36} & 2.65 & \gcl{\textbf{2.23}} & 2.66 \\
& 0.94 & 2.79 & \gcl{2.35} & 2.65 & \gcl{2.28} & 2.58 \\
& 0.96 & 2.85 & \gcl{2.37} & 2.58 & \gcl{2.29} & 2.65 \\
& 0.98 & 2.97 & \gcl{2.33} & 2.62 & \lgcl{2.38} & - \\
& 1.0  & -  & - & \rlb{2.58} & - & - \\
\midrule
\multirow{6}{*}{Conditional} & 0.90 & 2.34 & \lgcl{2.16} & 2.38 & \lgcl{2.19} & 2.26 \\
& 0.92 & 2.28 & \gcl{2.12} & 2.35 & 2.21 & 2.23 \\
& 0.94 & 2.29 & \gcl{2.13} & 2.35 & \lgcl{2.19} & 2.25 \\
& 0.96 & 2.36 & \gcl{2.09} & 2.32 & \gcl{2.13} & 2.27 \\
& 0.98 & 2.34 & \gcl{\textbf{2.02}} & 2.26 & \gcl{2.05} & - \\
& 1.0  & -  & - & \rlb{2.21} & - & - \\
 \bottomrule[1.5pt]
\end{tabular}
\end{adjustbox}
\end{minipage}\hfill
\begin{minipage}{0.45\textwidth}
\begin{adjustbox}{width=0.82\linewidth, center}
\begin{tabular}{clrr}
\toprule[1.5pt]
 Generation & $\lambda_{\text{adv}}^{G_{\theta}}$ & $\lambda_{\text{adv}}^{D}$ & FID~($\downarrow$) \\
\midrule
\multirow{4}{*}{Unconditional} & 0.1 & 0.3 & \lgcl{2.42} \\
& 0.3 & 1 & \gcl{\textbf{2.29}} \\
& 1 & 3 & \lgcl{2.39} \\
&  5 & 15  & \lgcl{2.54} \\
\midrule
\multirow{4}{*}{Conditional} &  0.1 & 0.3 & 2.22 \\
& 0.3 & 1 & \gcl{\textbf{2.12}} \\
& 1 & 3 & \gcl{2.15} \\
& 5 & 15  & 2.40 \\
 \bottomrule[1.5pt]
\end{tabular}
\end{adjustbox}

\end{minipage}
\vspace{-2mm}
\caption{\small Ablation studies of our \((\alpha, \frac{\beta}{\alpha})\) parameters in the left table and adversarial weighting parameters \((\lambda_{\text{adv}}^{G_{\theta}}, \lambda_{\text{adv}}^{D})\) in the right table for CIFAR-10. The baseline \rlb{RealUID (\(\alpha = 1.0, \beta = 1.0\))} does not use real data. Configurations that \lgcl{sligtly} and \gcl{substantially} outperform the baseline are highlighted. All values report FID\,\(\downarrow\), where lower is better. The best configuration in each case is \textbf{bolded}. The mark “–” denotes infeasible parameters.}
\label{tab: ablation table}
\vspace{-1.5mm}
\end{table*}


\paragraph{Comparison with GAN-based method.}

We integrate the GAN-based approach \eqref{eq: gan loss} proposed by \citep{zhou2024adversarial} as an alternative method for incorporating real data, enabling a direct comparison with our RealUID formulation. We combine the GAN loss with the UID baseline. As shown in Table~\ref{tab: ablation table}, the best-performing configurations are achieved with GAN losses ($\lambda_{\text{adv}}^{G_{\theta}} = 0.3$, $\lambda_{\text{adv}}^{D} = 1$). While this setup performs comparably to RealUID ($\alpha = 0.92, \beta = 0.94$) in the unconditional setting, it remains clearly inferior to RealUID ($\alpha = 0.98, \beta = 0.96$) in the conditional case.
\paragraph{Convergence Speed.}
Our RealUID($\alpha, \beta$) with parameters which are \gcl{highlighted} in Table~\ref{tab: ablation table} achieves faster convergence than the UID baseline. For clarity, we present qualitative comparisons in Figure~\ref{fig:fid-side-by-side}. The best RealUID configurations reach the saturated performance level of the baseline after $\sim$100k iterations, whereas the baseline requires $\sim$300k iterations to achieve comparable metrics. 
\begin{table*}[h!]
\caption{\small This table presents the results of ablation study of our RealUID framework, evaluated using the FID metric under both unconditional and conditional generation setups. The Teacher Flow model with 100 NFE is reported as a reference. The performance of the UID (FGM) baseline without real-data incorporation is indicated in \textit{italic}. For emphasis, we \underline{underline} the two counterparts that incorporate real data: the GAN-based and our RealUID methods. The best-performing configurations, obtained via an additional fine-tuning stage, are highlighted in \textbf{bold}. Qualitative results are presented in Appendix~\ref{sec: cifar samples}. \label{tab:cifar10_uncond}}
\begin{minipage}{0.65\textwidth}
\begin{adjustbox}{width=0.7\linewidth}
\begin{tabular}{lc}
 \toprule[1.5pt]
Model & FID~($\downarrow$) \\ %
 \midrule
Teacher Flow (NFE=100) & 3.57 \\
 UID (FGM) & \textit{2.58}  \\
 UID + GAN ($\lambda_{\text{adv}}^{G_{\theta}} = 0.3, \lambda_{\text{adv}}^{D} = 1\mid \lambda_{\text{FT}}^{G_{\theta}} = 25, \lambda_{\text{FT}}^{D} = 75$) & \underline{2.10} \\ 
 RealUID ($\alpha=0.92, \beta=0.94\mid \alpha_{\text{FT}} = 0.92, \beta_{\text{FT}}=0.86$) (\textbf{Ours}) & \textbf{1.98} \\ 
 \bottomrule[1.5pt]
\end{tabular}
\end{adjustbox}
\end{minipage}\hfill\hspace{-1.5cm}
\begin{minipage}{0.65\textwidth}
\begin{adjustbox}{width=0.7\linewidth}
\begin{tabular}{lc}
 \toprule[1.5pt]
Model & FID~($\downarrow$) \\ %
 \midrule
Teacher Flow (NFE=100) & 5.56 \\
  UID (FGM) & \textit{2.21}  \\
  UID  + GAN ($\lambda_{\text{adv}}^{G_{\theta}} = 0.3, \lambda_{\text{adv}}^{D} = 1 \mid \lambda_{\text{FT}}^{G_{\theta}} = 25, \lambda_{\text{FT}}^{D} = 75$) & \underline{1.88} \\ 
  RealUID ($\alpha=0.98, \beta=0.96 \mid \alpha_{\text{FT}} = 0.96, \beta_{\text{FT}}=0.92$) (\textbf{Ours}) & \textbf{1.87} \\ 
 \bottomrule[1.5pt]
\end{tabular}
\end{adjustbox}
\end{minipage}

\vspace{-2mm}
\end{table*}

\begin{figure*}[h!] 
\centering 
\vspace{-1.5mm}
\begin{subfigure}{0.48\textwidth} 
\centering
\includegraphics[width=\linewidth]{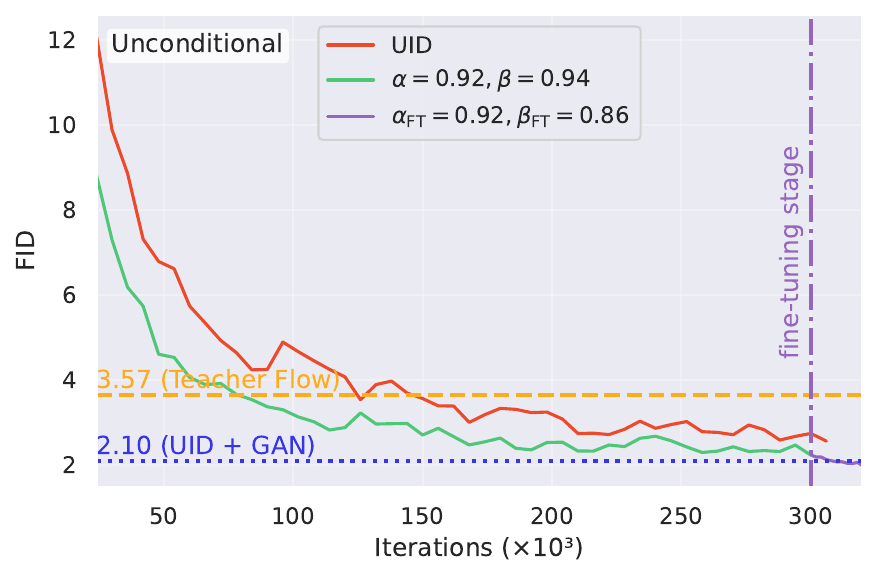} 
\end{subfigure} \hfill
\begin{subfigure}{0.48\textwidth} 
\centering 
\includegraphics[width=\linewidth]{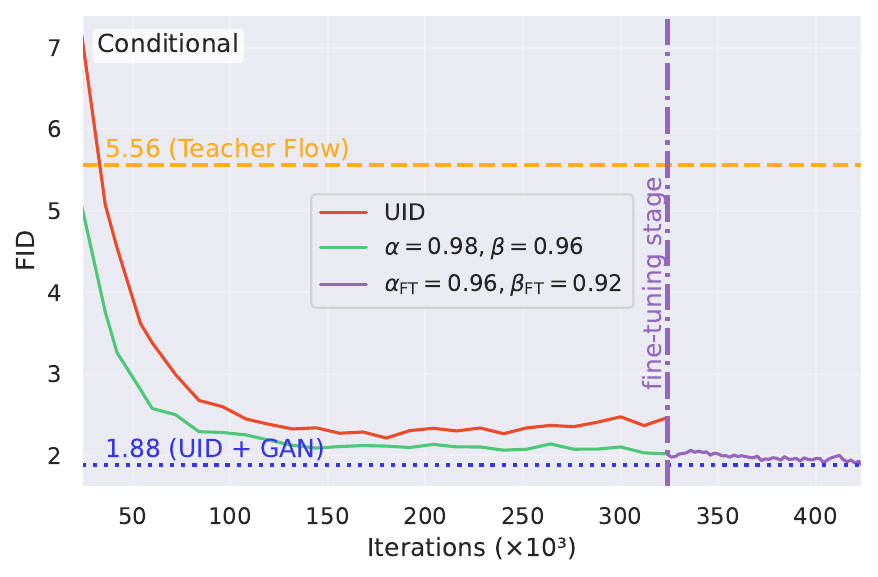} 
\end{subfigure} 
\vspace{-2mm}
\caption{\small
Evolution of FID during CIFAR-10 distillation for (i) the \textcolor{Myred}{UID (FGM)} baseline, (ii) the \textcolor{Mygreen}{best-performing RealUID configurations}, and (iii) subsequent \textcolor{Mypurple}{fine-tuning}, evaluated in both unconditional and conditional settings. The performances of \textcolor{Myorange}{Teacher Flow} and \textcolor{GanBlue}{UID+GAN} are indicated by horizontal lines in their respective colors. } 
\label{fig:fid-side-by-side} 
\vspace{-5.5mm}
\end{figure*}

\paragraph{Fine-tuning stage.}
We observe that RealUID and GAN frameworks offer substantial flexibility for fine-tuning. In this procedure, the generator is initialized from the best-performing checkpoint obtained during training from scratch of the corresponding framework, while the fake model is initialized from the teacher. Fine-tuning then proceeds with new values $\alpha_{\text{FT}}$ and $\beta_{\text{FT}}$ for our RealUID and $\lambda^{G_\theta}_{\text{FT}}$ and $\lambda^{D}_{\text{FT}}$ for GANs. 
We present the best-found fine-tuning configurations for both methods in Table~\ref{tab:cifar10_uncond}. \underline{Ablation study} analyzing the effect of loss coefficients is provided in Appendix~\ref{sec: fine-tuning ablation}.

\paragraph{Scaling to larger datasets.}
{\color{black} In Appendix~\ref{sec: experiments celeba}, we provide the results of the same ablation study on the \underline{CelebA dataset with $64 \times 64$ resolution}. Notably, 
our RealUID performance and the optimal values $\nicefrac{\beta}{\alpha}$ \underline{remain the same} across datasets.}

\subsection{Baseline comparison}
\label{sec: experiments benchmarking}\vspace{-1mm}

As shown in Tables~\ref{tab:all_cifar10_uncond} and~\ref{tab:all_cifar10_cond}, our RealUID after fine-tuning consistently outperforms all prior flow-based models on CIFAR-10, significantly surpassing the strongest flow distillation baseline, FGM. Despite its compact and lightweight architecture (\S\ref{sec: experimental setup}) with nearly 2× faster inference, it achieves performance comparable to leading diffusion distillation methods SiD ($\alpha_{\text{SiD}}{=}1.0 \backslash 1.2$), while falling short of adversarially enhanced models such as $\text{SiD}^2\text{A}$. 
\underline{We hypothesize that this performance gap} \underline{is attributed to architectural and teacher capacity differences rather than the lack of adversarial loss.}
\begin{center}
\textbf{Our latest checkpoints and metrics (Appendix \ref{sec: further improvement}) are available in our repository.}    
\end{center}

\begin{table*}[h!]
\vspace{-3mm}
\begin{minipage}{0.52\textwidth}
\caption{\small Comparison of \emph{unconditional} generation on {CIFAR-10}. The best  method under the FID metric in each section is highlighted with \textbf{bold}.\label{tab:all_cifar10_uncond}}
\vspace{-2.5mm}
\begin{adjustbox}{width=\linewidth,
center}
\begin{tabular}{clcc}
 \toprule[1.5pt]
Family & Model &NFE & FID~($\downarrow$) \\ %
 \midrule
 \multirow{14}{*}{Diffusion \& GAN} & DDPM~\citep{ho2020denoising} & 1000 & 3.17  \\ %
 &VP-EDM~\citep{karras2022elucidating} & 35 &{1.97} \\
&StyleGAN2+ADA+Tune~\citep{karras2020analyzing} &1&2.92\\
&StyleGAN2+ADA+Tune+DI~\citep{luo2023diff} &1&2.71\\
 &Diffusion ProjectedGAN~\citep{wang2022diffusion} & 1 & 2.54  \\
 & iCT-deep~\citep{song2023improved} & 1 & 2.51  \\ 
&Diff-Instruct~\citep{luo2023diff} & 1 & 4.53\\
& DMD~\citep{yin2024one} & 1 & 3.77  \\
&CTM~\citep{kim2023consistency}&1&{1.98}\\
&\color{black} sCD \citep{lu2024simplifying} &1& 3.66 \\
&\color{black} sCT \citep{lu2024simplifying} &1& 2.85 \\
&SiD, $\alpha_{\text{SiD}}=1.0$~\citep{zhou2024score} & 1 & {2.03} \\ 
&SiD, $\alpha_{\text{SiD}}=1.2$~\citep{zhou2024score} & 1 & {1.92} \\ 
&SiDA, $\alpha_{\text{SiD}}=1.0$~\citep{zhou2024adversarial} & 1 & 1.52 \\ 
&SiD$^2$A, $\alpha_{\text{SiD}}=1.2$~\citep{zhou2024adversarial} & 1 & 1.52  \\ 
&SiD$^2$A, $\alpha_{\text{SiD}}=1.0$~\citep{zhou2024adversarial} & 1 & \textbf{1.50}  \\
 \midrule
 \multirow{6}{*}{Flow-based}
 
& CFM ~\citep{yang2024consistency} & 2 & 5.34  \\
&\color{black}IMM \citep{zhou2025inductive} &1& 3.20 \\
&\color{black}MeanFlow \citep{geng2025mean} &1& 2.92\\
&\color{black}FACM \citep{peng2025flow} &1& 2.69\\
&1-ReFlow (+Distill) ~\citep{liu2023flow} & 1 & 6.18  \\ 
&2-ReFlow (+Distill) ~\citep{liu2023flow} & 1 & 4.85  \\ 
&3-ReFlow (+Distill) ~\citep{liu2023flow} & 1 & 5.21 \\ 
&FGM ~\citep{huang2024flow} & 1 & 3.08  \\
& RealUID + FT (\textbf{Ours}) & 1 & \textbf{1.98} \\ 
 \bottomrule[1.5pt]
\end{tabular}
\end{adjustbox}%
\end{minipage}\hfill
\begin{minipage}{0.47\textwidth}
\caption{\small Comparison of \emph{conditional} generation on {CIFAR-10}. The best method under the FID metric in each section is highlighted with \textbf{bold}.
\label{tab:all_cifar10_cond}}
\vspace{-2.5mm}
\begin{adjustbox}{width=\linewidth,
center}
\begin{tabular}{clcc}
 \toprule[1.5pt]
Family & Model & NFE  & FID~($\downarrow$)\\ %
 \midrule
\multirow{20}{*}{Diffusion \& GAN}
& VP-EDM~\citep{karras2022elucidating} & 35 & {1.79}\\ 
 &GET-Base~\citep{geng2023one}  & 1 & 6.25 \\
 & BigGAN~\citep{brock2018large} & 1 & 14.73\\ %
 & BigGAN+Tune~\citep{brock2018large} & 1 & 8.47\\ %
 &StyleGAN2+ADA~\citep{karras2020analyzing} & 1 &3.49\\
 &StyleGAN2+ADA+Tune~\citep{karras2020analyzing} & 1 &2.42\\
&StyleGAN2+ADA+Tune+DI~\citep{luo2023diff} & 1 &2.27\\
& StyleGAN-XL~\citep{sauerscaling} & 1 & 1.85 \\
& StyleSAN-XL~\citep{takida2023san} & 1 & \textbf{1.36} \\
&Diff-Instruct~\citep{luo2023diff} & 1 & 4.19 \\
& DMD~\citep{yin2024one} & 1 & 2.66 \\
& DMD (\textit{w.o.} KL)~\citep{yin2024one} & 1 & 3.82 \\
& DMD (\textit{w.o.} \textit{reg}.)~\citep{yin2024one} & 1 & 5.58 \\
&GDD-I~\citep{zheng2024diffusion} & 1 &1.44\\
&CTM~\citep{kim2023consistency} & 1 &1.73\\
&SiD, $\alpha_{\text{SiD}}=1.0$ ~\citep{zhou2024score} & 1 & 1.93 
\\ 
&SiD , $\alpha_{\text{SiD}}=1.2$ ~\citep{zhou2024score}  & 1 & 1.71 \\ 
&SiDA, $\alpha_{\text{SiD}}=1.0$ ~\citep{zhou2024score} & 1 & 1.44 \\
&SiD$^2$A, $\alpha_{\text{SiD}}=1.0$ ~\citep{zhou2024adversarial} & 1 & 1.40
\\ 
&SiD$^2$A, $\alpha_{\text{SiD}}=1.2$ ~\citep{zhou2024adversarial} & 1 & 1.39
\\ 
 \midrule
 \multirow{2}{*}{Flow-based} &FGM ~\citep{huang2024flow} & 1 & 2.58  \\
 & RealUID + FT (\textbf{Ours}) & 1 & \textbf{1.87} \\
 \bottomrule[1.5pt]
\end{tabular}
\end{adjustbox}%

\end{minipage}
\end{table*}

\vspace{-1.8mm}
\section{Discussion and extensions} \label{sec: discussion}

\paragraph{Extensions.} Our RealUID framework (\S \ref{sec: unified dist with data} and Appendix~\ref{app:UID-bridge-interpolants}) can distill Flow/Bridge Matching, diffusion models, and Stochastic Interpolants enhanced by a novel natural way to incorporate real data. 
In Appendix~\ref{sec: appendix theory}, we provide \underline{three extensions of our RealUID}: more flexible General RealUID (Appendix~\ref{sec: general m-uid}), General SiD framework for all matching models with real data and $\alpha_{\text{SiD}} \neq \nicefrac12$ (Appendix~\ref{sec: sid with real data}) and Normalized RealUID for minimizing non-squared $\ell_2$-distance (Appendix~\ref{sec: normalized loss}).


\paragraph{Relation to DMD.}  Instead of minimizing the squared $\ell_2$-distance between the score functions, \textit{Distribution Matching Distillation} \cite[\textbf{DMD}]{luo2023diff, wang2023prolificdreamer, yin2024one, yin2024improved} approach minimizes the KL divergence between the real and generated data. Its gradients are computed using the generator and teacher score functions, leading to the similar alternating updates. \textit{The DMD is a special case of our UID framework with the special KL loss (different from UM loss) and only one parameter $\alpha = \beta$ (Appendix \ref{sec: DMD with real data}).}

\section{Broader impact}\label{sec:broader_impact}

This paper presents work whose goal is to advance the field of Artificial Intelligence, Machine Learning and Generative Modeling. There are many potential societal consequences of our work, none which we feel must be specifically highlighted here.

\paragraph{Acknowledgements.}The work was supported by the grant for research centers in the field of AI provided by the Ministry of Economic Development of the Russian Federation in accordance with the agreement 000000C313925P4F0002 and the agreement №139-10-2025-033.

\section{LLM Usage} 

Large Language Models (LLMs) were used only to assist with rephrasing sentences and improving the clarity of the text. All scientific content, results, and interpretations in this paper were developed solely by the authors.





\bibliography{refs}

@article{zhou2024adversarial,
  title={Adversarial score identity distillation: Rapidly surpassing the teacher in one step},
  author={Zhou, Mingyuan and Zheng, Huangjie and Gu, Yi and Wang, Zhendong and Huang, Hai},
  journal={arXiv preprint arXiv:2410.14919},
  year={2024}
}

@article{goodfellow2014generative,
  title={Generative adversarial nets},
  author={Goodfellow, Ian J and Pouget-Abadie, Jean and Mirza, Mehdi and Xu, Bing and Warde-Farley, David and Ozair, Sherjil and Courville, Aaron and Bengio, Yoshua},
  journal={Advances in neural information processing systems},
  volume={27},
  year={2014}
}

@inproceedings{yin2024one,
  title={One-step diffusion with distribution matching distillation},
  author={Yin, Tianwei and Gharbi, Micha{\"e}l and Zhang, Richard and Shechtman, Eli and Durand, Fredo and Freeman, William T and Park, Taesung},
  booktitle={Proceedings of the IEEE/CVF conference on computer vision and pattern recognition},
  pages={6613--6623},
  year={2024}
}

@article{gushchin2025inverse,
  title={Inverse Bridge Matching Distillation},
  author={Gushchin, Nikita and Li, David and Selikhanovych, Daniil and Burnaev, Evgeny and Baranchuk, Dmitry and Korotin, Alexander},
year={2025}
}

@article{chan2025inverse,
  title={Inverse optimization: Theory and applications},
  author={Chan, Timothy CY and Mahmood, Rafid and Zhu, Ian Yihang},
  journal={Operations Research},
  volume={73},
  number={2},
  pages={1046--1074},
  year={2025},
  publisher={INFORMS}
}

@article{yin2024improved,
  title={Improved distribution matching distillation for fast image synthesis},
  author={Yin, Tianwei and Gharbi, Micha{\"e}l and Park, Taesung and Zhang, Richard and Shechtman, Eli and Durand, Fredo and Freeman, Bill},
  journal={Advances in neural information processing systems},
  volume={37},
  pages={47455--47487},
  year={2024}
}

@inproceedings{zhou2024score,
  title={Score identity distillation: Exponentially fast distillation of pretrained diffusion models for one-step generation},
  author={Zhou, Mingyuan and Zheng, Huangjie and Wang, Zhendong and Yin, Mingzhang and Huang, Hai},
  booktitle={Forty-first International Conference on Machine Learning},
  year={2024}
}

@article{huang2024flow,
  title={Flow generator matching},
  author={Huang, Zemin and Geng, Zhengyang and Luo, Weijian and Qi, Guo-jun},
  journal={arXiv preprint arXiv:2410.19310},
  year={2024}
}

@article{yang2024consistency,
  title={Consistency flow matching: Defining straight flows with velocity consistency},
  author={Yang, Ling and Zhang, Zixiang and Zhang, Zhilong and Liu, Xingchao and Xu, Minkai and Zhang, Wentao and Meng, Chenlin and Ermon, Stefano and Cui, Bin},
  journal={arXiv preprint arXiv:2407.02398},
  year={2024}
}

@article{
tong2024improving,
title={Improving and generalizing flow-based generative models with minibatch optimal transport},
author={Alexander Tong and Kilian FATRAS and Nikolay Malkin and Guillaume Huguet and Yanlei Zhang and Jarrid Rector-Brooks and Guy Wolf and Yoshua Bengio},
journal={Transactions on Machine Learning Research},
issn={2835-8856},
year={2024},
url={https://openreview.net/forum?id=CD9Snc73AW},
note={Expert Certification}
}

@inproceedings{liu2022let,
  title={Let us Build Bridges: Understanding and Extending Diffusion Generative Models},
  author={Liu, Xingchao and Wu, Lemeng and Ye, Mao and others},
    year={2022},
  booktitle={NeurIPS 2022 Workshop on Score-Based Methods}
}

@article{zheng2024diffusion,
  title={Diffusion bridge implicit models},
  author={Zheng, Kaiwen and He, Guande and Chen, Jianfei and Bao, Fan and Zhu, Jun},
  journal={arXiv preprint arXiv:2405.15885},
  year={2024}
}

@article{de2023augmented,
  title={Augmented bridge matching},
  author={De Bortoli, Valentin and Liu, Guan-Horng and Chen, Tianrong and Theodorou, Evangelos A and Nie, Weilie},
  journal={arXiv preprint arXiv:2311.06978},
  year={2023}
}

@article{wang2023prolificdreamer,
  title={Prolificdreamer: High-fidelity and diverse text-to-3d generation with variational score distillation},
  author={Wang, Zhengyi and Lu, Cheng and Wang, Yikai and Bao, Fan and Li, Chongxuan and Su, Hang and Zhu, Jun},
  journal={Advances in neural information processing systems},
  volume={36},
  pages={8406--8441},
  year={2023}
}

@article{holderrieth2024generator,
  title={Generator matching: Generative modeling with arbitrary markov processes},
  author={Holderrieth, Peter and Havasi, Marton and Yim, Jason and Shaul, Neta and Gat, Itai and Jaakkola, Tommi and Karrer, Brian and Chen, Ricky TQ and Lipman, Yaron},
  journal={arXiv preprint arXiv:2410.20587},
  year={2024}
}

@inproceedings{gao2025diffusion,
  title={Diffusion models and gaussian flow matching: Two sides of the same coin},
  author={Gao, Ruiqi and Hoogeboom, Emiel and Heek, Jonathan and De Bortoli, Valentin and Murphy, Kevin Patrick and Salimans, Tim},
  booktitle={The Fourth Blogpost Track at ICLR 2025},
  year={2025}
}

@article{ho2020denoising,
  title={Denoising diffusion probabilistic models},
  author={Ho, Jonathan and Jain, Ajay and Abbeel, Pieter},
  journal={Advances in neural information processing systems},
  volume={33},
  pages={6840--6851},
  year={2020}
}

@article{luo2023diff,
  title={Diff-instruct: A universal approach for transferring knowledge from pre-trained diffusion models},
  author={Luo, Weijian and Hu, Tianyang and Zhang, Shifeng and Sun, Jiacheng and Li, Zhenguo and Zhang, Zhihua},
  journal={Advances in Neural Information Processing Systems},
  volume={36},
  pages={76525--76546},
  year={2023}
}

@inproceedings{
lipman2023flow,
title={Flow Matching for Generative Modeling},
author={Yaron Lipman and Ricky T. Q. Chen and Heli Ben-Hamu and Maximilian Nickel and Matthew Le},
booktitle={The Eleventh International Conference on Learning Representations },
year={2023},
url={https://openreview.net/forum?id=PqvMRDCJT9t}
}

@inproceedings{karras2019style,
  title={A style-based generator architecture for generative adversarial networks},
  author={Karras, Tero and Laine, Samuli and Aila, Timo},
  booktitle={Proceedings of the IEEE/CVF conference on computer vision and pattern recognition},
  pages={4401--4410},
  year={2019}
}

@inproceedings{
liu2023flow,
title={Flow Straight and Fast: Learning to Generate and Transfer Data with Rectified Flow},
author={Xingchao Liu and Chengyue Gong and qiang liu},
booktitle={The Eleventh International Conference on Learning Representations },
year={2023},
url={https://openreview.net/forum?id=XVjTT1nw5z}
}

@article{liu2022rectified,
  title={Rectified flow: A marginal preserving approach to optimal transport},
  author={Liu, Qiang},
  journal={arXiv preprint arXiv:2209.14577},
  year={2022}
}

@article{kingma2014adam,
  title={Adam: A method for stochastic optimization},
  author={Kingma, Diederik P and Ba, Jimmy},
  journal={arXiv preprint arXiv:1412.6980},
  year={2014}
}

@article{gushchin2024entropic,
  title={Entropic neural optimal transport via diffusion processes},
  author={Gushchin, Nikita and Kolesov, Alexander and Korotin, Alexander and Vetrov, Dmitry P and Burnaev, Evgeny},
  journal={Advances in Neural Information Processing Systems},
  volume={36},
  year={2024}
}

@article{karras2022elucidating,
  title={Elucidating the design space of diffusion-based generative models},
  author={Karras, Tero and Aittala, Miika and Aila, Timo and Laine, Samuli},
  journal={Advances in neural information processing systems},
  volume={35},
  pages={26565--26577},
  year={2022}
}

@article{krizhevsky2009learning,
  title={Learning multiple layers of features from tiny images},
  author={Krizhevsky, Alex and Hinton, Geoffrey and others},
  year={2009},
  publisher={Toronto, ON, Canada}
}

@inproceedings{liu2015faceattributes,
  title = {Deep Learning Face Attributes in the Wild},
  author = {Liu, Ziwei and Luo, Ping and Wang, Xiaogang and Tang, Xiaoou},
  booktitle = {Proceedings of International Conference on Computer Vision (ICCV)},
  month = {December},
  year = {2015} 
}

@article{heusel2017gans,
  title={Gans trained by a two time-scale update rule converge to a local nash equilibrium},
  author={Heusel, Martin and Ramsauer, Hubert and Unterthiner, Thomas and Nessler, Bernhard and Hochreiter, Sepp},
  journal={Advances in neural information processing systems},
  volume={30},
  year={2017}
}

@inproceedings{sohl2015deep,
  title={Deep unsupervised learning using nonequilibrium thermodynamics},
  author={Sohl-Dickstein, Jascha and Weiss, Eric and Maheswaranathan, Niru and Ganguli, Surya},
  booktitle={International conference on machine learning},
  pages={2256--2265},
  year={2015},
  organization={pmlr}
}

@inproceedings{
song2021scorebased,
title={Score-Based Generative Modeling through Stochastic Differential Equations},
author={Yang Song and Jascha Sohl-Dickstein and Diederik P Kingma and Abhishek Kumar and Stefano Ermon and Ben Poole},
booktitle={International Conference on Learning Representations},
year={2021},
url={https://openreview.net/forum?id=PxTIG12RRHS}
}

@article{albergo2023stochastic,
  title={Stochastic interpolants: A unifying framework for flows and diffusions},
  author={Albergo, Michael S and Boffi, Nicholas M and Vanden-Eijnden, Eric},
  journal={arXiv preprint arXiv:2303.08797},
  year={2023}
}

@inproceedings{karras2020analyzing,
  title={Analyzing and improving the image quality of stylegan},
  author={Karras, Tero and Laine, Samuli and Aittala, Miika and Hellsten, Janne and Lehtinen, Jaakko and Aila, Timo},
  booktitle={Proceedings of the IEEE/CVF conference on computer vision and pattern recognition},
  pages={8110--8119},
  year={2020}
}

@article{wang2022diffusion,
  title={Diffusion-gan: Training gans with diffusion},
  author={Wang, Zhendong and Zheng, Huangjie and He, Pengcheng and Chen, Weizhu and Zhou, Mingyuan},
  journal={arXiv preprint arXiv:2206.02262},
  year={2022}
}

@article{song2023improved,
  title={Improved techniques for training consistency models},
  author={Song, Yang and Dhariwal, Prafulla},
  journal={arXiv preprint arXiv:2310.14189},
  year={2023}
}

@article{kim2023consistency,
  title={Consistency trajectory models: Learning probability flow ode trajectory of diffusion},
  author={Kim, Dongjun and Lai, Chieh-Hsin and Liao, Wei-Hsiang and Murata, Naoki and Takida, Yuhta and Uesaka, Toshimitsu and He, Yutong and Mitsufuji, Yuki and Ermon, Stefano},
  journal={arXiv preprint arXiv:2310.02279},
  year={2023}
}

@article{geng2023one,
  title={One-step diffusion distillation via deep equilibrium models},
  author={Geng, Zhengyang and Pokle, Ashwini and Kolter, J Zico},
  journal={Advances in Neural Information Processing Systems},
  volume={36},
  pages={41914--41931},
  year={2023}
}

@article{brock2018large,
  title={Large scale GAN training for high fidelity natural image synthesis},
  author={Brock, Andrew and Donahue, Jeff and Simonyan, Karen},
  journal={arXiv preprint arXiv:1809.11096},
  year={2018}
}

@inproceedings{sauerscaling,
  title={scaling StyleGAN to large diverse datasets},
  author={Sauer, A and Schwarz, K and Geiger, A StyleGAN-XL},
  booktitle={Proceedings of the SIGGRAPH Conference. ACM},
  pages={1--10},
  year={2022}
}

@article{takida2023san,
  title={SAN: Inducing metrizability of GAN with discriminative normalized linear layer},
  author={Takida, Yuhta and Imaizumi, Masaaki and Shibuya, Takashi and Lai, Chieh-Hsin and Uesaka, Toshimitsu and Murata, Naoki and Mitsufuji, Yuki},
  journal={arXiv preprint arXiv:2301.12811},
  year={2023}
}

@article{peluchetti2023non,
  title={Non-denoising forward-time diffusions},
  author={Peluchetti, Stefano},
  journal={arXiv preprint arXiv:2312.14589},
  year={2023}
}

@article{geng2025mean,
  title={Mean flows for one-step generative modeling},
  author={Geng, Zhengyang and Deng, Mingyang and Bai, Xingjian and Kolter, J Zico and He, Kaiming},
  journal={arXiv preprint arXiv:2505.13447},
  year={2025}
}

@article{lu2024simplifying,
  title={Simplifying, stabilizing and scaling continuous-time consistency models},
  author={Lu, Cheng and Song, Yang},
  journal={arXiv preprint arXiv:2410.11081},
  year={2024}
}

@article{peng2025flow,
  title={Flow-anchored consistency models},
  author={Peng, Yansong and Zhu, Kai and Liu, Yu and Wu, Pingyu and Li, Hebei and Sun, Xiaoyan and Wu, Feng},
  journal={arXiv preprint arXiv:2507.03738},
  year={2025}
}

@article{zhou2025inductive,
  title={Inductive moment matching},
  author={Zhou, Linqi and Ermon, Stefano and Song, Jiaming},
  journal={arXiv preprint arXiv:2503.07565},
  year={2025}
}

@article{hunter1986exponentially,
  title={The exponentially weighted moving average},
  author={Hunter, J Stuart},
  journal={Journal of quality technology},
  volume={18},
  number={4},
  pages={203--210},
  year={1986},
  publisher={Taylor \& Francis}
}

@article{selikhanovych2025one,
  title={One-Step Residual Shifting Diffusion for Image Super-Resolution via Distillation},
  author={Selikhanovych, Daniil and Li, David and Leonov, Aleksei and Gushchin, Nikita and Kushneriuk, Sergei and Filippov, Alexander and Burnaev, Evgeny and Koshelev, Iaroslav and Korotin, Alexander},
  journal={arXiv preprint arXiv:2503.13358},
  year={2025}
}
\bibliographystyle{iclr2026_conference}

\newpage
\tableofcontents
\appendix
\section{Theoretical proofs and extensions}\label{sec: appendix theory}
In this appendix, we discuss our RealUID framework (Appendix~\ref{sec: theory m-uid}) in theoretical details and provide three extensions of it: \textit{General RealUID} framework with 3 degrees of freedom (Appendix~\ref{sec: general m-uid}), \textit{ General SiD framework with real data} (Appendix~\ref{sec: sid with real data}) and \textit{Normalized RealUID} framework for minimizing $\ell_2$-distance between teacher and student functions instead of the squared one (Appendix~\ref{sec: normalized loss}). All proofs are based on the linearization technique and splitting terms in linearized decomposition between real and generated data. We also show that DMD approach is a special case of our UID framework and we can similarly incorporate real data into it (Appendix~\ref{sec: DMD with real data}).

\subsection{RealUID  theoretical properties} \label{sec: theory m-uid}
In this section, we discuss our RealUID loss in detail. We begin by presenting its alternative form and how it connects linearization technique and real data incorporation  (Appendix \ref{sec: alternative form}). We then demonstrate that the loss minimizes a squared $\ell_2$-distance between the rescaled teacher and student functions (Appendix \ref{sec: m-uid dist lemma}). Finally, we provide the motivation of the best choice of coefficients $\alpha \neq \beta$ from the perspectives of the better distance (Appendix \ref{sec: coefs explain}) and the correction of the teacher's errors (Appendix \ref{sec: error correction}). 

\subsubsection{Alternative RealUID split form} \label{sec: alternative form}

Let us recall the linearization trick that we apply to make the minimized squared norm between the student function $f^\theta$ and the teacher $f^*$ tractable. For each time $t$ and generated point $x_t^\theta$, we restate this squared norm as the identity $\|a\|^2 = \max_{b\in \R^D} \{-\|b\|^2 + 2\la b, a\ra \}, \forall a \in \R$ and use an auxiliary function $\delta_t(x_t)$ to parametrize a vector $b$. In the end, we substitute the student function $f^\theta_t(x^\theta_t)$ with its conditional and differentiable estimate $f^\theta_t(x^\theta_t|x_0^\theta)$:
\begin{align}
&\EE_{\substack{t \sim [0,T], \\ x^\theta_t \sim p_t^\theta}} [\| f^*_t(x^\theta_t) - f^\theta_t(x^\theta_t) \|^2] = \max_{\delta_t(x_t^\theta)} \EE_{\substack{t \sim [0,T], \\ x^\theta_t \sim p_t^\theta}}  \left[ - \|\delta_t(x_t^\theta)\|^2 +  2\la \delta_t(x_t^\theta), f^*_t(x^\theta_t) - f^\theta_t(x^\theta_t)\ra \right] \notag \\
    &=\EE_{\substack{t \sim [0,T], x^\theta_0 \sim p^\theta_0, \\ x_t^{\theta} \sim p_t^{\theta}(\cdot| x^\theta_0)}}[ -\|\delta_t(x_t^\theta)\|^2 +   2\la \delta_t(x_t^\theta), f^*_t(x^\theta_t) \ra - 2 \la \delta_t(x_t^\theta),f^\theta_t(x^\theta_t|x_0^\theta) \ra]. \label{eq: linearized uid}
\end{align}
In addition, we use parameterization $\delta = f^* - f$ with a fake model $f$ to obtain our UID loss which matches the previous distillation losses. 

Originally, we derived our RealUID loss \eqref{eq: Inv loss with data} from the idea of \textit{splitting each term in the linearized form of data-free UID \eqref{eq: linearized uid} between the generated and real data in proportions defined by coefficients $\alpha $ and $ (1-\alpha)$,  $\alpha$ and $ (1-\alpha)$  and $\beta$ and $(1-\beta)$}. We present the split form of RealUID loss in Lemma \ref{lem: realuid split form}, and this form completely matches the inverse optimization form defined in Theorem \ref{thm: inv loss with data}. 
\newpage
\begin{lemma}[\textbf{RealUID split form}] \label{lem: realuid split form} The RealUID loss \eqref{eq: Inv loss with data} can be restated as
    \begin{align}
    \scriptsize
        &\mathcal{L}^{\alpha, \beta}_{\text{R-UID}}(f, p_0^\theta) = \EE_{\substack{t \sim [0,T], x^\theta_0 \sim p^\theta_0, \\ x_t^{\theta} \sim p_t^{\theta}(\cdot| x^\theta_0)}}[ - \alpha \|\delta_t(x_t^\theta)\|^2 +   2\alpha\la \delta_t(x_t^\theta), f^*_t(x^\theta_t) \ra - 2\beta \la \delta_t(x_t^\theta),f^\theta_t(x^\theta_t|x_0^\theta) \ra] \notag \\
        & +\EE_{\substack{t \sim [0,T], x^*_0 \sim p^*_0, \\x_t^{*} \sim p_t^{*}(\cdot| x^*_t)}} [-(1-\alpha)\|\delta_t(x_t^*)\|^2 +   2(1- \alpha)\la \delta_t(x_t^*), f^*_t(x^*_t) \ra - 2(1-\beta)\la \delta_t(x_t^*),f^*_t(x^*_t|x_0^*)\ra], \notag
    \end{align} 
with the parameterization $\delta = f^* - f$.
\end{lemma}
The idea of splitting coefficients between two data types helps to prove properties of RealUID,  and extend our real data incorporation technique to general form (Appendix~\ref{sec: general m-uid}),  SiD framework with $\alpha_{\text{SiD}} \neq \frac12$ (Appendix~\ref{sec: sid with real data}) and new distances (Appendix~\ref{sec: normalized loss}).



\begin{proof}
Putting explicit values for RealUM loss \eqref{eq: flow loss data} in RealUID loss \eqref{eq: Inv loss with data}, we get:
\begin{align}
    &\mathcal{L}^{\alpha, \beta}_{\text{R-UID}}(f, p_0^\theta) = \mathcal{L}^{\alpha, \beta}_{\text{R-UM}}(f^*, p_0^\theta) - \mathcal{L}^{\alpha, \beta}_{\text{R-UM}}(f, p_0^\theta) \notag \\
    &= \alpha\cdot \EE_{t, x^\theta_0, x^\theta_t} \left[\| f^*_t(x^\theta_t)  - \frac{\beta}{\alpha}f^\theta(x^\theta_t|x^\theta_0) \|^2\right] +  (1-\alpha)\cdot\EE_{t, x^*_0, x_t^{*}}\left[\| f^*_t(x^*_t) - \frac{1 -\beta}{1 - \alpha}  f^*_t(x^*_t|x_0^*) \|^2\right] \notag \\
    &- \alpha\cdot \EE_{t, x^\theta_0, x^\theta_t} \left[\| f_t(x^\theta_t)  - \frac{\beta}{\alpha}f^\theta(x^\theta_t|x^\theta_0) \|^2\right] -  (1-\alpha)\cdot\EE_{t, x^*_0, x_t^{*}}\left[\| f_t(x^*_t) - \frac{1 -\beta}{1 - \alpha}  f^*_t(x^*_t|x_0^*) \|^2\right]. \notag 
\end{align}
Then, we group the factors with the same data type and multipliers:
\begin{align}
    &\mathcal{L}^{\alpha, \beta}_{\text{R-UID}}(f, p_0^\theta) = \mathcal{L}^{\alpha, \beta}_{\text{R-UM}}(f^*, p_0^\theta) - \mathcal{L}^{\alpha, \beta}_{\text{R-UM}}(f, p_0^\theta) \notag \\
    &=  \EE_{t, x^\theta_0, x^\theta_t} \left[\alpha \cdot \| f^*_t(x^\theta_t)  - \frac{\beta}{\alpha}f^\theta(x^\theta_t|x^\theta_0) \|^2 - \alpha\cdot \| f_t(x^\theta_t)  - \frac{\beta}{\alpha}f^\theta(x^\theta_t|x^\theta_0) \|^2 \right] \notag \\
    &+ \EE_{t, x^*_0, x_t^{*}}\left[(1-\alpha)\cdot \| f^*_t(x^*_t) - \frac{1 -\beta}{1 - \alpha}  f^*_t(x^*_t|x_0^*) \|^2  -  (1-\alpha)\cdot \| f_t(x^*_t) - \frac{1 -\beta}{1 - \alpha}  f^*_t(x^*_t|x_0^*) \|^2\right] \notag \\
    &= \EE_{t, x^\theta_0, x^\theta_t} \left[\alpha \cdot \| f^*_t(x^\theta_t)\|^2  - 2 \beta \cdot \la f^*_t(x^\theta_t), f^\theta(x^\theta_t|x^\theta_0)\ra   - \alpha \cdot \| f_t(x^\theta_t)\|^2  + 2 \beta \cdot \la f_t(x^\theta_t), f^\theta(x^\theta_t|x^\theta_0)\ra \right] \notag\\
    &+ \EE_{t, x^*_0, x^*_t} \left[(1-\alpha) \cdot \| f^*_t(x^*_t)\|^2  - 2 (1-\beta) \cdot \la f^*_t(x^*_t) - f_t(x^*_t), f^*(x^*_t|x^*_0)\ra   - (1-\alpha) \cdot \| f_t(x^*_t)\|^2 \ra \right]\notag \\
    &= \EE_{t, x^\theta_0, x^\theta_t} \left[\alpha \cdot (\| f^*_t(x^\theta_t)\|^2 - \| f_t(x^\theta_t)\|^2) - 2 \beta \cdot \la f^*_t(x^\theta_t) - f_t(x^\theta_t), f^\theta(x^\theta_t|x^\theta_0)\ra \right] \notag \\
    &+ \EE_{t, x^*_0, x^*_t} \left[(1-\alpha) \cdot (\| f^*_t(x^*_t)\|^2 - \| f_t(x^*_t)\|^2 ) - 2 (1-\beta) \cdot \la f^*_t(x^*_t) - f_t(x^*_t), f^*(x^*_t|x^*_0)\ra  \right]\notag \\
    &= \EE_{t, x^\theta_0, x^\theta_t} \left[\alpha \cdot (-\| f^*_t(x^\theta_t) - f_t(x^\theta_t)\|^2  + 2  \la f^*_t(x^\theta_t) - f_t(x^\theta_t), f^*_t(x^\theta_t) \ra)\right] \notag \\
    &-\EE_{t, x^\theta_0, x^\theta_t} \left[2 \beta \cdot \la f^*_t(x^\theta_t) - f_t(x^\theta_t), f^\theta(x^\theta_t|x^\theta_0)\ra \right] \notag \\
    &+ \EE_{t, x^*_0, x^*_t} \left[(1-\alpha) \cdot (-\| f^*_t(x^*_t) - f_t(x^*_t)\|^2  + 2  \la f^*_t(x^*_t) - f_t(x^*_t), f^*_t(x^*_t) \ra )  \right] \notag \\
    &- \EE_{t, x^*_0, x^*_t} \left[ 2 (1-\beta) \cdot \la f^*_t(x^*_t) - f_t(x^*_t), f^*(x^*_t|x^*_0)\ra  \right]. \notag 
\end{align}
Finally, denoting parameterization $\delta = f^* - f$, we obtain the required form:
\begin{align*}
    &\mathcal{L}^{\alpha, \beta}_{\text{R-UID}}(f, p_0^\theta) = \EE_{t, x^\theta_0, x^\theta_t}[ - \alpha \|\delta_t(x_t^\theta)\|^2 +   2\alpha\la \delta_t(x_t^\theta), f^*_t(x^\theta_t) \ra - 2\beta \la \delta_t(x_t^\theta),f^\theta_t(x^\theta_t|x_0^\theta) \ra] \notag \\
        & +\EE_{t, x^*_0, x^*_t} [-(1-\alpha)\|\delta_t(x_t^*)\|^2 +   2(1- \alpha)\la \delta_t(x_t^*), f^*_t(x^*_t) \ra - 2(1-\beta)\la \delta_t(x_t^*),f^*_t(x^*_t|x_0^*)\ra]. \notag
\end{align*}
\end{proof}

\subsubsection{Proof of RealUID Distance Lemma \ref{lem: M-UID distance}} \label{sec: m-uid dist lemma}

\begin{proof}[Proof of Lemma \ref{lem: M-UID distance}.]
In this proof, we use the split form of our ReaLUID loss from Lemma \ref{lem: realuid split form}. First, we take math expectation over data points $x^*_0$. Since the expectation can be taken in a reverse order, i.e., $\EE_{x^*_0 \sim p^*_0, x_t^{*} \sim p_t^{*}(\cdot| x^*_0)} = \EE_{x^*_t \sim p^*_t, x_0^{*} \sim p_0^{*}(\cdot| x^*_t)}$, we see that 
\begin{eqnarray}
    \EE_{x^*_0 \sim p^*_0, x_t^{*} \sim p_t^{*}(\cdot| x^*_0)} [\la \delta_t(x_t^*),f^*_t(x^*_t|x_0^*)\ra] &=&  \EE_{x^*_t \sim p^*_t} [\la \delta_t(x_t^*), \EE_{x_0^{*} \sim p_0^{*}(\cdot| x^*_t)} [f^*_t(x^*_t|x_0^*)]\ra] \notag \\
    &=&  \EE_{x^*_t \sim p^*_t} [\la \delta_t(x_t^*),f^*_t(x^*_t)\ra].
\end{eqnarray} 
For the generated data term $\EE_{x^\theta_0 \sim p^\theta_0, x_t^{\theta} \sim p_t^{\theta}(\cdot| x^\theta_0)} [\la \delta_t(x_t^\theta),f^\theta_t(x^\theta_t|x_0^\theta)\ra] = \EE_{x^\theta_t \sim p^\theta_t} [\la \delta_t(x_t^\theta),f^\theta_t(x^\theta_t)\ra]$, the reasoning is similar. Thus, we can write down RealUID loss in an explicit form with $\delta_t = f^*_t - f_t$:
\begin{eqnarray}
    \mathcal{L}^{\alpha, \beta}_{\text{R-UID}}(\delta, p_0^\theta)   = \EE_{t \sim [0,T]} \EE_{x^\theta_t \sim p^\theta_t}[ - \alpha \|\delta_t(x_t^\theta)\|^2 +   2\alpha\la \delta_t(x_t^\theta), f^*_t(x^\theta_t) \ra - 2\beta \la \delta_t(x_t^\theta),f^\theta_t(x^\theta_t) \ra] \notag \\
    +\EE_{t \sim [0,T]} \EE_{x^*_t \sim p^*_t} [-(1-\alpha)\|\delta_t(x_t^*)\|^2 +   2(1- \alpha)\la \delta_t(x_t^*), f^*_t(x^*_t) \ra - 2(1-\beta)\la \delta_t(x_t^*),f^*_t(x^*_t)\ra]. \label{eq: 2 of 3 coefs}
\end{eqnarray}
Then, we rescale the generated data terms in RealUID loss \eqref{eq: 2 of 3 coefs} using the equality $p^\theta_t(x_t) = \frac{p^\theta_t(x_t)}{p^*_t(x_t)}p^*_t(x_t) $ for  $x_t \in \R^D$ (we assume $p_t^*(x_t) > 0, \forall x_t, t$) leaving only math expectation w.r.t. the real data, i.e,
\begin{align}
    \mathcal{L}^{\alpha, \beta}_{\text{R-UID}}(\delta, p_0^\theta)  &=  \EE_{\substack{t \sim [0,T] \\ x^*_t \sim p^*_t}}\left[-[(1-\alpha) + \alpha \frac{p_t^\theta(x_t^*)}{p_t^*(x_t^*)}]\|\delta_t(x_t^*)\|^2\right]\notag\\
    &- \EE_{\substack{t \sim [0,T] \\ x^*_t \sim p^*_t}}\left[ 2\beta \frac{p_t^\theta(x_t^*)}{p_t^*(x_t^*)}\la \delta_t(x_t^*),f^\theta_t(x^*_t)\ra   +  2[(\beta- \alpha) + \alpha \frac{p_t^\theta(x_t^*)}{p_t^*(x_t^*)}] \la \delta_t(x_t^*), f^*_t(x^*_t) \ra \right]. \notag
\end{align}
Finally, we maximize the loss w.r.t. $\delta_t(x_t^*)$ for each $x_t^*$ and $t$ as a quadratic function. The maximum is achieved when
                                                                                                                                                                                                                                                                                                                                                                                                                                                                                                                                                                                                                                                                                                                                                                                                                                                                                                                                                                                                                                                                                                                                                                                                                                                                                                                                                                                                                                                                                                                                                                                                                                                                                                                                                                                                                                                                                                                                                                                                                                                                                                                                                                                                                                                                                                                                                                                                          $$ \delta_t(x_t^*) = \frac{ [(\beta- \alpha) + \alpha \frac{p_t^\theta(x_t^*)}{p_t^*(x_t^*)}]  f^*_t(x^*_t)  - \beta \frac{p_t^\theta(x_t^*)}{p_t^*(x_t^*)}f^\theta_t(x^*_t) }{[(1-\alpha) + \alpha \frac{p_t^\theta(x_t^*)}{p_t^*(x_t^*)}]}$$
or in terms of the fake model $f = f^* - \delta$
\begin{equation}
\left(\argmax_f \mathcal{L}^{\alpha, \beta}_{\text{R-UID}}(f, p_0^\theta)\right)(t,x_t) = \frac{ f^*_t(x_t) \cdot (1 - \beta) + f^\theta_t(x_t) \cdot \beta \frac{p_t^\theta(x_t)}{p_t^*(x_t)} }{(1-\alpha) + \alpha \frac{p_t^\theta(x_t)}{p_t^*(x_t)}}.\label{eq: opt fake model}\end{equation}
The maximum itself equals to
$$\max_f \mathcal{L}^{\alpha, \beta}_{\text{R-UID}}(f, p_0^\theta)  = \EE_{t \sim [0,T]} \EE_{x^*_t \sim p_t^*} \left[ \frac{\| f^*_t(x_t^*) \cdot ((\beta-\alpha) + \alpha \frac{p_t^\theta(x_t^*)}{p_t^*(x_t^*)}) - f^\theta_t(x_t^*) \cdot \beta \frac{p_t^\theta(x_t^*)}{p_t^*(x_t^*)} \|^2}{(1-\alpha) + \alpha \frac{p_t^\theta(x_t^*)}{p_t^*(x_t^*)}} \right].$$
It is easy to see that when $p_0^\theta = p^*_0$ and $f^\theta = f^*$ this distance achieves its minimal value $0$. Moreover, optimal fake model in this case matches the teacher $f^*$, i.e., 
$$\left(\argmax_f \mathcal{L}^{\alpha, \beta}_{\text{R-UID}}(f, p_0^*)\right)(t,x_t) = \frac{ f^*_t(x_t) \cdot (1 - \beta) + f^*_t(x_t) \cdot \beta \frac{p_t^*(x_t)}{p_t^*(x_t)} }{(1-\alpha) + \alpha \frac{p_t^*(x_t)}{p_t^*(x_t)}} = f^*_t(x_t).$$
\end{proof} 
\subsubsection{Explanation of the choice of coefficients $\alpha$ and $\beta$}\label{sec: coefs explain}
Here we show that the best way to incorporate real data during generator training is to set $\nicefrac{\beta}{\alpha} \neq 1$.

Following Lemma \ref{lem: M-UID distance}, we know exactly what distance our RealUID loss implicitly minimizes. Below we examine it for various $\alpha, \beta \in (0,1]$:
\begin{eqnarray}
    \max_f \mathcal{L}^{\alpha, \beta}_{\text{R-UID}}(f, p_0^\theta)  &=& \int_{x_t}l_t(x_t, \beta, \alpha)dx_t, \notag \\
l_t(x_t, \beta, \alpha) &:=&  \frac{\alpha^2\|  (p_t^*(x_t)(\frac{\beta}{\alpha}-1) +  p_t^\theta(x_t)) \cdot f^*_t(x_t) -   \frac{\beta}{\alpha} \cdot p_t^\theta(x_t) \cdot f^\theta_t(x_t)\|^2}{(1-\alpha)p_t^*(x_t) + \alpha p_t^\theta(x_t)}, \notag
\end{eqnarray}
where $l_t(x_t, \beta, \alpha)$ denotes the distance for the particular point $x_t$.

The total distance mostly sums up from the two groups of points: incorrectly generated points from the generator's main domain, i.e., $p_t^\theta(x_t) \gg 0, p^*(x_t) \approx 0,$ and real data points which are not covered by the generator, i.e.,  $p_t^\theta(x_t) \approx 0, p^*(x_t) \gg 0.$ For the points out of both domains $p_t^\theta(x_t) \approx 0, p_t^*(x_t) \approx 0$, the distance tends to $0$, as well as for matching points $p_t^\theta(x_t) \approx p_t^*(x_t).$

\paragraph{Choice of coefficients $\alpha, \beta.$} Next, we consider various coefficients $\alpha, \beta \in (0,1]$ and how they affect two main groups of points.
\begin{itemize}[leftmargin=*]
\item All configurations affect the incorrectly generated points $x_t: p_t^*(x_t) \approx 0, p^\theta(x_t) \gg 0$:
 \begin{eqnarray}
        l_t(x_t, \beta, \alpha) \approx \frac{\|  \alpha p_t^\theta(x_t)  \cdot f^*_t(x_t) -   \beta p_t^\theta(x_t) \cdot f^\theta_t(x_t)\|^2}{\alpha p_t^\theta(x_t)} \approx \frac{\beta^2\|  f^\theta_t(x_t)\|^2}{\alpha} p_t^\theta(x_t) \gg 0. \label{eq: distance for bad gen} 
    \end{eqnarray}
    Note that increasing $\nicefrac{\beta}{\alpha}  >1$ will diminish the weight of the distance in comparison with $\alpha = \beta = 1$, while decreasing otherwise will lift the weight up.  
    \item Configuration $\beta < \alpha = 1$ is unstable for uncovered real data points $x_t: p_t^\theta(x_t) \approx 0, p^*(x_t) \gg 0$:
    \begin{eqnarray}
        l_t(x_t, \beta, \alpha) \approx \frac{\|  p_t^*(x_t)(\beta-1)  \cdot f^*_t(x_t) -   \beta p_t^\theta(x_t) \cdot f^\theta_t(x_t)\|^2}{ p_t^\theta(x_t)} \approx \infty. \notag 
    \end{eqnarray}
\item Configuration $\beta = \alpha = 1$ (UID loss) does not affect uncovered real data points $x_t: p_t^\theta(x_t) \approx 0, p^*(x_t) \gg 0$:
\begin{eqnarray}
        l_t(x_t, \beta, \alpha) \approx  \frac{\|     p_t^\theta(x_t) \cdot f^*_t(x_t) -    p_t^\theta(x_t) \cdot f^\theta_t(x_t)\|^2}{p_t^\theta(x_t)} = \|     f^*_t(x_t) -     f^\theta_t(x_t)\|^2 p_t^\theta(x_t) \approx 0.\notag 
    \end{eqnarray}
    
\item Configuration $\beta = \alpha < 1$ does not affect uncovered real data points $x_t: p_t^\theta(x_t) \approx 0, p^*(x_t) \gg 0$:
\begin{eqnarray}
        l_t(x_t, \beta, \alpha) \approx  \frac{\|    \alpha p_t^\theta(x_t) f^*_t(x_t) -   \beta p_t^\theta(x_t) f^\theta_t(x_t)\|^2}{(1-\alpha)p_t^*(x_t)} = \frac{\|    \alpha f^*_t(x_t) -   \beta  f^\theta_t(x_t)\|^2}{(1-\alpha)} \frac{(p_t^\theta(x_t))^2 }{p_t^*(x_t)} \approx 0.\notag 
    \end{eqnarray}
Notably, in this configuration, the distance drops even faster than when $\alpha = \beta = 1$, what makes it even less preferable.

\item \textit{Only configuration $\nicefrac{\beta}{\alpha} \neq 1$ affects the uncovered real data points $x_t: p_t^\theta(x_t) \approx 0, p^*(x_t) \gg 0$}:
    \begin{eqnarray}
        l_t(x_t, \beta, \alpha) \approx \frac{\|  p_t^*(x_t)(\beta-\alpha)  \cdot f^*_t(x_t) -   \beta p_t^\theta(x_t) \cdot f^\theta_t(x_t)\|^2}{(1-\alpha) p_t^*(x_t)} \gg 0. \notag 
    \end{eqnarray}
\end{itemize}
\paragraph{Visual illustration.} We analytically calculate the loss surface $l_t(x_t, \alpha, \beta)$ between the FM models transforming one-dimensional real data Gaussian $\mathcal{N}(\mu^*,1)$ and generated Gaussian $\mathcal{N}(\mu^\theta,1)$ to noise $\mathcal{N}(0,1)$ on the time interval $[0,1]$. In this case, the generated and real data interpolations are $p_t^\theta(x_t) = \mathcal{N}(x_t|\mu^\theta(1-t),t^2 + (1-t)^2)$ and $p_t^*(x_t) = \mathcal{N}(x_t|\mu^*(1-t),t^2 + (1-t)^2).$ The unconditional vector field $u =  f$ between $\mathcal{N}(\mu,1)$ and $\mathcal{N}(0,1)$  can be calculated as
\begin{align}
    u_t(x_t) &= \EE_{ x_0 \sim p_0(\cdot| x_t)}\left[\frac{x_t - x_0}{t} \right]
= \int_{x_0}\left( \frac{x_t - x_0}{t}\right) \cdot \mathcal{N}\left(\frac{x_t - x_0(1-t)}{t}|0,1\right) \cdot \mathcal{N}(x_0|\mu,1) dx_0 \notag \\
&= \frac{a(2t^2 -2t) - bt^2}{\sqrt{2\pi}(1-2t+2t^2)^\frac32} \exp\left(-\frac{(x_t - \mu(1-t))^2}{2(1-2t+2t^2)^2}\right).
\end{align}
In Figure \ref{fig: ruid loss surface}, we depict the loss surfaces for the fixed time $t = 1/3$, real data $\mu^* = 2$, generated data $\mu^\theta = -2$ and various pairs of $(\alpha, \beta)$. We can see that configurations $\nicefrac{\beta}{\alpha} = 1$ do not detect the real data sample, even 
when $\alpha = \beta < 1$ and real data is formally used. while $\nicefrac{\beta}{\alpha} \neq 1$ actually spots both domains, increasing the weight of generator domain when $\nicefrac{\beta}{\alpha} > 1$ and decreasing it otherwise.  
\begin{figure}
    \centering
    \includegraphics[width=0.45\linewidth]{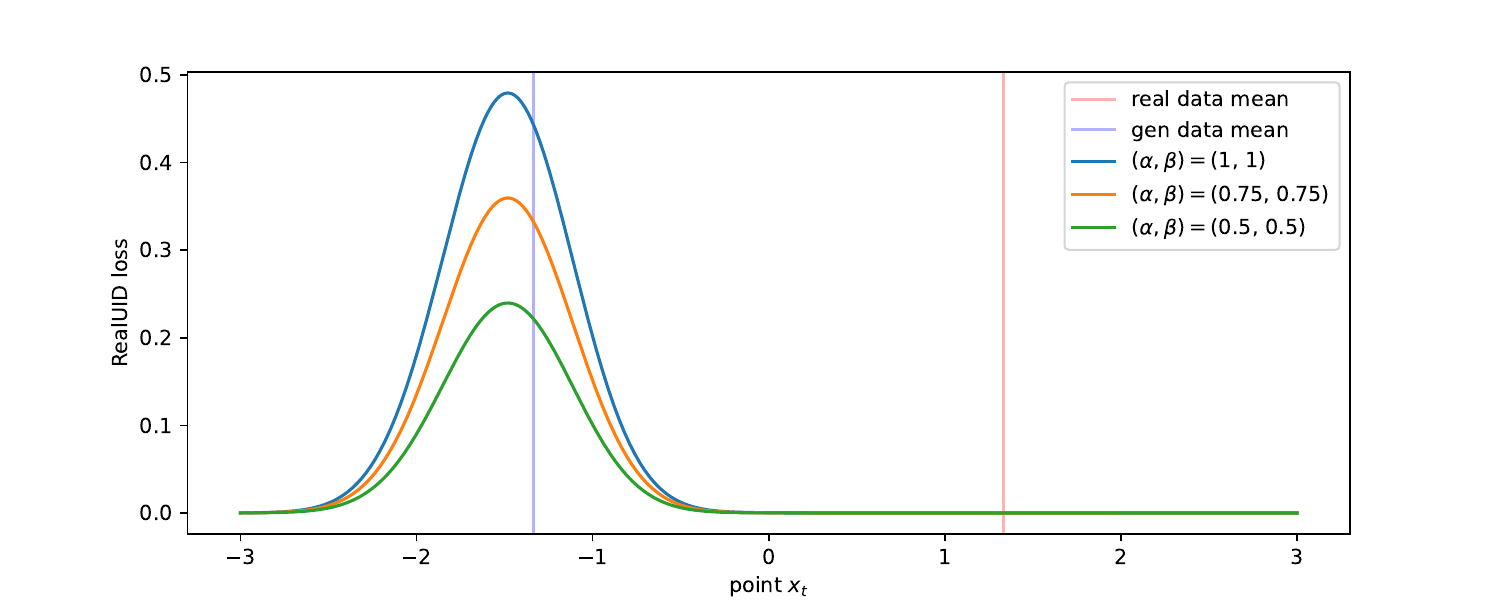}\includegraphics[width=0.45\linewidth]{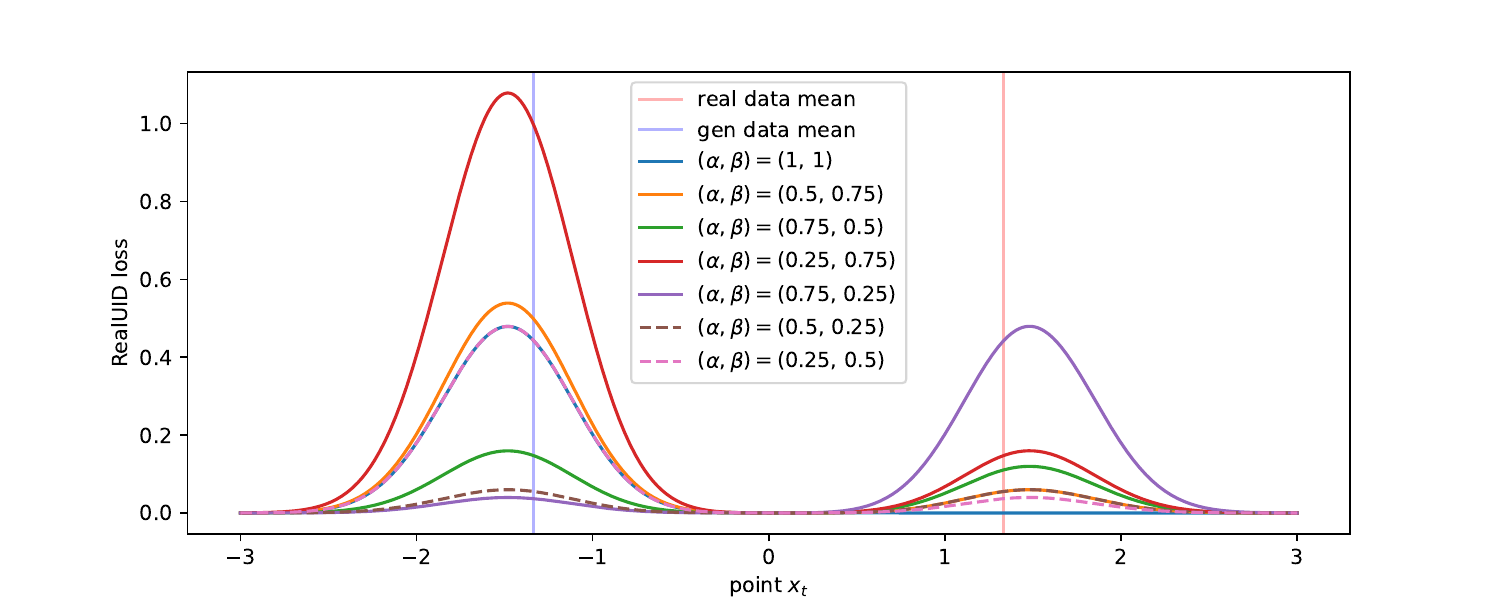}
    
    \caption{\text{RealUID} loss for $1D$-Gaussians under various coefficients $(\alpha, \beta)$.}
    \label{fig: ruid loss surface}
\end{figure}

{\color{black}
\subsubsection{Correction of teacher's errors}\label{sec: error correction}
In this chapter, we assume that instead of accurate teacher $f^* = \argmin_f \mathcal{L}_{\text{UM}}(f, p_0^*)$ we have access only to the arbitrary corrupted teacher  $\tilde{f}^*$. \textit{We will show that adding real data via our approach with $\alpha \neq \beta$ provably mitigates the teacher's errors in the final generator.}

\paragraph{Minimized distance.} With the corrupted teacher $\tilde{f}^*$ and  $\tilde{\delta} = \tilde{f}^* - f$, our corrupted RealUID loss takes the split form from Lemma \ref{lem: realuid split form}
$$ \mathcal{L}^{\alpha, \beta}_{\text{R-UID}}(\tilde{\delta}, p_0^\theta) =  \EE_{\substack{t \sim [0,T], x^\theta_0 \sim p^\theta_0, \\ x_t^{\theta} \sim p_t^{\theta}(\cdot| x^\theta_0)}}[ - \alpha \|\tilde{\delta}_t(x_t^\theta)\|^2 +   2\alpha\la \tilde{\delta}_t(x_t^\theta), \tilde{f}^*_t(x^\theta_t) \ra - 2\beta \la \tilde{\delta}_t(x_t^\theta),f^\theta_t(x^\theta_t|x_0^\theta) \ra] $$
    $$+\EE_{\substack{t \sim [0,T], x^*_0 \sim p^*_0,\\ x_t^{*} \sim p_t^{*}(\cdot| x^*_t)}} [-(1-\alpha)\|\tilde{\delta}_t(x_t^*)\|^2 +   2(1- \alpha)\la \tilde{\delta}_t(x_t^*), \tilde{f}^*_t(x^*_t) \ra - 2(1-\beta)\la \tilde{\delta}_t(x_t^*),f^*_t(x^*_t|x_0^*)\ra]. $$
Note that sampled terms $f^*_t(x^*_t|x_0^*)$ and $f^\theta_t(x^\theta_t|x_0^\theta)$ are not affected by the corruption and give the accurate functions $f_t^*(x^*_t) = \EE_{x_0^{*} \sim p_0^{*}(\cdot| x^*_t)} [f^*_t(x^*_t|x_0^*)]$ and $f_t^\theta(x^\theta_t) = \EE_{x_0^{\theta} \sim p_0^{\theta}(\cdot| x^\theta_t)} [f^\theta_t(x^\theta_t|x_0^\theta)]$:
\begin{eqnarray}
    \mathcal{L}^{\alpha, \beta}_{\text{R-UID}}(\tilde{\delta}, p_0^\theta)   = \EE_{t \sim [0,T]} \EE_{x^\theta_t \sim p^\theta_t}[ - \alpha \|\tilde{\delta}_t(x_t^\theta)\|^2 +   2\alpha\la \tilde{\delta}_t(x_t^\theta), \tilde{f}^*_t(x^\theta_t) \ra - 2\beta \la \delta_t(x_t^\theta),f^\theta_t(x^\theta_t) \ra] \notag \\
    +\EE_{t \sim [0,T]} \EE_{x^*_t \sim p^*_t} [-(1-\alpha)\|\tilde{\delta}_t(x_t^*)\|^2 +   2(1- \alpha)\la \tilde{\delta}_t(x_t^*), \tilde{f}^*_t(x^*_t) \ra - 2(1-\beta)\la \tilde{\delta}_t(x_t^*),f^*_t(x^*_t)\ra]. \notag
\end{eqnarray}
Then, we rescale the generated data terms using the equality $p^\theta_t(x_t) = \frac{p^\theta_t(x_t)}{p^*_t(x_t)}p^*_t(x_t) $ for  $x_t \in \R^D$ (we assume $p_t^*(x_t) > 0, \forall x_t, t$) leaving only math expectation w.r.t. the real data, i.e,
\begin{align}
    \mathcal{L}^{\alpha, \beta}_{\text{R-UID}}(\tilde{\delta}, p_0^\theta)  &=  \EE_{\substack{t \sim [0,T], x^*_t \sim p^*_t}} \!\!\left[-[(1-\alpha) + \alpha \frac{p_t^\theta(x_t^*)}{p_t^*(x_t^*)}](\|\tilde{\delta}_t(x_t^*)\|^2 \!\!+ \!\!2 \la \tilde{\delta}_t(x_t^*), \tilde{f}^*_t(x^*_t) \ra) \right] \notag \\
    &- \EE_{\substack{t \sim [0,T], \\x^*_t \sim p^*_t}} \!\!\left[ 2\la \tilde{\delta}_t(x_t^*), (1- \beta)f^*_t(x^*_t)  +\!\!\beta \frac{p_t^\theta(x_t^*)}{p_t^*(x_t^*)}f^\theta_t(x^*_t) \ra \right]. \notag
\end{align}
Finally, we maximize the loss w.r.t. $\tilde{\delta}_t(x_t^*)$ for each $x_t^*$ and $t$ as a quadratic function $\max_{\tilde{\delta}} \mathcal{L}^{\alpha, \beta}_{\text{R-UID}}(\tilde{\delta}, p_0^\theta)  =$
\begin{equation}
     \EE_{t \sim [0,T]} \EE_{x^*_t \sim p_t^*} \left[ \frac{\| \tilde{f}^*_t(x_t^*) \cdot ((1-\alpha) + \alpha \frac{p_t^\theta(x_t^*)}{p_t^*(x_t^*)}) - (1- \beta)f^*_t(x^*_t)  - \beta \frac{p_t^\theta(x_t^*)}{p_t^*(x_t^*)}f^\theta_t(x^*_t) \|^2}{(1-\alpha) + \alpha \frac{p_t^\theta(x_t^*)}{p_t^*(x_t^*)}} \right]. \label{eq: corrupted dist}
\end{equation}
Hence, max-min optimization of the corrupted RealUID loss implicitly minimizes expected distance \eqref{eq: corrupted dist}. However, due to arbitrary function $\tilde{f}$, we now cannot guarantee that minimum is achived when the relation inside the norm equals $0$. Previously, we could use the solution \( p^\theta = p^* \) which obviously achieved a minimum of $0$. Now, due to the implicit and complex relationship between \( f^\theta \) and \( p^\theta \), we can neither find an explicit form for the optimal \( p^\theta \) nor guarantee the minimum of $0$. 

\paragraph{Choice of coefficients $\alpha, \beta.$} Here we give an intuition on why coefficients $\nicefrac{\beta}{\alpha} \neq 1$ can fix the teacher's errors, while $\nicefrac{\beta}{\alpha} = 1$ cannot. For simplicity, we assume that the minimized distance \eqref{eq: corrupted dist} actually attains minimum of $0$ when 
\begin{equation}
((1-\alpha) p_t^*(x_t)+ \alpha p_t^\theta(x_t)) \cdot \tilde{f}^*_t(x_t)  - (1- \beta)p_t^*(x_t) \cdot f^*_t(x_t)  - \beta p_t^\theta(x_t)\cdot f^\theta_t(x^*_t) = 0.
\label{eq: corrupted equality}
\end{equation}
\begin{itemize}[leftmargin=*]
    \item 
In case of $\alpha = \beta = 1$, we have 
$\tilde{f}^*_t = f^\theta_t,$ i.e., the generator learns the corrupted function.

\item In case of $\alpha = \beta < 1$, we have
$$\tilde{f}^*_t(x_t)  = \frac{(1- \alpha)p_t^*(x_t)}{(1-\alpha) p_t^*(x_t)+ \alpha p_t^\theta(x_t)} \cdot f^*_t(x_t)  + \frac{\alpha p_t^\theta(x_t)}{(1-\alpha) p_t^*(x_t)+ \alpha p_t^\theta(x_t)}\cdot f^\theta_t(x^*_t).$$
In this convex combination, the corrupted function $\tilde{f}^*$ is always between the true teacher function $f^*$ and the optimal generator function $f^\theta$, i.e., the generator learns even worse function. 

\item \textit{In case of $\nicefrac{\beta}{\alpha} \neq 1$, there exist intervals of $\alpha, \beta$ which can give better generator function than the corrupted teacher.} For example, coefficients $\alpha \neq \beta$ close to $1$ allow to neglect the terms  $(1-\alpha) p_t^*(x_t) \cdot \tilde{f}^*_t(x_t) $ and $ (1- \beta)p_t^*(x_t) \cdot f^*_t(x_t)$ in \eqref{eq: corrupted equality} to get 
$f^\theta_t(x_t) \approx \frac{\alpha}{\beta} \tilde{f}^*_t(x_t).$ Hence, we can steer $f^\theta$ towards the true teacher picking $\nicefrac{\beta}{\alpha} < 1$ or $\nicefrac{\beta}{\alpha} > 1$ depending on the corrupted and clean teacher's values. However, we cannot find all these intervals analytically due to complex distributions and  functions.
\end{itemize}
Note that we derive the same recommendation $\nicefrac{\beta}{\alpha} \neq 1$ from the perspective of correcting the teacher's errors and from the perspective of the minimized distance surface from Appendix \ref{sec: coefs explain}.

\paragraph{Visual illustration.}
For visual demonstration, we consider the FM models transforming one-dimensional real data Gaussian $\mathcal{N}(\mu^*,1)$ and generated Gaussian $\mathcal{N}(\mu^\theta,1)$ to noise $\mathcal{N}(0,1)$ on the time interval $[0,1]$. In this case, the generated and real data interpolations are $p_t^\theta(x_t) = \mathcal{N}(x_t|\mu^\theta(1-t),t^2 + (1-t)^2)$ and $p_t^*(x_t) = \mathcal{N}(x_t|\mu^*(1-t),t^2 + (1-t)^2).$ The unconditional vector field $u = f$ between $\mathcal{N}(\mu,1)$ and $\mathcal{N}(0,1)$  can be calculated as
\begin{align}
    u_t(x_t) &= \EE_{ x_0 \sim p_0(\cdot| x_t)}\left[\frac{x_t - x_0}{t} \right]
= \int_{x_0}\left( \frac{x_t - x_0}{t}\right) \cdot \mathcal{N}\left(\frac{x_t - x_0(1-t)}{t}|0,1\right) \cdot \mathcal{N}(x_0|\mu,1) dx_0 \notag \\
&= \frac{a(2t^2 -2t) - bt^2}{\sqrt{2\pi}(1-2t+2t^2)^\frac32} \exp\left(-\frac{(x_t - \mu(1-t))^2}{2(1-2t+2t^2)^2}\right).
\end{align}
In Figure \ref{fig: deviations}, we depict the optimal generator mean $\mu^\theta$ and vector field $u^\theta$ satisfying \eqref{eq: corrupted equality} for various deviations $\tilde{u}^* - u^*$ and fixed time $t=1/3$, real data $\mu^* = -2$ and point $x_t = -1$.

We can see that with $\alpha = \beta = 1$, the generator learns the corrupted vector field, and with $\alpha = \beta < 1$, the learned field and means are often even worse. In contrast, with $\nicefrac{\beta}{\alpha} \neq 1$, the generator can learn vector fields and means which are closer to the real data. Although the generator cannot satisfy relation \eqref{eq: corrupted equality} under large deviations, it still produces better results with the real data.    
\begin{figure}[!ht]
    \centering
    \includegraphics[width=1.1\linewidth]{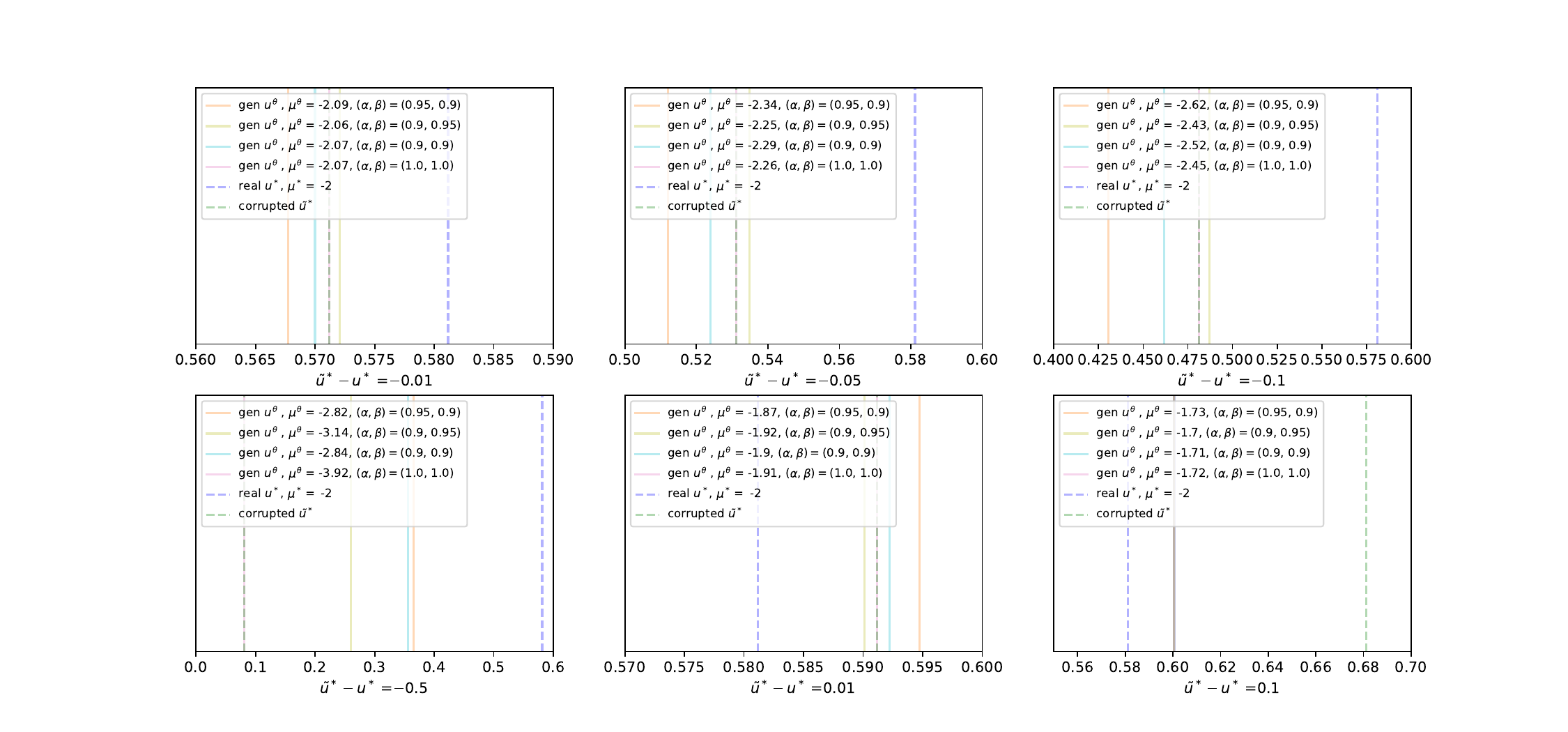}
    \caption{Learned generators for RealUID loss between 1D-Gaussians with corrupted teachers.}
    \label{fig: deviations}
\end{figure}}

\subsection{General \text{RealUID} loss} \label{sec: general m-uid}
\paragraph{Extending our real data incorporation.} We recall that UID loss (Theorem \ref{thm: inv loss no data}) can be restated via linearization technique with $\delta = f^* - f$ as:
$$ \mathcal{L}_{\text{UID}}(\delta, p_0^\theta)= \EE_{\substack{t \sim [0,T], x^\theta_0 \sim p^\theta_0,\\  x_{t}^{\theta} \sim p_{t}^{\theta}(\cdot| x^\theta_0)}} \left\{ - \|\delta_t(x_t^\theta)\|^2 +  2\la \delta_t(x_t^\theta), f^*_t(x^\theta_t)\ra - 2\la \delta_t(x_t^\theta), f^\theta_t(x^\theta_t|x_{0}^\theta)\ra \right\}.$$
Following alternative definition of RealUID loss from Lemma \ref{lem: realuid split form}, one can incorporate real data into data-free loss by splitting each term in the linearized form between generated and real data as:
\begin{align}
   & \mathcal{L}^{\alpha, \beta}_{\text{R-UID}}(\delta, p_0^\theta) = \EE_{\substack{t \sim [0,T], x^\theta_0 \sim p^\theta_0, \\x_t^{\theta} \sim p_t^{\theta}(\cdot| x^\theta_0)}}[ - \alpha\|\delta_t(x_t^\theta)\|^2 +   2\alpha\la \delta_t(x_t^\theta), f^*_t(x^\theta_t) \ra - 2\beta \la \delta_t(x_t^\theta),f^\theta_t(x^\theta_t|x_0^\theta) \ra] \notag \\
    \!\! &+\EE_{\substack{t \sim [0,T], x^*_0 \sim p^*_0, \\
    x_t^{*} \sim p_t^{*}(\cdot| x^*_0)}} [-(1-\alpha)\|\delta_t(x_t^*)\|^2 +   2(1- \alpha)\la \delta_t(x_t^*), f^*_t(x^*_t) \ra - 2(1-\beta)\la \delta_t(x_t^*),f^*_t(x^*_t|x_0^*)\ra].\notag
\end{align}
In RealUID loss \eqref{eq: Inv loss with data}, its three terms are split with proportions $\alpha$ and $1-\alpha$, $\alpha$ and $1-\alpha$ and $\beta$ and $1-\beta$, respectively. We can go even further and split the first quadratic coefficient $-\|\delta_t(\cdot)\|^2$  using a new parameter $\gamma$ to create one more degree of freedom. Moreover, we can use other parameterization of $\delta$, since its form does not change the proof of distance lemma.

\begin{definition}\label{def: general real uid}
We introduce \textbf{General RealUID loss} $\mathcal{L}^{\alpha, \beta, \gamma}_{\text{R-UID}}(\delta, p_0^\theta)$ on generated data $p^\theta_0 \in \mathcal{P}(\R^D)$  with coefficients $\alpha, \beta, \gamma$:
\begin{align}
   \!\! &\mathcal{L}^{\alpha, \beta, \gamma}_{\text{R-UID}}(\delta, p_0^\theta)  :=  \EE_{\substack{t \sim [0,T], x^\theta_0 \sim p^\theta_0, \\ x_t^{\theta} \sim p_t^{\theta}(\cdot| x^\theta_0)}}[ - \gamma\|\delta_t(x_t^\theta)\|^2 +   2\alpha\la \delta_t(x_t^\theta), f^*_t(x^\theta_t) \ra - 2\beta \la \delta_t(x_t^\theta),f^\theta_t(x^\theta_t|x_0^\theta) \ra] \notag \\
    \!\! &+\EE_{\substack{t \sim [0,T], x^*_0 \sim p^*_0, \\x_t^{*} \sim p_t^{*}(\cdot| x^*_0)}} [-(1-\gamma)\|\delta_t(x_t^*)\|^2 +   2(1- \alpha)\la \delta_t(x_t^*), f^*_t(x^*_t) \ra - 2(1-\beta)\la \delta_t(x_t^*),f^*_t(x^*_t|x_0^*)\ra]. \notag
\end{align}
Optionally, one can change default parameterization $\delta = f^* - f$ {\color{black}(e.g., with $\delta = \beta(f^* - f)$)}, and  substitute sampled real data term $f^*_t(x^*_t|x_0^*)$  with the unconditional teacher $f^*_t(x^*_t)$ and vice versa. 
\end{definition}

\paragraph{Theoretical properties.} In case of $\delta = f^* - f$ and $\gamma \neq \alpha$, the General RealUID loss cannot be expressed as inverse min-max problem \eqref{eq: inv general form} for simple losses, since some scalar products do not eliminate each other. Nevertheless, min-max optimization of $\mathcal{L}^{\alpha, \beta, \gamma}_{\text{R-UID}}$ still minimizes the squared $\ell_2$-distance between the weighted teacher and generator functions, attaining minimum when $p^\theta_0 = p^*_0$.

\begin{lemma}[\textbf{Distance minimized by General RealUID loss}]\label{lem: general uid dist}
Maximization of General RealUID loss $\mathcal{L}^{\alpha, \beta, \gamma}_{\text{R-UID}}$ over $\delta$ represents the weighted squared $\ell_2$-distance between the student function $f^\theta := \argmin_f \mathcal{L}_{\text{UM}}(f, p_0^\theta)$ and the teacher $f^* := \argmin_f \mathcal{L}_{\text{UM}}(f, p_0^*)$:
\begin{equation}
      \max_\delta \mathcal{L}^{\alpha, \beta, \gamma}_{\text{R-UID}}(\delta, p_0^\theta)  =  \EE_{\substack{t \sim [0,T], \\ x^*_t \sim p_t^*}} \left[ \frac{\|  \frac{\beta}{\alpha}[  p_t^*(x_t^*)f^*_t(x_t^*) -   p_t^\theta(x_t^*) f^\theta_t(x_t^*)] + (p_t^\theta(x_t^*) - p_t^*(x_t^*))  f^*_t(x_t^*)\|^2}{p_t^*(x_t^*) \cdot \max\{0, (1-\gamma)p_t^*(x_t^*) + \gamma p_t^\theta(x_t^*)\}/\alpha^2 } \right]. \label{eq: general uid metric}
\end{equation} 
\end{lemma}
The distances being minimized for RealUID (Lemma \ref{lem: M-UID distance}) and General RealUID (Lemma \ref{lem: general uid dist}) are almost identical except the scale factor. Thus, we keep the same recommendations for choosing coefficients $\alpha, \beta$ as we discuss in Section \ref{sec: unified dist with data}. The factor $\nicefrac{\beta}{\alpha}$ still has the largest impact within the distance, while $\alpha$ and $\gamma$ set the scaling. Values  $\nicefrac{\beta}{\alpha}$ and $\gamma$ should be chosen close to $1$, but not exactly $1.$

\begin{proof}
First, we take math expectation over data points $x^*_0$. Since the expectation can be taken in a reverse order, i.e., $\EE_{x^*_0 \sim p^*_0, x_t^{*} \sim p_t^{*}(\cdot| x^*_0)} = \EE_{x^*_t \sim p^*_t, x_0^{*} \sim p_0^{*}(\cdot| x^*_t)}$, we see that 
\begin{eqnarray}
    \EE_{x^*_0 \sim p^*_0, x_t^{*} \sim p_t^{*}(\cdot| x^*_0)} [\la \delta_t(x_t^*),f^*_t(x^*_t|x_0^*)\ra] &=&  \EE_{x^*_t \sim p^*_t} \la \delta_t(x_t^*), \EE_{x_0^{*} \sim p_0^{*}(\cdot| x^*_t)} [f^*_t(x^*_t|x_0^*)]\ra \notag \\
    &=&  \EE_{x^*_t \sim p^*_t} [\la \delta_t(x_t^*),f^*_t(x^*_t)\ra].
\end{eqnarray}
For the term $\EE_{x^\theta_0 \sim p^\theta_0, x_t^{\theta} \sim p_t^{\theta}(\cdot| x^\theta_0)} [\la \delta_t(x_t^\theta),f^\theta_t(x^\theta_t|x_0^\theta)\ra] = \EE_{x^\theta_t \sim p^\theta_t} [\la \delta_t(x_t^\theta),f^\theta_t(x^\theta_t)\ra]$, the reasoning is similar. Thus, we write down General RealUID loss (Def. \ref{def: general real uid}) in an explicit form with $\delta_t = f^*_t - f_t$
\begin{eqnarray}
    \mathcal{L}^{\alpha, \beta}_{\text{R-UID}}(\delta, p_0^\theta)   = \EE_{t \sim [0,T]} \EE_{x^\theta_t \sim p^\theta_t}[ - \gamma \|\delta_t(x_t^\theta)\|^2 +   2\alpha\la \delta_t(x_t^\theta), f^*_t(x^\theta_t) \ra - 2\beta \la \delta_t(x_t^\theta),f^\theta_t(x^\theta_t) \ra] \notag \\
    +\EE_{t \sim [0,T]} \EE_{x^*_t \sim p^*_t} [-(1-\gamma)\|\delta_t(x_t^*)\|^2 +   2(1- \alpha)\la \delta_t(x_t^*), f^*_t(x^*_t) \ra - 2(1-\beta)\la \delta_t(x_t^*),f^*_t(x^*_t)\ra]. \notag
\end{eqnarray}
Then, we rescale the generated data terms in the General RealUID loss using the equality $p^\theta_t(x_t) = \frac{p^\theta_t(x_t)}{p^*_t(x_t)}p^*_t(x_t) $ for  $x_t \in \R^D$ (we assume $p_t^*(x_t) > 0, \forall x_t, t$) leaving only math expectation w.r.t. the real data, i.e,  
\begin{align}
    \mathcal{L}^{\alpha, \beta, \gamma}_{\text{R-UID}}(\delta, p_0^\theta)  &= \EE_{\substack{t \sim [0,T], \\ x^*_t \sim p^*_t}} \left[-[(1-\gamma) + \gamma\frac{p_t^\theta(x_t^*)}{p_t^*(x_t^*)}]\|\delta_t(x_t^*)\|^2 \right] \notag  \\
    &+  \EE_{\substack{t \sim [0,T], \\ x^*_t \sim p^*_t}} \left[ 2[(\beta- \alpha) + \alpha \frac{p_t^\theta(x_t^*)}{p_t^*(x_t^*)}] \la \delta_t(x_t^*), f^*_t(x^*_t) \ra - 2\beta \frac{p_t^\theta(x_t^*)}{p_t^*(x_t^*)}\la \delta_t(x_t^*),f^\theta_t(x^*_t)\ra \right]. \notag
\end{align}
Next we maximize the loss w.r.t. $\delta_t(x_t^*)$ for each $x_t^*$ and $t$ as a quadratic function. If $(1-\gamma)\cdot p_t^*(x_t^*) + \gamma \cdot p_t^\theta(x_t^*) \leq 0$, then the maximum tends to $+\infty.$ Otherwise, the maximum is achieved when
\begin{equation}
    \delta_t(x_t^*) = \frac{ [(\beta- \alpha) + \alpha \frac{p_t^\theta(x_t^*)}{p_t^*(x_t^*)}]  f^*_t(x^*_t)  - \beta \frac{p_t^\theta(x_t^*)}{p_t^*(x_t^*)}f^\theta_t(x^*_t) }{[(1-\gamma) + \gamma \frac{p_t^\theta(x_t^*)}{p_t^*(x_t^*)}]}. \label{eq: delta general uid}
\end{equation} 
The maximum itself equals to
$$\max_\delta \mathcal{L}^{\alpha, \beta, \gamma}_{\text{R-UID}}(\delta, p_0^\theta)  = \EE_{t \sim [0,T]} \EE_{x^*_t \sim p_t^*} \left[ \frac{\| f^*_t(x_t^*) \cdot ((\beta-\alpha) + \alpha \frac{p_t^\theta(x_t^*)}{p_t^*(x_t^*)}) - f^\theta_t(x_t^*) \cdot \beta \frac{p_t^\theta(x_t^*)}{p_t^*(x_t^*)} \|^2}{(1-\gamma) + \gamma \frac{p_t^\theta(x_t^*)}{p_t^*(x_t^*)}} \right].$$
\end{proof}

\paragraph{Alternative parameterization.} In the proximity of the solution, when generated data approaches real one, i.e., $p_t^\theta \approx p_t^*$, the optimal $\delta_t$ \eqref{eq: delta general uid} approaches
$$\delta_t(x_t^*) \approx \frac{ [(\beta- \alpha) + \alpha \cdot 1]  f^*_t(x^*_t)  - \beta \cdot 1 \cdot f^\theta_t(x^*_t) }{[(1-\gamma) + \gamma \cdot 1]} \approx \beta(f^*_t(x^*_t) - f^\theta_t(x^*_t)).$$
Thus, the parameterization $\delta_t = \beta(f^*_t - f_t)$ may naturally help reach the solution without making the fake model learn extra information about the teacher near the optimum.

In experiments in Tables \ref{tab: ablation table} and \ref{tab: ablation table celeba}, this parameterization with the corresponding coefficients $\gamma = \alpha$ and $\beta$ yields slightly better metrics from +0.02 to +0.04.

\paragraph{Extra ranges for coefficients $\alpha, \beta, \gamma$}

New perspective on our RealUID loss allows us to expand the range of feasible configurations for the parameters $\alpha$, $\beta$, and $\gamma$. Specifically, it is now possible to set $\alpha = 1$ for any $\beta$, whereas in the original loss \eqref{eq: flow loss data} this configuration is unavailable due to division by zero in the real data term. Additionally, one can now use values $\alpha, \beta, \gamma > 1$.

However, we observe that in the experiments reported in Tables \ref{tab: ablation table} and \ref{tab: ablation table celeba}, these extra configurations are highly unstable and lead to degraded results. This degradation occurs due to out-of-domain generated samples and negative quadratic summands leading to infinite losses and metric \eqref{eq: general uid metric}. Hence, we stick to the original ranges $\alpha, \beta, \gamma \in (0,1]$.

\color{black}

\subsection{General SiD with real data} \label{sec: sid with real data}

\paragraph{Our real data incorporation.}

We recall that data-free UID loss (Theorem \ref{thm: inv loss no data}) can be restated via linearization technique with $\delta = f - f^*$ as:
\begin{equation}
    \mathcal{L}_{\text{UID}}(\delta, p_0^\theta)= \EE_{\substack{t \sim [0,T], x^\theta_0 \sim p^\theta_0, \\ x_{t}^{\theta} \sim p_{t}^{\theta}(\cdot| x^\theta_0)}} \left[ - \|\delta_t(x_t^\theta)\|^2 +  2\la \delta_t(x_t^\theta), f^*_t(x^\theta_t)\ra - 2\la \delta_t(x_t^\theta), f_t^\theta(x_t^\theta|x_{0}^\theta)\ra \right]. \label{eq: uid app sid}
\end{equation}
Following alternative definition of our RealUID loss from Lemma \ref{lem: realuid split form}, one can incorporate real data into data-free loss by splitting each term in the linearized form between generated and real data as:
\begin{align}
   &\mathcal{L}^{\alpha, \beta}_{\text{R-UID}}(\delta, p_0^\theta) = \EE_{\substack{t \sim [0,T] , x^\theta_0 \sim p^\theta_0, \\ x_t^{\theta} \sim p_t^{\theta}(\cdot| x^\theta_0)}}[ - \alpha\|\delta_t(x_t^\theta)\|^2 +   2\alpha\la \delta_t(x_t^\theta), f^*_t(x^\theta_t) \ra - 2\beta \la \delta_t(x_t^\theta), f_t^\theta(x_t^\theta|x_{0}^\theta) \ra] \notag \\
    \!\! &+\EE_{\substack{t \sim [0,T], x^*_0 \sim p^*_0, \\
    x_t^{*} \sim p_t^{*}(\cdot| x^*_0)}} [-(1-\alpha)\|\delta_t(x_t^*)\|^2 +   2(1- \alpha)\la \delta_t(x_t^*), f^*_t(x^*_t) \ra - 2(1-\beta)\la \delta_t(x_t^*), f_t^*(x_t^*|x_{0}^*)\ra]. \label{eq: realuid in sid app}
\end{align}

\paragraph{General data-free SiD.} The authors of the SiD framework \citep{zhou2024adversarial, zhou2024score} for diffusion models empirically notice that scaling the first coefficient $-\|\delta_t(x_t^\theta)\|^2$ by the factor $2\alpha_{\text{SiD}}$ in the UID loss \eqref{eq: uid app sid} for generator updates yields better performance. Hence, we generalize the SiD loss to other matching models. Namely, the \textbf{General SiD loss} for the generator is the following loss with $\delta = f - f^*$ and parameter $\alpha_{\text{SiD}} \in [0.5, 1.2]$:
\begin{equation}
    \mathcal{L}_{\text{SiD}}(\theta):= \EE_{\substack{t \sim [0,T], x^\theta_0 \sim p^\theta_0, \\x_t^{\theta} \sim p_t^{\theta}(\cdot| x^\theta_0)}} \left[ {-2\alpha_{\text{SiD}}\|\delta_t(x_t^\theta)\|^2 +  2\la \delta_t(x_t^\theta), f^*_t(x^\theta_t)\ra - 2\la \delta_t(x_t^\theta), f^\theta(x_t^\theta|x_{0}^\theta)\ra} \right], \label{eq: sid loss}
\end{equation}
while the UM loss (Def. \ref{def: UM loss}) for the fake model remains intact. The same positive effect is observed in experiments with flow matching models in FGM \citep{huang2024flow}, where the authors do not calculate the gradient through some loss terms and obtain the General SiD loss \eqref{eq: sid loss} with $\alpha_{\text{SiD}} = 1$, achieving better performance.

\paragraph{General SiD with real data.} Following the structure of the General SiD loss \eqref{eq: sid loss}, we propose to scale the first coefficient in our RealUID loss \eqref{eq: realuid in sid app} during generator updates. The whole \textbf{General SiD pipeline with real data (RealSiD)}, defined by coefficients $\alpha, \beta \in (0,1], \alpha_{\text{SiD}} \in [0.5, 1.2]$ and teacher $f^*$, is two alternating steps:
\begin{enumerate}[leftmargin=*]
    \item Make one or several fake model $f$ update steps, minimizing UM loss with real data $\mathcal{L}^{\alpha, \beta}_{\text{R-UM}}(f, p_0^\theta)$:
    \begin{eqnarray}
        L^{\alpha, \beta}_{\text{R-UM}}(f, p_0^\theta) &:=& \underset{\text{generated data } p^\theta_0\text{ term}}{\underbrace{ \alpha\cdot\EE_{t \sim [0,T]} \EE_{x^\theta_0 \sim p^\theta_0, x_t^{\theta} \sim p_t^{\theta}(\cdot| x^\theta_0)} \left[ \| f_t(x^\theta_t)  - \frac{\beta}{\alpha}f_t^\theta(x_t^\theta|x_{0}^\theta) \|^2\right]}} \notag \\
     &+&   \underset{\text{real data }p^*_0\text{ term}}{\underbrace{(1-\alpha)\cdot\EE_{t \sim [0,T]} \EE_{x^*_0 \sim p^*_0, x_t^{*} \sim p_t^{*}(\cdot| x^*_0)}\left[\| f_t(x^*_t) - \frac{1 -\beta}{1 - \alpha}  f_t^*(x_t^*|x_{0}^*)\|^2\right]}}. \notag
\end{eqnarray}    
    \item Make a generator update step, minimizing the loss $\mathcal{L}^{\alpha, \beta }_{\text{R-SiD}}(\theta):=$  
    \begin{equation}
      \EE_{\substack{t \sim [0,T], x^\theta_0 \sim p^\theta_0, \\ x_t^{\theta} \sim p_t^{\theta}(\cdot| x^\theta_0)}} \left[{ - 2\alpha_{\text{SiD}}\cdot \alpha\|\delta_t(x_t^\theta)\|^2 +  2\alpha\la \delta_t(x_t^\theta), f^*_t(x^\theta_t)\ra - 2\beta\la \delta_t(x_t^\theta), f_t^\theta(x_t^\theta|x_{0}^\theta)\ra} \right], \label{eq: sid loss with real data}
\end{equation}
where $\delta_t = f_t - f_t^*$. 
\end{enumerate}
In the SiD framework for diffusion models, the data-free generator SiD loss \eqref{eq: sid loss} is additionally normalized, and the SiD loss with real data \eqref{eq: sid loss with real data} should be normalized the same way. For more details on normalization, time sampling, weighting, etc., refer to the original articles \citep{zhou2024adversarial, zhou2024score}.

\paragraph{Experimental validation.}

We modify the data-free SiD loss in the official SiD implementation with real data and conduct a short ablation study on unconditional CIFAR-10. The SiD codebase for diffusion models can be found in
\begin{center}
    \url{https://github.com/mingyuanzhou/SiD}.
\end{center}
We compare the data-free SiD loss \eqref{eq: sid loss} and our RealSiD loss \eqref{eq: sid loss with real data} with the best coefficients $\alpha, \beta$ from Table \ref{tab: ablation table} for both the theoretically justified $\alpha_{\text{SiD}} = 0.5$ and the best practical heuristic $\alpha_{\text{SiD}} = 1.2.$ We do not change anything else and use the default hyperparameters and training pipeline as described in \citep{zhou2024score}. The results are presented in Figure \ref{fig:sid_results}.

\begin{figure*}[h!] 
\centering 
\begin{subfigure}{0.48\textwidth} 
\centering
\includegraphics[width=\linewidth]{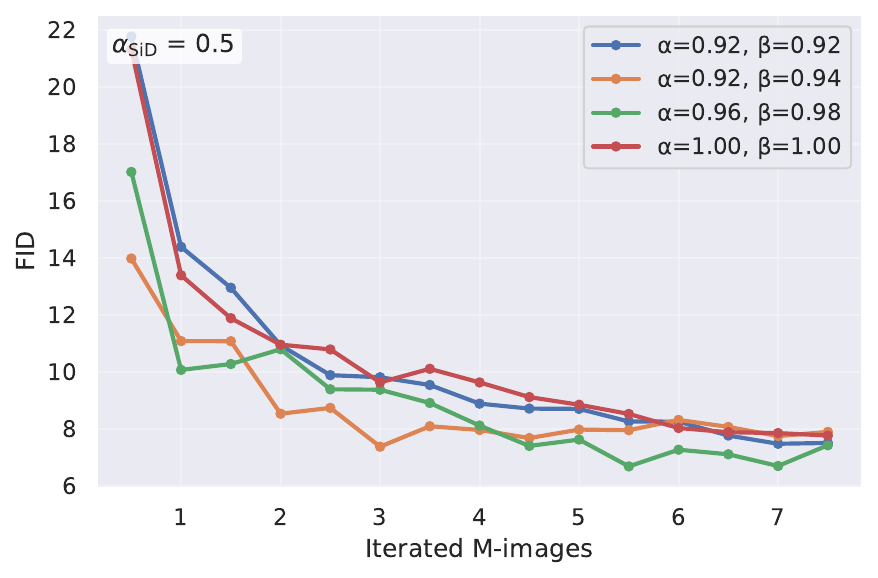} 
\end{subfigure} \hfill
\begin{subfigure}{0.48\textwidth} 
\centering 
\includegraphics[width=\linewidth]{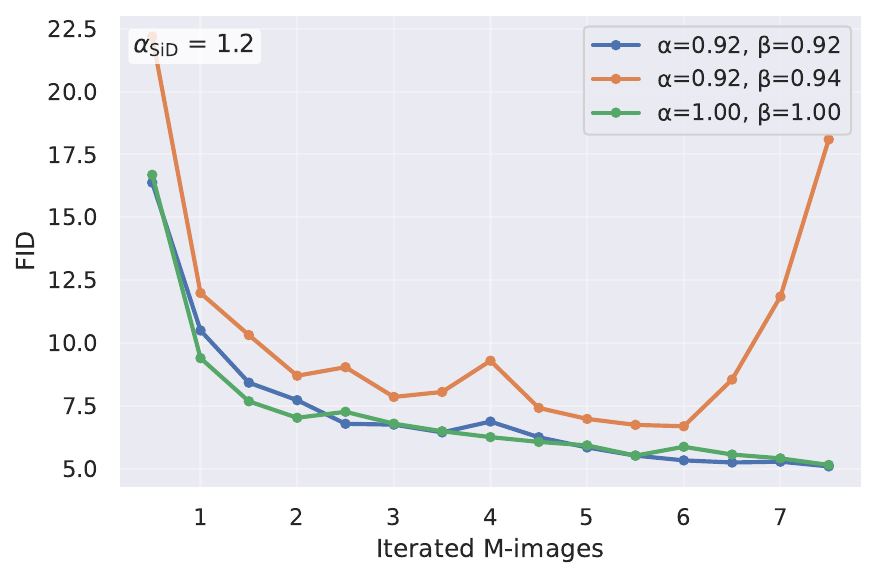} 
\end{subfigure} 
\caption{\small
Evolution of FID during unconditional CIFAR-10 distillation for the data-free SiD loss ($\alpha = \beta = 1.0$) and our RealSiD loss for $\alpha_{\text{SiD}} = 0.5$ (left) and $\alpha_{\text{SiD}} = 1.2$ (right). } 
\label{fig:sid_results} 
\end{figure*}

For the accurate $\alpha_{\text{SiD}} = 0.5$, the RealSiD results for diffusion models are similar to those for flow models. Configurations with $\nicefrac{\beta}{\alpha} = 1.02$ boost convergence compared to the data-free baseline ($\alpha = \beta = 1$), whereas in the case of $\alpha = \beta \neq 1$, the convergence speed remains close to the baseline.

However, for the heuristic $\alpha_{\text{SiD}} = 1.2$, our best configurations with $\nicefrac{\beta}{\alpha} \neq 1.0$ either degrade performance compared to the baseline or become unstable. Thus, the heuristical SiD may require a different approach to incorporate real data, or a more careful tuning of the coefficients $\alpha, \beta$ and other hyperparameters, due to differing architectures and training pipelines. 

\textit{We would like to highlight that all our analyses and recommendations were justified only for $\alpha_{\text{SiD}}~=~0.5$. For other $\alpha_{\text{SiD}}$ values, this justification may not hold true.}


\subsection{Normalized UID and RealUID losses for minimizing $\ell_2$-distance} \label{sec: normalized loss}
Using the linearization technique from  (\S\ref{sec: inv matching no data}), we can estimate the non-squared $\ell_2$-distance between the teacher $f^* := \argmin_f \mathcal{L}_{\text{UM}}(f, p_0^*)$ and student $f^\theta := \argmin_f \mathcal{L}_{\text{UM}}(f, p_0^\theta)$ functions. In this case, the connection with the inverse optimization disappears. 

For a fixed point $x^\theta_t \sim p_t^\theta$ and time $t \sim [0,T]$, we derive:
\begin{align}
    \| f^*_t(x^\theta_t) - f^\theta_t(x^\theta_t) \| &= \max_{\delta_t(x_t^\theta)} \left\{\la \frac{\delta_t(x_t^\theta)}{\|\delta_t(x_t^\theta)\|}, f^*_t(x^\theta_t) - f^\theta_t(x^\theta_t)\ra \right\}\notag \\
    \!&=\max_{\delta_t(x_t^\theta)} \EE_{x_0^{\theta} \sim p_0^{\theta}(\cdot| x^\theta_t)}\!\!\left[\la \frac{\delta_t(x_t^\theta)}{\|\delta_t(x_t^\theta)\|}, f^*_t(x^\theta_t)\ra  - \la \frac{\delta_t(x_t^\theta)}{\|\delta_t(x_t^\theta)\|}, f^\theta_t(x^\theta_t|x_0^\theta)\ra \right]. \label{eq: l_2 linear delta}
\end{align}
With the parameterization $\delta_t = f^*_t - f_t$, the \textbf{Normalized UID loss} $\hat{\mathcal{L}}_{\text{UID}}(f, p_0^\theta)$ for solving $\min_\theta \EE_{t \sim [0,T]} \EE_{x^\theta_t \sim p_t^\theta} [\| f^*_t(x^\theta_t) - f^\theta_t(x^\theta_t) \|]$ is
\begin{eqnarray}
    \min_\theta \max_f \left\{\hat{\mathcal{L}}_{\text{UID}}(f, p_0^\theta) :=  \EE_{\substack{t \sim [0,T], x^\theta_0 \sim p^\theta_0, \\ x_t^{\theta} \sim p_t^{\theta}(\cdot| x^\theta_0)}}  \left[   \la \frac{f^*_t(x_t^\theta) - f_t(x_t^\theta)}{\|f^*_t(x_t^\theta) - f_t(x_t^\theta)\|}, f^*_t(x^\theta_t) - f^\theta_t(x^\theta_t|x_0^\theta)\ra \right]\right\}. \label{eq: inv l_2 loss no data}
\end{eqnarray}

\paragraph{Adding real data.} Following alternative definition of RealUID loss from Lemma \ref{lem: realuid split form}, we can incorporate real data in Normalized UID loss \eqref{eq: inv l_2 loss no data} as well. We need to split two terms in the linearized form \eqref{eq: l_2 linear delta} into generated and real data parts with weights $\alpha, (1-\alpha)$ and 
$\beta, (1-\beta)$. 

\begin{definition}
We introduce \textbf{Normalized RealUID loss} $\hat{\mathcal{L}}_{\text{R-UID}}^{\alpha, \beta}(f, p_0^\theta)$ on generated data $p^\theta_0 \in \mathcal{P}(\R^D)$ with coefficients $\alpha, \beta \in (0,1]$:
\begin{eqnarray}
    &&\hat{\mathcal{L}}_{\text{R-UID}}^{\alpha, \beta}(f, p_0^\theta) :=  \EE_{\substack{t \sim [0,T], x^\theta_0 \sim p^\theta_0, \\ x^\theta_t\sim p^\theta_t(\cdot| x^\theta_0)}}  \left[   \la \frac{f^*_t(x_t^\theta) - f_t(x_t^\theta)}{\|f^*_t(x_t^\theta) - f_t(x_t^\theta)\|}, \alpha \cdot f^*_t(x^\theta_t) - \beta \cdot f^\theta_t(x^\theta_t|x_0^\theta)\ra \right] \notag \\
    &&+  \EE_{\substack{t \sim [0,T], x^*_0 \sim p^*_0, \\ x^*_t\sim p^*_t(\cdot| x^*_0)}}  \left[   \la \frac{f^*_t(x_t^*) - f_t(x_t^*)}{\|f^*_t(x_t^*) - f_t(x_t^*)\|}, (1-\alpha) \cdot f^*_t(x^*_t) - (1-\beta) \cdot f^*_t(x^*_t|x_0^*)\ra \right]. \notag
\end{eqnarray}
\end{definition}
Similar to the proof of RealUID distance Lemma \ref{lem: M-UID distance}, we can show that min-max optimization of Normalized RealUID loss minimizes the non-squared $\ell_2$-norm between the similar weighted student $f^\theta$ and teacher  $f^*$ functions:
\begin{equation}
     \max_f \hat{\mathcal{L}}^{\alpha, \beta}_{\text{R-UID}}(f, p_0^\theta)  = \EE_{t \sim [0,T]} \EE_{x^*_t \sim p_t^*} \left[ \|  ((\beta-\alpha) + \alpha \frac{p_t^\theta(x_t^*)}{p_t^*(x_t^*)}) \cdot f^*_t(x_t^*) -   \beta \frac{p_t^\theta(x_t^*)}{p_t^*(x_t^*)} \cdot f^\theta_t(x_t^*)\| \right]. \notag
\end{equation} 
This distance attains minimum when $p^\theta_0 = p^*_0$, justifying the procedure.
\subsection{DMD and Inverse Optimization} \label{sec: DMD with real data}
\textbf{Distribution Matching Distillation} \citep[\textbf{DMD}]{luo2023diff, wang2023prolificdreamer, yin2024one, yin2024improved}  approach distills  Gaussian diffusion models with forward process $x_t = x_0 + \sigma_t \epsilon, \epsilon \sim \mathcal{N}(0,I).$

This approach minimizes  KL divergence $ \EE_{t \sim [0,T]} [D_{\text{KL}}(p_t^\theta||p_t^*)] = \EE_{t \sim [0,T]}\EE_{x_t^\theta \sim p_t^\theta} \left[\log \left(\frac{p_t^\theta(x_t^\theta)}{p_t^*(x_t^\theta)}\right)\right] $ between the generated data $p_t^\theta$ and the real data $p_t^*$. The authors meticulously derive that the true gradient of $\EE_{t \sim [0,T]} [D_{\text{KL}}(p_t^\theta||p_t^*)]$ w.r.t. $\theta$ can be computed via the score functions:
\begin{eqnarray}
    \EE_{t \sim [0,T]} \left[\frac{d D_{\text{KL}}(p_t^\theta||p_t^*)}{d \theta}\right] = \EE_{\substack{t \sim [0,T], z \sim p^\mathcal{Z}, \\ x_0^\theta = G(z), x^\theta_t \sim p_t^\theta}} \left[(\nabla_{x_t^\theta} \log p_t^{sg[\theta]}(x_t^\theta) - \nabla_{x_t^\theta} \log p_t^*(x_t^\theta)) \frac{dG_\theta(z)}{d\theta}\right]. \notag
\end{eqnarray}
Then, this true gradient is estimated with the teacher score function $s^* := \argmin_s \mathcal{L}_{\text{DSM}}(s, p_0^*)$ and student score $s^\theta = \argmin_s \mathcal{L}_{\text{DSM}}(s, p_0^\theta)$ obtained via minimizing DSM loss \eqref{eq:DSM}:
\begin{eqnarray}
    \EE_{t \sim [0,T]} \left[\frac{d D_{\text{KL}}(p_t^\theta||p_t^*)}{d \theta} \right] = \EE_{\substack{t \sim [0,T], z \sim p^\mathcal{Z}, \\ x_0^\theta = G(z), x^\theta_t \sim p_t^\theta}} \left[ (s_t^\theta(x_t^\theta) - s_t^*(x_t^\theta)) \frac{dG_\theta}{d\theta}\right]. \notag
\end{eqnarray}
The final algorithm alternates updates for the fake model and the generator similar to SiD approach. 

\textit{Below we show that DMD fits our UID framework, but with another loss, different from the UM loss.} In the case of diffusions, the UM loss is the $\mathcal{L}_{\text{DSM}}(s, p_0^\theta)$ loss, and with this loss, the resulting UID loss becomes exactly the SiD loss, not the DMD.

\paragraph{Inverse optimization view.} We decompose KL divergence as the difference of two KL losses: 
\begin{align*}\EE_{t \sim [0,T]} \left[D_{\text{KL}}(p_t^\theta||p_t^*)\right] &= \EE_{\substack{t \sim [0,T] , \\ x_t^\theta \sim p_t^\theta}} \left[\log \left(p_t^\theta(x_t^\theta)\right)\right] - \EE_{\substack{t \sim [0,T], \\x_t^\theta \sim p_t^\theta}} \left[\log p_t^*(x_t^\theta)\right] \\
&= \mathcal{L}_{\text{KL}}(p_t^\theta, p_t^\theta) - \mathcal{L}_{\text{KL}}(p_t^*, p_t^\theta), 
\end{align*}
where $\mathcal{L}_{\text{KL}}(q_t, p_t) := \EE_{t \sim [0,T], x_t\sim p_t} \left[\log q_t(x_t)\right]$. We can differentiate through the generated samples $x^\theta_t \sim p_t^\theta$ from the second arguments of $\mathcal L_{\text{KL}}$ losses; however, the term $\mathcal{L}_{\text{KL}}(p_t^\theta, p_t^\theta) = \EE_{t \sim [0,T],x_t^\theta \sim p_t^\theta} \left[\log \left(p_t^\theta(x_t^\theta)\right)\right]$ remains intractable as it involves explicit generated data density $\log (p_t^\theta(\cdot))$ from the first argument. To make this term tractable, we apply the \textit{linearization trick}.  Due to non-negativity of KL divergence, we have the relation:
\begin{align}
    \EE_{t \sim [0,T]} \left[D_{\text{KL}}(p_t||q_t)\right]  =  \mathcal{L}_{\text{KL}}(p_t, p_t) - \mathcal{L}_{\text{KL}}(q_t, p_t) \geq 0, \forall q_t \Rightarrow  \mathcal{L}_{\text{KL}}(p_t, p_t)  = \max{_{q_t}} \mathcal{L}_{\text{KL}}(q_t, p_t), \notag 
\end{align}
where maximum is attained when $q_t = p_t.$ Thus, we substitute the intractable term $\mathcal{L}_{\text{KL}}(p_t^\theta, p_t^\theta)$ with the maximization problem parametrized by the \textit{fake} distribution $q_t$:
\begin{align}
    &\min_\theta \EE_{t \sim [0,T]} \left[ D_{\text{KL}}(p_t^\theta||p_t^*) \right] = \min_\theta\{\mathcal{L}_{\text{KL}}(p_t^\theta, p_t^\theta) - \mathcal{L}_{\text{KL}}(p_t^*, p_t^\theta)\}\notag \\
    &= \min_\theta  \{ \underset{\geq 0}{\underbrace{\max_{q_t} \{\mathcal{L}_{\text{KL}}(q_t, p_t^\theta)\} - \mathcal{L}_{\text{KL}}(p_t^*, p_t^\theta)}} \}
    =\min_\theta  \max_{q_t} \{\mathcal{L}_{\text{KL}}(q_t, p_t^\theta) - \mathcal{L}_{\text{KL}}(p_t^*, p_t^\theta) \} . \label{eq: DMD inv view}
\end{align}
Our min-max formulation of DMD \eqref{eq: DMD inv view} is the special case of the inverse optimization scheme \eqref{eq: inv general form} with the KL loss and teacher $p_t^* = \argmax_{q_t} \mathcal{L}_{\text{KL}}(q_t, p_t^*) $.  The generated data density $p_t^\theta$ appears only during sampling and we can efficiently backpropagate through it. Next, we easily calculate the gradient w.r.t. generator parameters, since we do not need to differentiate through independent $q_t$ which is equal to $p_t^{sg[\theta]}$ in the optimum:
\begin{align}
    \EE_{t \sim [0,T]} \left[\frac{d D_{\text{KL}}(p_t^\theta||p_t^*)}{d \theta}\right] &= \frac{d}{d\theta }[\mathcal{L}_{\text{KL}}(p_t^{sg[\theta]}, p_t^\theta)] - \frac{d}{d\theta }[\mathcal{L}_{\text{KL}}(p_t^*, p_t^\theta)]  \notag \\
     &=\EE_{\substack{t \sim [0,T], z \sim p^\mathcal{Z}, \\ x_0^\theta = G(z), x^\theta_t \sim p_t^\theta}} \biggl[ \frac{d}{d\theta }[\log p_t^{sg[\theta]}(x_t^\theta)] -  \frac{d}{d\theta }[\log p_t^*(x_t^\theta)] \biggl] \notag \\
    &=\EE_{\substack{t \sim [0,T], z \sim p^\mathcal{Z}, \\ x_0^\theta = G(z), x^\theta_t \sim p_t^\theta}} \biggl[(\underset{= s^\theta_t(x_t^\theta)}{\underbrace{\nabla_{x_t^\theta} \log p_t^{sg[\theta]}(x_t^\theta)}} - \underset{= s^*_t(x_t^\theta)}{\underbrace{\nabla_{x_t^\theta} \log p_t^*(x_t^\theta)}}) \frac{dG_\theta(z)}{d\theta}\biggl], \notag
\end{align}
where teacher score $s^* := \argmin_s \mathcal{L}_{\text{DSM}}(s, p_0^*)$ and fake score $s^\theta = \argmin_s \mathcal{L}_{\text{DSM}}(s, p_0^\theta)$ are obtained via minimizing DSM loss \eqref{eq:DSM} instead of maximizing the KL loss directly.

\paragraph{Adding real data.}  Following the logic from \S\ref{sec: unified dist with data}, we can modify the data-free KL loss $\mathcal{L}_{\text{KL}}(q_t, p_t) = \EE_{t \sim [0,T], x_t\sim p_t} \left[\log q_t(x_t)\right]$ with the real data and put it back into the inverse optimization scheme \eqref{eq: DMD inv view}. We propose the following modified loss  with a single parameter $\alpha \in (0,1]$:
\begin{align}
    \mathcal{L}_{\text{R-KL}}^\alpha(q_t, p_t^\theta) &:= \alpha \cdot \mathcal{L}_{\text{KL}}(q_t, p_t^\theta) + (1-\alpha)\cdot\mathcal{L}_{\text{KL}}(q_t, p_t^*) \notag \\
    &=\underset{\text{generated data } p^\theta_t\text{ term}}{\underbrace{\alpha \cdot \EE_{t \sim [0,T], x_t^\theta\sim p_t^\theta} \left[\log q_t(x_t^\theta)\right]}} + \underset{\text{real data }p^*_t\text{ term}}{\underbrace{(1 - \alpha) \cdot \EE_{t \sim [0,T], x_t^*\sim p_t^*} \left[\log q_t(x_t^*)\right]}} \notag \\
    &= \mathcal{L}_{\text{KL}}( q_t, \alpha \cdot p_t^\theta + (1-\alpha) \cdot p_t^*).
\end{align}
With this loss structure, the teacher distribution is preserved: $p^*_t = \argmax_{q_t} \mathcal{L}_{\text{R-KL}}^\alpha(q_t, p_t^*) = \argmax_{q_t} \mathcal{L}_{\text{KL}}(q_t, p_t^*).$ The KL loss $\mathcal{L}_{\text{KL}}(q_t, p_t^\theta)$ is linear in $p_t^\theta$, thus, adding extra $\beta$ coefficient as we did in UM loss \eqref{eq: flow loss data} is not working:
$$
\alpha \cdot \mathcal{L}_{\text{KL}}(q_t, \frac\beta\alpha p_t^\theta) + (1-\alpha)\cdot\mathcal{L}_{\text{KL}}(q_t, \frac{1-\beta}{1-\alpha} p_t^*) = \beta \cdot \mathcal{L}_{\text{KL}}(q_t, p_t^\theta) + (1-\beta)\cdot\mathcal{L}_{\text{KL}}(q_t, p_t^*).
$$
Putting the $\mathcal{L}_{\text{R-KL}}$ loss back into the inverse scheme \eqref{eq: DMD inv view}, we explicitly obtain the KL divergence which minimizes the difference between the real data, and the mix of real and generated data:
\begin{align}
    &\min_\theta  \max_{q_t} \{\mathcal{L}^\alpha_{\text{R-KL}}(q_t, p_t^\theta) - \mathcal{L}^\alpha_{\text{R-KL}}(p_t^*, p_t^\theta) \} \notag \\&= \min_\theta  \max_{q_t} \{\mathcal{L}_{\text{KL}}(q_t, \ \alpha \cdot p_t^\theta + (1-\alpha) \cdot p_t^*) - \mathcal{L}_{\text{KL}}(p_t^*,\ \alpha \cdot p_t^\theta + (1-\alpha) \cdot p_t^*) \} \notag \\
    &= \min_\theta  \{\mathcal{L}_{\text{KL}}(\alpha \cdot p_t^\theta + (1-\alpha) \cdot p_t^*, \ \alpha \cdot p_t^\theta + (1-\alpha) \cdot p_t^*) - \mathcal{L}_{\text{KL}}(p_t^*,  \ \alpha \cdot p_t^\theta + (1-\alpha) \cdot p_t^*) \} \notag \\
    &= \min_\theta \EE_{t \sim [0,T]} \left[ D_{\text{KL}}(\alpha \cdot p_t^\theta + (1 - \alpha) \cdot p_t^*||p_t^*) \right]. \label{eq: DMD inv view real data} 
\end{align}
Similarly, we calculate the gradient over generator parameters of our modified DMD \eqref{eq: DMD inv view real data}:
\begin{align}
    &\EE_{t \sim [0,T]} \left[\frac{d D_{\text{KL}}(\alpha \cdot p_t^\theta + (1 - \alpha) \cdot p_t^*||p_t^*)}{d \theta}\right] \notag \\
    &= \frac{d}{d\theta }[\mathcal{L}_{\text{KL}}(\alpha \cdot p_t^{sg[\theta]} + (1 - \alpha) \cdot p_t^*, \alpha \cdot p_t^\theta + (1 - \alpha) \cdot p_t^*)] - \frac{d}{d\theta }[\mathcal{L}_{\text{KL}}(p_t^*, \alpha \cdot p_t^\theta + (1 - \alpha) \cdot p_t^*)]  \notag \\
     &=\alpha \cdot \EE_{\substack{t \sim [0,T], z \sim p^\mathcal{Z}, \\ x_0^\theta = G(z), x^\theta_t \sim p_t^\theta}} \biggl[ \frac{d}{d\theta }[\log (\alpha \cdot p_t^{sg[\theta]} (x_t^\theta) + (1 - \alpha) \cdot p_t^*(x_t^\theta))] -  \frac{d}{d\theta }[\log p_t^*(x_t^\theta)] \biggl] \notag \\
    &=\alpha \cdot \EE_{\substack{t \sim [0,T], z \sim p^\mathcal{Z}, \\ x_0^\theta = G(z), x^\theta_t \sim p_t^\theta}} \biggl[(\underset{= s^{\theta,\alpha}_t(x_t^\theta)}{\underbrace{ \nabla_{x_t^\theta} \log(\alpha \cdot p_t^{sg[\theta]}(x_t^\theta) + (1-\alpha) \cdot p_t^{*}(x_t^\theta))}} - \underset{= s^*_t(x_t^\theta)}{\underbrace{\nabla_{x_t^\theta} \log p_t^*(x_t^\theta)}}) \frac{dG_\theta(z)}{d\theta}\biggl]. \notag
\end{align}
The score $s_t^{\theta, \alpha}$ of the mixed data $\alpha \cdot p_t^\theta + (1 - \alpha) \cdot p_t^*$ can be found using the DSM loss modified with the real data:
\begin{eqnarray}
    \mathcal{L}^{\alpha}_{\text{R-DSM}}(s, p_0^\theta) &:=& \underset{\text{generated data } p^\theta_0\text{ term}}{\underbrace{ \alpha\cdot\EE_{t \sim [0,T]} \EE_{x^\theta_0 \sim p^\theta_0, x^\theta_t \sim p^\theta_t(\cdot| x_0)} \left[\gamma_t \| s_t(x^\theta_t)  - s^\theta(x^\theta_t|x^\theta_0) \|^2\right]}}\notag \\
     &+&   \underset{\text{real data }p^*_0\text{ term}}{\underbrace{(1-\alpha)\cdot \EE_{t \sim [0,T]} \EE_{x^*_0 \sim p^*_0, x_t^{*} \sim p_t^{*}(\cdot| x^*_0)}\left[\gamma_t\| s_t(x^*_t) -  s^*_t(x^*_t|x_0^*) \|^2\right]}}.\notag
    \end{eqnarray}

We sum up all conclusions in the lemma below.
\begin{lemma}[\textbf{DMD with real data}] Consider real data  $p^*_0 \in \mathcal{P}(\R^D)$ and generated data $p_0^\theta \in \mathcal{P}(\R^D).$ Then, KL divergence between mixed and real data for $\alpha \in (0,1]$ has the following gradient with modified student score $s_t^{\theta, \alpha} := \argmin_s \mathcal{L}^{\alpha}_{\text{R-DSM}}(s, p_0^\theta)$ and teacher score $s_t^* := \argmin_s \mathcal{L}_{\text{DSM}}(s, p_0^*)$:
\begin{eqnarray}
   \EE_{t \sim [0,T]} \left[ \frac{d D_{\text{KL}}(\alpha \cdot p_t^\theta + (1 - \alpha) \cdot p_t^* || p_t^*)}{d \theta}\right]  =  \EE_{\substack{t \sim [0,T], z \sim p^{\mathcal{Z}}, \\ x_0^\theta = G_\theta(z),  x^\theta_t \sim p_t^\theta}} \left[\alpha (s_t^{\theta, \alpha}(x_t^\theta) - s_t^*(x_t^\theta)) \frac{dG_\theta}{d\theta}\right]. \notag
\end{eqnarray}
\end{lemma}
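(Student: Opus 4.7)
The plan is to proceed in four steps. First, I would identify the minimizer $s_t^{\theta,\alpha}$. The loss $\mathcal{L}^{\alpha,\alpha}_{\text{M-DSM}}(s,p_0^\theta)$ is an $\alpha$-weighted convex combination of two standard DSM losses, one on $p_0^\theta$ and one on $p_0^*$. Since both use the \emph{same} Gaussian transition kernel $p_t(\cdot|x_0)$, this combined loss equals the standard DSM loss on the mixed initial distribution $\alpha p_0^\theta+(1-\alpha)p_0^*$, whose forward marginal at time $t$ is exactly $q_t^\theta := \alpha p_t^\theta+(1-\alpha)p_t^*$. Hence by the well-known property of denoising score matching, the pointwise minimizer is $s_t^{\theta,\alpha}(x)=\nabla_x\log q_t^\theta(x)$.

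Second, I would apply a standard KL-differentiation identity to $D_{\text{KL}}(q_t^\theta\|p_t^*)$ (where only the first argument depends on $\theta$). Expanding and using the score-function identity $\int q_t^\theta(x)\,\partial_\theta\log q_t^\theta(x)\,dx=\partial_\theta\int q_t^\theta(x)\,dx=0$, one obtains
\begin{equation*}
\frac{d}{d\theta}D_{\text{KL}}(q_t^\theta\|p_t^*)=\int \partial_\theta q_t^\theta(x)\,\log\frac{q_t^\theta(x)}{p_t^*(x)}\,dx = \alpha \int \partial_\theta p_t^\theta(x)\,\log\frac{q_t^\theta(x)}{p_t^*(x)}\,dx,
\end{equation*}
where the last equality uses $\partial_\theta q_t^\theta=\alpha\,\partial_\theta p_t^\theta$ because $p_t^*$ is $\theta$-independent.

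Third, I would convert the integral into a reparameterized expectation over the generator. Because the Gaussian diffusion kernel makes $x_t^\theta=G_\theta(z)+\sigma_t\epsilon$ a reparameterization of samples from $p_t^\theta$, for any \emph{fixed} function $F$ not depending on $\theta$ we have the identity
\begin{equation*}
\int \partial_\theta p_t^\theta(x)\,F(x)\,dx = \partial_\theta \mathbb{E}_{x_t^\theta\sim p_t^\theta}[F(x_t^\theta)]=\mathbb{E}_{z\sim p^{\mathcal{Z}},\,\epsilon}\!\left[\nabla_x F(x_t^\theta)\cdot \frac{dG_\theta(z)}{d\theta}\right].
\end{equation*}
I would then apply this with $F(x)=\log\frac{q_t^\theta(x)}{p_t^*(x)}$ \emph{evaluated at the current $\theta$} (treated as stop-gradient). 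This is legitimate because the self-derivative of the log term through $\theta$ has already been eliminated in Step~2. Taking the gradient in $x$ yields $\nabla_x F(x)=\nabla_x\log q_t^\theta(x)-\nabla_x\log p_t^*(x)=s_t^{\theta,\alpha}(x)-s_t^*(x)$ by Step~1.

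Fourth, I would combine everything and take the expectation over $t\sim[0,T]$, which gives the stated formula. The main obstacle I anticipate is carefully justifying Step~3: namely, that one may ``freeze'' the $\theta$ inside the log term when pushing the derivative through via reparameterization, and that the score-trick cancellation in Step~2 is valid under the absolute-continuity assumption (guaranteed here because $p_t^*$ has full support on $\mathbb{R}^D$, so $q_t^\theta$ is absolutely continuous w.r.t.\ $p_t^*$). The rest is bookkeeping using the definition of $s_t^{\theta,\alpha}$ established in Step~1.
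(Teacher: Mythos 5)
Your proposal is correct and follows essentially the same route as the paper: both identify the minimizer of $\mathcal{L}^{\alpha,\alpha}_{\text{M-DSM}}$ as the score of the mixture $\alpha p_t^\theta+(1-\alpha)p_t^*$ (the paper writes it as the density-weighted average of $\nabla\log p_t^\theta$ and $\nabla\log p_t^*$, which is algebraically the same thing), and both compute the KL gradient by the reparameterization trick with the log-ratio term frozen, the paper doing so via the functional-derivative lemmas of \citet{wang2023prolificdreamer}. Your observation that the weighted loss is just ordinary DSM on the mixed initial distribution is a slightly cleaner way to get the optimal fake score than the paper's pointwise quadratic minimization, but the resulting argument is the same.
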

Even though this approach is theoretically justified, it requires coefficients $\alpha = \beta$ which \underline{work poorly} for our RealUID; see Table \ref{tab: ablation table}.  

\paragraph{Extension to Bregman divergences.} KL divergence belongs to a large family of distance functions called Bregman divergences. A divergence $D_\Psi(p_t^\theta, p_t^*)$ is determined by a convex function $\Psi$:
\begin{eqnarray}
    D_\Psi(p_t^\theta, p_t^*) := \Psi(p_t^\theta) - \Psi(p_t^*) - \langle \nabla \Psi(p_t^*) ,p_t^\theta - p_t^*\rangle \geq 0. \label{eq: Bregman div}
\end{eqnarray}
For KL divergence, we have the function $\Psi(p) = \int \log(p(x))p(x)dx.$ In \eqref{eq: Bregman div}, the only non-linear and intractable term w.r.t. $p_t^\theta$ is the function $\Psi(p_t^\theta)$, but we can similarly make it tractable via the linearization trick:
$$\Psi(p_t^\theta) = \max_{q_t} \{\Psi(q_t) + \langle \nabla \Psi(q_t) ,p_t^\theta - q_t\rangle \},$$
and build a general inverse distillation scheme:
$$\min_\theta \EE_{t} [D_\Psi(p_t^\theta, p_t^*)] = \min_\theta \max_{q_t} \{\underset{=:\mathcal{L}_\Psi(q_t, p_t^\theta)}{\underbrace{\EE_{t} [\Psi(q_t) + \langle \nabla \Psi(q_t) ,p_t^\theta - q_t\rangle]}}  \ - \ \underset{=:\mathcal{L}_\Psi(p_t^*, p_t^\theta)}{\underbrace{\EE_{t } [\Psi(p_t^*) +\langle \nabla \Psi(p_t^*) ,p_t^\theta - p_t^*\rangle]}} \}.$$

\newpage
\section{RealUID Algorithm for flow matching models} \label{sec: M-UID for FM} We provide a practical implementation of our RealUID approach for flow matching models in Algorithm \ref{alg:M-UID FM}. In the loss functions, we retain only the terms dependent on the target parameters. For the fake model, we reformulate the maximization objective as a minimization. We use alternating optimization, updating the fake model $K$ times per one student update for stability.

\label{app: algorithm}
\begin{algorithm}[ht!]
\caption{\algname{Real data modified Unified Inversion Distillation (RealUID) for Flow Matching} }
\label{alg:M-UID FM}   
\begin{algorithmic}[1]
\REQUIRE teacher drift $u^*$, student generator $G_\theta$, fake drift $u_\psi$, real data $p_0^*$, coefficients $\alpha, \beta \in (0,1]$, generator update steps $K$, number of iterations $N$, batch size $B$, fake drift minimizer $Opt_{st}$, generator minimizer $Opt_{gen}$, latent distribution $p^\mathcal{Z}$, noise distribution $p_1$.
\FOR{$n=0,\ldots, N-1$}
\STATE Sample  generated batch $\{x^\theta_{0,i} = G_\theta(z_i)\}_{i=1}^B$, $z_i \sim p^{\mathcal{Z}}$ and noise batch $\{x_{1,i}\}_{i=1}^B \sim p_1$;

\STATE Sample time batch $\{t_i\}_{i=1}^B \sim Uniform[0,1]$ and calculate  $x^\theta_{t_i, i} = (1-t_i)x^\theta_{0,i} + t_i x_{1,i}$;

\IF{student step $(n \% (K+1) \neq 0)$}

\STATE Sample real data batch $\{ x^*_{0,i}\}_{i=1}^B \sim p_0^*$ and calculate  $x^*_{t_i, i} = (1-t_i)x^*_{0,i} + t_i x_{1,i}$;
\STATE Update fake drift parameters $\psi$ via minimizer $Opt_{st}$ step with gradients of 
$$ \scriptsize \frac1B\sum\limits^B_{i=1}\!\left[\alpha\| u_\psi(t_i, x^{sg[\theta]}_{t_i,i}) \!-\!\frac{\beta}{\alpha} (x_{1,i}\!-\!x^{sg[\theta]}_{0,i})\|^2\!+\!(1-\alpha)\| u_\psi(t_i, x^*_{t_i,i})\!-\!\frac{1\!-\!\beta}{1\!-\!\alpha}  (x_{1,i}\!-\!x^*_{0,i}) \|^2\right];$$
\ELSE
\STATE Update generator parameters $\theta$ via minimizer $Opt_{gen}$ step with gradients of 
$$ \frac1B \sum\limits^B_{i=1} \left[\alpha\| u^*(t_i, x^{\theta}_{t_i,i})  - \frac{\beta}{\alpha} (x_{1,i} - x^{\theta}_{0,i})\|^2 - \alpha\| u_{sg[\psi]}(t_i, x^{\theta}_{t_i,i})  - \frac{\beta}{\alpha} (x_{1,i} - x^{\theta}_{0,i})\|^2\right];$$
\ENDIF
\ENDFOR 
\end{algorithmic}
\end{algorithm}

\newpage
\section{Unified Inverse Distillation with real data for Bridge Matching and Stochastic Interpolants} \label{app:UID-bridge-interpolants}
\subsection{Bridge Matching}\label{app: bridge matching}
Bridge Matching \citep{liu2022let, peluchetti2023non} is an extension of diffusion models specifically design to solve data-to-data, e.g., image-to-image problems. Typically, the distribution $p_T$ is the distribution of "corrupted data" and $p_0$ is the distribution of clean data, furthermore, there is some coupling of clean and corrupted data $\pi(x_0, x_T)$ with marginals $p_0(x_0)$ and $p_T(x_T)$. To construct the diffusion which recovers clean data given a corrupted data, one first needs to build prior process (which often is the same forward process used in diffusions):
\begin{equation}
    dx_t = f_t(x_t) dt + g_td\text{w}_t,
    \nonumber
\end{equation}
where $f_t(\cdot)$ is a drift function, $g_t$ is a time-dependent scalar noise scheduler and $\text{w}_t$ is a standard Wiener process. This prior process defines conditional density $p_t(x_t|x_0)$ and the posterior density $p_t(x_t|x_0, x_T)$ called "diffusion bridge". To recover  $p_0$ from $p_T$, one can use reverse-time SDE with a reverse-time Wiener process $\bar{\text{w}}_t$:
\begin{equation}
    dx_t = \left(f_t(x_t) - g^2_t \cdot u_t(x_t) \right)dt + g_t d\bar{\text{w}}_t,
    \nonumber
\end{equation}
where the drift $u_t(x_t)$ is learned via solving of the bridge matching problem:
\begin{equation}
    \mathcal{L}_{\text{BM}}(v, \pi) = \mathbb{E}_{t\sim [0,T], (x_0, x_T) \sim \pi, x_t \sim p_t(\cdot|x_0, x_T)} \left[ \|v_t(x_t) - \nabla_{x_t} \log p_t(x_t|x_0) \|^2 \right].
\end{equation}
However, this reverse-time diffusion in general does not guarantee that the produced samples come from the same coupling $\pi(x_0, x_T)$ used for training. It happens only if $\pi(x_0, x_T)$ solves entropic optimal transport between $p_0$ and $p_T$. To guarantee the preservance of the coupling $\pi(x_0, x_T)$, there exists another version of Bridge Matching called either Augmented Bridge Matching or Conditional Bridge Matching \citep{de2023augmented}, which differs only by addition of a condition on $x_T$ to the trainable drift $v_t(x_t, x_T)$:
\begin{equation}
    \mathcal{L}_{\text{ABM}}(v, \pi) = \mathbb{E}_{t\sim [0,T], (x_0, x_T) \sim \pi, x_t \sim p(\cdot|x_0, x_T)} \left[  \|v_t(x_t, x_T) - \nabla_{x_t} \log p_t(x_t|x_0) \|_2^2 \right].
    \nonumber
\end{equation}
The learned conditional drift $u(x_t, x_T)$ is then used for sampling via the reverse-time SDE starting from a given $x_T \sim p_T$:
\begin{equation}
    dx_t = \left(f_t(x_t) - g^2_t \cdot u_t(x_t, x_T) \right)dt + g_t d\bar{\text{w}}_t.
    \nonumber
\end{equation}

\subsection{Stochastic Interpolants}\label{app: stoch inter}
The Stochastic Interpolants framework generalizes Flow Matching and diffusion models, constructing a diffusion or flow between two given distributions $p_0$ and $p_T$. To do so, one needs to consider the interpolation between any pair of points $(x_0, x_T)$ which are sampled from the coupling $ \pi(x_0, x_T)$ with marginals $p_0$ and $p_T$. The interpolation itself is given by formula
$$
    x_t = I(t, x_0, x_T) + \gamma_t \epsilon, \quad \epsilon \sim \mathcal{N}(0, \mathbf{I}), \quad t \in [0, T],
$$
where $I(0, x_0, x_T) = x_0$, $I(T, x_0, x_T) = x_T$, $\gamma_0 = \gamma_T = 0$ and $\gamma_t > 0$ for all $t \in (0, T)$. This interpolant defines a conditional Gaussian path $p_t(x_t |x_0, x_T)$. Note that in the original paper \citep{albergo2023stochastic}, the authors consider the time interval $[0, 1]$, but those two intervals are interchangeable by using a change of variable $t' = \frac{T}{t}$. Thus, the ODE interpolation between $p_0$ and $p_T$ is given by:
\begin{equation}
    dx_t = u_t(x_t)dt, \quad x_0 \sim p_0,
    \nonumber
\end{equation}
where $u_t(x,x_T ) := \mathbb{E}[\dot{x}_t| x_t = x] = \mathbb{E}[\partial_t I(t, x_0, x_T) + \dot{\gamma}_t\epsilon|x_t = x]$  is the unique minimizer of the quadratic objective:
\begin{equation}
    \mathcal{L}_{\text{SI}}(v, \pi) = \mathbb{E}_{\substack{t\sim [0,T], (x_0, x_T) \sim \pi, \\ (x_t, \epsilon) \sim p(\cdot|x_0, x_T)}} \left[  \|v_t(x_t, x_T) - (\partial_t I(t, x_0, x_T) + \dot{\gamma}_t \epsilon) \|^2 \right].
\end{equation}
The authors also provide a way of matching the score and the SDE drift of the reverse process by solving similar MSE matching problems.

\subsection{Objective for general data coupling} \label{app: RealUID coupling}
The essential difference of Bridge Matching and Stochastic Interpolants from diffusion models and Flow Matching with a Gaussian path is that they additionally introduce coupling $\pi(x_0, x_T)$ used to sample $x_t$ and can work with conditional drifts. 

This difference can be easily incorporated to our RealUID distillation framework just by parametrizing the generator $G_{\theta}$ to output not the samples from the initial distribution $p_0^{\theta}$, but from the coupling $\pi^{\theta}$. One can do it by setting $\pi^{\theta}(x_0, x_T) = p_T(x_T)\pi_0^{\theta}(x_0|x_T)$, where conditional data distribution  $\pi_0^{\theta}(x_0|x_T)$ is parametrized by the \textit{student generator} $G_{\theta} : \mathcal{Z}  \times \mathbb{R}^D \rightarrow \mathbb{R}^D$ conditioned on a sample $x_T \sim p_T$. This approach is specifically used in Inverse Bridge Matching Distillation (IBMD) \citep{gushchin2024entropic}. Hence, our Universal Inverse Distillation objective can be written just by substituting student distribution $p_0^{\theta}$ by student coupling $\pi^{\theta}$, substituting real data $p_0^{*}$ by real data coupling $\pi^{*}$ and adding extra conditions.
\begin{definition}\label{def: M-UM loss general coupling}
    We define \textbf{Universal Matching loss with real data for general coupling} on generated data coupling $\pi^\theta \in \mathcal{P}(\R^D \times \R^D)$  with $\alpha, \beta \in (0,1]$:
\begin{align}
\mathcal{L}^{\alpha, \beta}_{\text{R-UM-coup}}(f, \pi^\theta) &= \underset{\text{generated data } \pi^\theta \text{ term}}{\underbrace{\alpha\cdot \EE_{t \sim [0,T]} \EE_{\substack{x_T\sim p_T, x^\theta_0 \sim \pi_0^\theta(\cdot|x_T), \\ x^\theta_t \sim p^\theta_t(\cdot|x^\theta_0, x_T)}} \left[\| f_t(x^\theta_t, x_T)  - \frac{\beta}{\alpha}f^\theta(x^\theta_t|x^\theta_0, x_T) \|^2\right]}} \notag \\
     &+   \underset{\text{real data }\pi^*\text{ term}}{\underbrace{(1-\alpha)\cdot\EE_{t \sim [0,T]} \EE_{\substack{x_T\sim p_T, x^*_0 \sim \pi_0^*(\cdot|x_T), \\ x^*_t \sim p^*_t(\cdot|x_0, x_T)}}\left[\| f_t(x^*_t, x_T) - \frac{1 -\beta}{1 - \alpha}  f^*_t(x^*_t|x_0^*, x_T) \|^2\right]}}.
     \nonumber
\end{align}
And the corresponding \textbf{Universal Inverse Distillation loss with real data for general coupling} is:
$$\min_\theta \max_f \{\mathcal{L}^{\alpha,\beta}_{\text{R-UID-coup}}(f, \pi^\theta) := \mathcal{L}^{\alpha,\beta}_{\text{R-UM-coup}}(f^*, \pi^\theta) - \mathcal{L}^{\alpha,\beta}_{\text{R-UM-coup}}(f, \pi^\theta)\}.$$
\end{definition}
In case of coupling match $\pi^\theta = \pi^*$, the RealUID loss for couplings attains its minimum, i.e.,
\begin{eqnarray}
    \min_\theta \max_f \mathcal{L}^{\alpha,\beta}_{\text{R-UID-coup}}(f, \pi^\theta) &=& \min{_{\theta}}\{\underset{\geq 0 }{\underbrace{\mathcal{L}^{\alpha,\beta}_{\text{R-UM-coup}}(f^*, \pi^\theta) - \min{_f}\{\mathcal{L}^{\alpha,\beta}_{\text{R-UM-coup}}(f, \pi^\theta)\}}}\} \notag \\
    &=& \mathcal{L}^{\alpha,\beta}_{\text{R-UM-coup}}(f^*, \pi^*) - \underset{=\mathcal{L}^{\alpha,\beta}_{\text{R-UM-coup}}(f^*, \pi^*)}{\underbrace{\min{_f}\{\mathcal{L}^{\alpha,\beta}_{\text{R-UM-coup}}(f, \pi^*)\}}} = 0. \notag
\end{eqnarray}


It is worth noting that the inverse distillation from the IBMD framework can be extended to discrete-time Bridge Matching models, as is done in \textbf{Residual Shifting Distillation}~\cite[\textbf{RSD}]{selikhanovych2025one}. Likewise, our RealUID objective can be extended to discrete-time models as well.

\newpage
\section{Experimental details and additional results} \label{app: exp details and res}

\subsection{CIFAR-10 distillation from scratch} \label{app: experimental details}

\paragraph{Codebase, dataset and teachers.}
Building on the reference codebase and network architectures of \citep{tong2024improving},
we implement the training algorithm described in our Algorithm~\ref{alg:M-UID FM}. We evaluate the resulting approach on CIFAR-10 (32×32), under both conditional and unconditional settings, benchmarking against established baselines. The codebase implementation is publicly available in
\begin{center}
\url{https://github.com/atong01/conditional-flow-matching}.
\end{center}
Note that in this codebase, the \underline{time flow is reversed}, i.e., the time $t=0$ corresponds to the pure noise, while the time $t=1$ is the real data. As an unconditional teacher, we use already trained Conditional Flow Matching checkpoints from the above repository. For conditional setup, we slightly modify the original code and train our own teacher. Our trained checkpoints, along with the code, are located in
\begin{center}
    \url{https://github.com/David-cripto/RealUID}.
\end{center}

\paragraph{Training hyperparameters.}
We train both our models with Adam \citep{kingma2014adam}, using the same momentums $(\beta_1,\beta_2)=(0,0.999)$, learning rate $3\times10^{-5}$ and a 500-step linear warm-up. 
Similar to SiD framework \citep{zhou2024adversarial}, we do not recommend setting momentum $\beta_1 \neq 0$ as it is crucial for a successful convergence in our min-max optimization. 

To regulate adaptation between the generator and the fake model, the generator is updated once for every $K=5$ updates of the fake model, following DMD2 \citep{yin2024improved}.
While the SiD framework leverages an EDM architecture \citep{karras2022elucidating} and updates the generator after a single update of the fake model $(K=1)$, our RealUID approach becomes unstable for values $K<3$ due to the different \citep{tong2024improving} architecture.

We \underline{do not use dropout} in generator and fake models. We set a batch size of 256 and maintain an EMA of the generator parameters with decay $0.999$ \citep{hunter1986exponentially}. Additionally, at each optimization step we apply $\ell_2$ gradient-norm clipping with threshold $1.0$ to both the generator and the fake model.

\paragraph{Training time.} All distillation experiments were trained for 500,000 gradient updates, corresponding to approximately 5 days.  The experiments were executed on a \underline{single} Ascend910B NPU with 65 GB of VRAM memory.

\paragraph{Generator parameterization and models initialization.}
We parameterize generator $G_\theta(\cdot)$ using a time-dependent U-Net $g_\theta(0, \cdot)$ with a fixed time input $t = 0$ and a one-step integration scheme:
\[
G_{\theta}(z) = z + g_{\theta}(0, z).
\]
We initialize the model $g_\theta$ with a teacher model, and the fake model with \underline{random weights}. Empirically, we observe that this initialization strategy lead to improved performance on the considered datasets.

\paragraph{GAN details.}
We integrate a GAN loss into our framework in line with SiD$^2$A and DMD2 
\citep{zhou2024adversarial, yin2024improved}. In the original setup of \citep{zhou2024adversarial}, the adversarial loss employs a coefficient ratio of $\lambda_{\text{adv}}^{D}/\lambda_{\text{adv}}^{G_{\theta}} = 10^2$ (see Table 6 in \cite{zhou2024adversarial}), a choice that poses practical difficulties due to the extreme imbalance between the generator and discriminator losses. To mitigate this issue, we adopt the formulation of \citep{yin2024improved}, where the ratio is $\approx3$, and evaluate different coefficient scales (see the results in Table \ref{tab: ablation table}). Additionally, we can select the range of times within which adversarial loss is applied between noised generated and real data samples. We found that the best choice is not to take only clear real data or the whole interval [0,1], but rather to take the range of not severely corrupted data, namely times from 0.8 to 1.

\paragraph{Evaluation protocol.}
We evaluate image quality using the Fréchet Inception Distance (FID; \citealp{heusel2017gans}), computed from 50,000 generated samples following \citep{karras2022elucidating, karras2020analyzing, karras2019style}. In line with SiD \citep{zhou2024score}, we periodically compute FID during distillation and select the checkpoint achieving the minimum value. To ensure statistical reliability, we repeat the evaluation over 3 independent runs, rather than 10 as in SiD, because the empirical variance of FID in our experiments was below $0.01$. 

\paragraph{Efficiency comparison.}  In terms of efficiency, {RealUID} leverages a lightweight architecture based on \citep{tong2024improving}. Therefore, as summarized in Table~\ref{tab: computational comparisons}, it achieves nearly $2\times$ faster inference, lower memory usage, and reduced model size compared to recent distillation approaches \citep{zhou2024score, zhou2024adversarial, huang2024flow}. 

\begin{table*}[h!]
\vspace{-1mm}
\begin{minipage}{1\textwidth}

\begin{adjustbox}{width=\linewidth,
center}
\begin{tabular}{lccccc}
\toprule[1.5pt]
Methods  & Inference Time (ms) & \# Total Param (M) & Max GPU Mem Alloc (MB) & Max GPU Mem Reserved (MB) \\
\midrule
RealUID (\textbf{Ours}) & \textbf{18.636} & \textbf{36.784} & \textbf{165} & \textbf{172} \\
$\substack{\text{FGM \citep{huang2024flow}} \\ \text{SiD \citep{zhou2024score, zhou2024adversarial}}}$ & 30.745          & 55.734       & 242 & 276 \\
\bottomrule[1.5pt]
\end{tabular}
\end{adjustbox}%
\vspace{-1.5mm}
\caption{\small Inference complexity on an Ascend 910B3 (65 GB) NPU. All methods require only 1 NFE. For each method, we report \textbf{(i)} the mean inference time per image (bs=1, fp32), averaged over 10{,}000 iterations; \textbf{(ii)} the total number of parameters (Millions); and \textbf{(iii)} peak NPU memory usage (maximum allocated and reserved, in MB). Best values are \textbf{bolded}.\label{tab: computational comparisons}}
\end{minipage}

\end{table*}




\subsection{CIFAR-10 distillation fine-tuning} 
\label{sec: fine-tuning ablation}
This section presents an ablation study of the fine-tuning stage over the loss-balancing coefficients for GANs and our RealUID on CIFAR-10. In this stage, the generator is initialized from the best-performing checkpoint obtained during training from scratch of the corresponding framework, while the fake model is initialized from the teacher model. In the unconditional setup, the best configuration are RealUID with $(\alpha = 0.92, \beta = 0.94)$ and FID $2.22$, and GAN with $(\lambda_{\text{adv}}^{G_{\theta}} = 0.3, \lambda_{\text{adv}}^{D} = 1)$ and FID $2.29$. In the unconditional setup, it is RealUID with $(\alpha = 0.98, \beta = 0.96)$ and FID $2.02$, and GAN with $(\lambda_{\text{adv}}^{G_{\theta}} = 0.3, \lambda_{\text{adv}}^{D} = 1)$ and FID $2.12$. Fine-tuning then proceeds with new values $\alpha_{\text{FT}}$ and $\beta_{\text{FT}}$ for our RealUID and $\lambda^{G_\theta}_{\text{FT}}$ and $\lambda^{D}_{\text{FT}}$ for GANs. The results are summarized in Table~\ref{tab: ablation ft table}.

\begin{table*}[h!]
\caption{\small Ablation of the fine-tuning parameters $(\alpha_{\text{FT}}, \frac{\beta_{\text{FT}}}{\alpha_{\text{FT}}})$ for our RealUID and fine-tuning scales $(\lambda_{\text{FT}}^{G_{\theta}} , \lambda_{\text{FT}}^{D})$ for GANs for unconditional (left) and conditional (right) generation. All values report FID\,\(\downarrow\), where lower is better. The mark “–” indicates that configuration is infeasible, and the mark \rlb{“–”} shows that the method did not converge. Best results for each method are \textbf{bolded}. }
\begin{minipage}{0.55\textwidth}

\begin{adjustbox}{width=0.85\linewidth}

\begin{tabular}{lrrrrrrrrr}
\toprule[1.5pt]
 $\alpha_{\text{FT}} \backslash \frac{\beta_{\text{FT}}}{\alpha_{\text{FT}}}$& 0.92 & 0.94 & 0.96 & 0.98 & 1.0 & 1.02 & 1.04 & 1.06 & 1.08\\
\midrule
0.92 & 1.99 & \textbf{1.98} & 2.02 &\rlb{–}& \rlb{–} & \rlb{–}& 2.04 & 2.04 & 2.02\\
 0.94 & 2.02 & 2.02 & 2.04 &  \rlb{–}& \rlb{–} & \rlb{–} & 2.07 & 2.06 & - \\
0.96  &2.06 &2.04&2.09&\rlb{–}& \rlb{–} & \rlb{–}& 2.08 & -  & -\\
0.98   & 2.07&2.05&2.07&\rlb{–}& \rlb{–} & \rlb{–} &- & - & - \\

 \bottomrule[1.5pt]
\end{tabular}
\end{adjustbox}
\vspace{10pt}

\begin{adjustbox}{width=0.85\linewidth}

\begin{tabular}{lrrrrrr}
\toprule[1.5pt]
 $\lambda_{\text{FT}}^{G_{\theta}}$ & 0.1 & 0.3 & 1.0 & 5.0 & 25.0 & 100.0 \vspace{3pt} \\

  $\lambda_{\text{FT}}^{D}$ & 0.3 & 1.0 & 3.0 & 15.0 & 75.0 & 300.0 \\
\midrule
FID\(\downarrow\) & \rlb{–} & \rlb{–}& \rlb{–} & 2.25& \textbf{2.10} & 2.12\\

 \bottomrule[1.5pt]
\end{tabular}
\end{adjustbox}
\end{minipage}\hfill
\begin{minipage}{0.55\textwidth}
\begin{adjustbox}{width=0.85\linewidth}

\begin{tabular}{lrrrrrrrrr}
\toprule[1.5pt]
$\alpha_{\text{FT}} \backslash \frac{\beta_{\text{FT}}}{\alpha_{\text{FT}}}$& 0.92 & 0.94 & 0.96 & 0.98 & 1.0 & 1.02 & 1.04 & 1.06 & 1.08\\
\midrule
0.92 & 1.92 & 1.91& 1.99 &\rlb{–}& \rlb{–} & \rlb{–}& 1.96 & 1.94 & 1.92\\
 0.94 & 1.92 & 1.90 & 1.88 &  \rlb{–}& \rlb{–} & \rlb{–} & 1.96 & 1.91 & - \\
0.96  & 1.93 &1.94&\textbf{1.87}&\rlb{–}& \rlb{–} & \rlb{–}& 1.96 & -  & -\\
0.98   & 1.91 &1.95&1.95&\rlb{–}& \rlb{–} & \rlb{–} &- & - & - \\

 \bottomrule[1.5pt]
\end{tabular}
\end{adjustbox}

\vspace{10pt}

\begin{adjustbox}{width=0.85\linewidth}

\begin{tabular}{lrrrrrr}
\toprule[1.5pt]
 $\lambda_{\text{FT}}^{G_{\theta}}$ & 0.1 & 0.3 & 1.0 & 5.0 & 25.0 & 100.0 \vspace{3pt} \\

  $\lambda_{\text{FT}}^{D}$ & 0.3 & 1.0 & 3.0 & 15.0 & 75.0 & 300.0 \\
\midrule
FID\(\downarrow\) & \rlb{–} & \rlb{–}& \rlb{–} & 1.94& \textbf{1.88} & 2.04\\

 \bottomrule[1.5pt]
\end{tabular}
\end{adjustbox}
\end{minipage}

\label{tab: ablation ft table}
\end{table*}

We observe that fine-tuning is highly sensitive to the choice of factor $\frac{\beta_{\text{FT}}}{\alpha_{\text{FT}}}$ which still brings the main impact. The best factors $\frac{\beta_{\text{FT}}}{\alpha_{\text{FT}}} = 0.94$ or $\frac{\beta_{\text{FT}}}{\alpha_{\text{FT}}} =  1.06$ are much farther from $1.0$ compared to training from scratch (Table~\ref{tab: ablation table}), i.e., fine-tuning relies more on information from real data rather than on guidance from a teacher. Meanwhile, configurations closer to $1.0$ are unstable, underscoring the crucial role of real data. In the case of GANs, small adversarial losses similarly fail to converge, and only high scales which particularly emphasize real data achieve improvement.


\paragraph{Training details.} We run fine-tuning with a smaller learning rate $1 \times 10^{-5}$ and without warm-up. All other details remain the same as described in Appendix \ref{app: experimental details} for training from scratch.

\paragraph{Training time.} All fine-tuning experiments were conducted for 100,000 gradient updates, which took a little more than 1 day, starting from the best distillation checkpoints. The experiments were executed on a \underline{single} Ascend910B NPU with 65 GB of VRAM memory.

\color{black}
\subsection{CelebA distillation}
\label{sec: experiments celeba}

In this section, we present the results of the same ablation study from (\S\ref{sec: experiments unified benchmarking}) on the CelebA dataset with higher $64 \times 64$ resolution \citep{liu2015faceattributes}. The results are summarized in Table~\ref{tab: ablation table celeba}. Similar to Table \ref{tab: ablation table} for CIFAR10, the same pairs of coefficients with $\nicefrac{\beta}{\alpha} = 1.02$ or $\nicefrac{\beta}{\alpha} = 0.98$ yield a significant improvement in quality over the baseline \((\alpha=1.0, \beta=1.0)\), reaching a level comparable to GANs. 

\begin{table*}[h!]
\vspace{-1mm}
\begin{minipage}{0.52\textwidth}
\begin{adjustbox}{width=\linewidth, center}

\begin{tabular}{clrrrrr}
\toprule[1.5pt]
$\alpha\diagdown\frac{\beta}{\alpha}$ & 0.96 & 0.98 & 1.00 & 1.02 & 1.04 \\
\midrule
0.88 & \gcl{\textbf{1.03}} & \gcl{1.08} & 1.36 & 1.14 & 1.45 \\
0.90 &  \gcl{1.06} &  \gcl{\textbf{1.03}} & 1.38 &  \gcl{1.06}  & 1.48 \\
0.92 & 1.12 & \gcl{1.04} & 1.28  & \gcl{1.10} & 1.69 \\
0.94 & 1.13 & \gcl{\textbf{1.03}} & {1.18} & \gcl{1.10} & 1.64 \\
0.96 & 1.24 & 1.11 & 1.25 & \gcl{1.07} & {1.69} \\
0.98 & {1.65} & 1.26 & 1.22 & 1.29 & - \\
1.0  & - & - & \rlb{1.20} & - & - \\
 \bottomrule[1.5pt]
\end{tabular}
\end{adjustbox}

\end{minipage}\hfill
\begin{minipage}{0.45\textwidth}
\begin{adjustbox}{width=0.82\linewidth, center}
\begin{tabular}{clrr}
\toprule[1.5pt]
& $\lambda_{\text{adv}}^{G_{\theta}}$ & $\lambda_{\text{adv}}^{D}$ & FID~($\downarrow$) \\
\midrule
 \hline
& 0.1 & 0.3 &  1.14\\
& 0.3 & 1 & 1.18 \\
& 1 & 3 & \gcl{1.10} \\
& 5 & 15 & \gcl{\textbf{1.04}} \\
& 25 & 75 & 3.31 \\
 \bottomrule[1.5pt]
\end{tabular}
\end{adjustbox}
\end{minipage}

\vspace{-2mm}
\caption{\small \color{black} Ablation studies of our \((\alpha, \frac{\beta}{\alpha})\) parameters in the left table and adversarial weighting parameters \((\lambda_{\text{adv}}^{G_{\theta}}, \lambda_{\text{adv}}^{D})\) in the right table for CelebA, 800,000 iterations. The baseline \rlb{RealUID (\(\alpha = 1.0, \beta = 1.0\))} does not use real data. Configurations that  \gcl{substantially outperform} the baseline are highlighted.  All values report FID\,\(\downarrow\), where lower is better. The best configuration is \textbf{bolded}. The mark “–” denotes infeasible configurations.}
\label{tab: ablation table celeba}
\vspace{-3.5mm}
\end{table*}

\paragraph{Training hyperparameters and details.} We take the same architecture \citep{tong2024improving} as for CIFAR-10, but adapt it to a larger resolution. We train both models with Adam \citep{kingma2014adam} for 800,000 iterations, using the same momentums $(\beta_1,\beta_2)=(0,0.999)$, learning rate  $5\times10^{-6}$ and a 500-step linear warm-up. Similar to SiD framework \citep{zhou2024adversarial}, we do not recommend setting momentum $\beta_1 \neq 0$ as it is crucial for a successful convergence in our min-max optimization. 

To regulate adaptation between the generator and the fake model, the generator is updated once for every $K=5$ updates of the fake model, following DMD2 \citep{yin2024improved}.
While the SiD framework leverages an EDM architecture \citep{karras2022elucidating} and updates the generator after a single update of the fake model $(K=1)$, our RealUID approach becomes unstable for values $K<3$ due to the different \citep{tong2024improving} architecture.

We \underline{do not use dropout} in generator and fake models. We set a batch size of 64 and maintain an EMA of the generator parameters with decay $0.999$ \citep{hunter1986exponentially}. Additionally, at each optimization step we apply $\ell_2$ gradient-norm clipping with threshold $1.0$ to both the generator and the fake model.

All other details remain the same as described in Appendix \ref{app: experimental details} for CIFAR-10.

\paragraph{Teacher training.}
For CelebA, we train our own teacher model based on the official implementation of the conditional flow matching procedure from \citep{tong2024improving}. We use the same pipeline, architectures, and hyperparameters, but with larger networks and a different dataset. The adapted code for teacher training and final checkpoints for distillation can be found in our repository:
\begin{center}
    \url{https://github.com/David-cripto/RealUID}.
\end{center}
\color{black}
\paragraph{Fine-tuning.} For fine-tuning, we hold the data-free UID baseline (our RealUID with $\alpha = 1.0, \beta = 1.0$) and all \gcl{highlighted} GAN and RealUID setups from Table \ref{tab: ablation table celeba} for twice as long, i.e., for 1,600,000 iterations. The best-found configurations and results are reported in Table \ref{tab:celebaFT} and Figure \ref{fig:celeba_results}. According to it, our RealUID still outperforms data-free UID baseline, reaching the same performance as GANs.

\paragraph{Training time.} All experiments were executed on a \underline{single} Ascend910B NPU with 65 GB of VRAM memory. Regular 800,000 gradient updates took approximately 5 days, while longer fine-tuning with 1,600,000 iterations took 10 days.
\newpage

\begin{table*}[h!]
\centering
\caption{\small This table presents the results of ablation study of our RealUID framework, evaluated using the FID metric on CelebA dataset, 1,600,000 iterations. The Teacher Flow model with 100 NFE is reported as a reference. The performance of the UID (FGM) baseline without real-data incorporation is indicated in \textit{italic}. For emphasis, we \underline{underline} the two counterparts that incorporate real data: the GAN-based and our RealUID methods. The best-performing configuration is  highlighted in \textbf{bold}. Qualitative results are presented in Appendix~\ref{sec: celeba samples}. } \label{tab:celebaFT}

\begin{tabular}{lc} 
 \toprule[1.5pt]
Model & FID~($\downarrow$) \\ %
 \midrule
Teacher Flow (NFE=100) & 2.46 \\
 UID (FGM) & \textit{0.96}  \\
 UID + GAN ($\lambda_{\text{adv}}^{G_{\theta}} = 1.0, \lambda_{\text{adv}}^{D} = 3.0$) & \underline{\textbf{0.87}} \\ 
 RealUID ($\alpha=0.88, \beta=0.90$) (\textbf{Ours})  & \underline{0.89} \\ 
 
 \bottomrule[1.5pt]
\end{tabular}

\vspace{-2mm}
\end{table*}
\begin{figure*}[h!] 
\centering 
\includegraphics[width=0.7\linewidth]{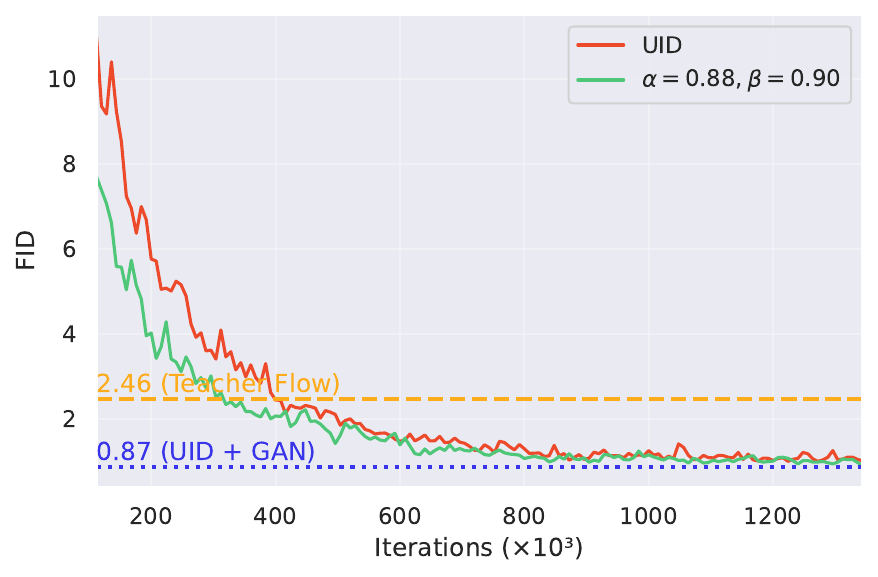} 
\caption{\small
Evolution of FID during CelebA distillation for the data-free \textcolor{Myred}{UID} baseline and the \textcolor{Mygreen}{best-performing RealUID configuration}. The performances of \textcolor{Myorange}{Teacher Flow} and \textcolor{GanBlue}{UID+GAN} are indicated by horizontal lines in their respective colors. } 
\label{fig:celeba_results} 
\end{figure*}

\color{black}

\subsection{Further hyperparameters gridsearch} \label{sec: further improvement} 

The primary goal across all experiments in this paper was to study RealUID framework, focusing on the effects of the coefficients $\alpha$ and $\beta$, and provide a fair comparison with GANs. For this reason, we kept all other hyperparameters fixed at their standard values. Now that we have identified the optimal settings for RealUID, we can explore other hyperparameters. Below, we provide a list of useful findings, while the latest hyperparameters sets and training pipelines are described in our repository
\begin{center}
    \url{https://github.com/David-cripto/RealUID}.
\end{center}

\paragraph{EMA decays.} One can track not only a single EMA decay but a range of values, e.g., [0.999, 0.9996, 0.9999], during a single training run. In long-distance training, larger EMA decays can lead to more stable convergence dynamics and better metrics, whether training from scratch or fine-tuning.


\subsection{Example of samples for various methods} 
This section presents representative sample outputs from various studies conducted within the RealUID framework.

\newpage\subsubsection{CIFAR-10 generated images} \label{sec: cifar samples}
\begin{figure}[ht!]
  \centering
  \includegraphics[width=\linewidth]{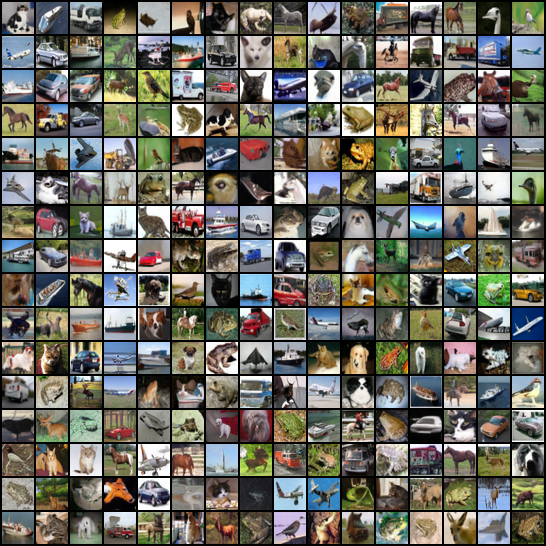}
  \caption{Uncurated samples for \emph{unconditional} generation by the one-step data-free baseline UID trained on CIFAR-10. Quantitative results are reported in Table~\ref{tab:cifar10_uncond}.}
\end{figure}
\newpage
\begin{figure}[ht!]
  \centering
  \includegraphics[width=\linewidth]{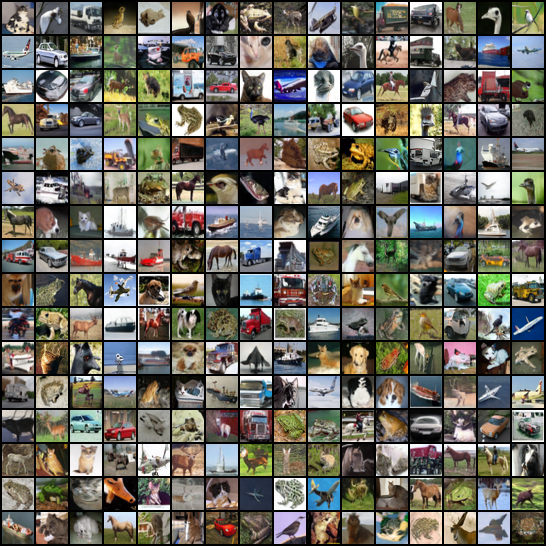}
  \caption{Uncurated samples for \emph{unconditional} generation by the one-step UID + GAN ($\lambda_{\text{adv}}^{G_{\theta}} = 0.3, \lambda_{\text{adv}}^{D} = 1|\lambda_{\text{FT}}^{G_{\theta}} = 25, \lambda_{\text{FT}}^{D} = 75$) trained on CIFAR-10. Quantitative results are reported in Table~\ref{tab:cifar10_uncond}.}
\end{figure}

\newpage
\begin{figure}[ht!]
  \centering
  \includegraphics[width=\linewidth]{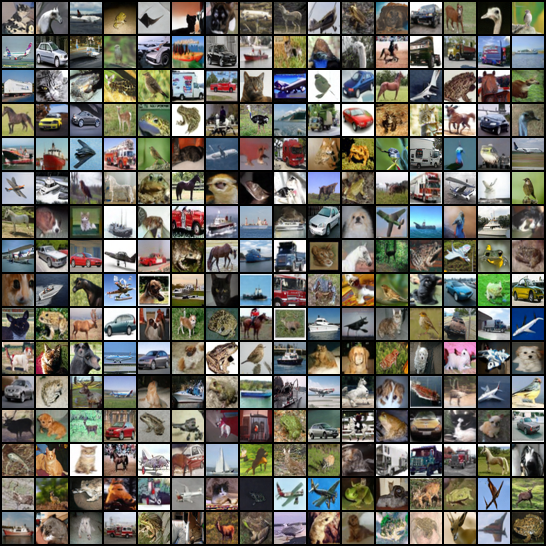}
  \caption{Uncurated samples for \emph{unconditional} generation by \textbf{our} one-step RealUID ($\alpha=0.92, \beta=0.94\mid \alpha_{\text{FT}} = 0.92, \beta_{\text{FT}}=0.86$) trained on CIFAR-10. Quantitative results are reported in Table~\ref{tab:cifar10_uncond}.}
\end{figure}
\newpage

\begin{figure}[ht!]
  \centering
  \includegraphics[width=\linewidth]{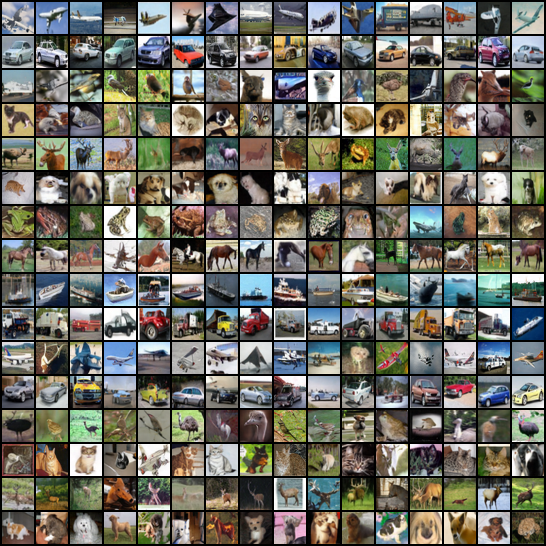}
  \caption{Uncurated samples for \emph{conditional} generation by the one-step data-free baseline UID trained on CIFAR-10. Quantitative results are reported in Table~\ref{tab:cifar10_uncond}.}
\end{figure}
\newpage
\begin{figure}[ht!]
  \centering
  \includegraphics[width=\linewidth]{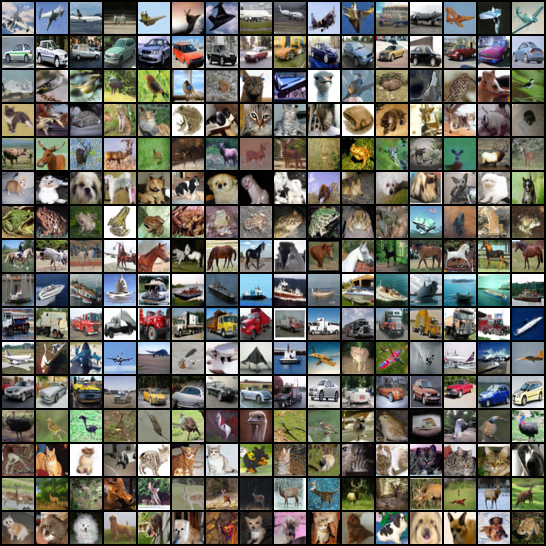}
  \caption{Uncurated samples for \emph{conditional} generation by the one-step UID + GAN ($\lambda_{\text{adv}}^{G_{\theta}}=0.3, \lambda_{\text{adv}}^{D} = 1|\lambda_{\text{FT}}^{G_{\theta}} = 25, \lambda_{\text{FT}}^{D} = 75$) trained on CIFAR-10. Quantitative results are reported in~Table~\ref{tab:cifar10_uncond}.}
\end{figure}
\newpage

\begin{figure}[ht!]
  \centering
  \includegraphics[width=\linewidth]{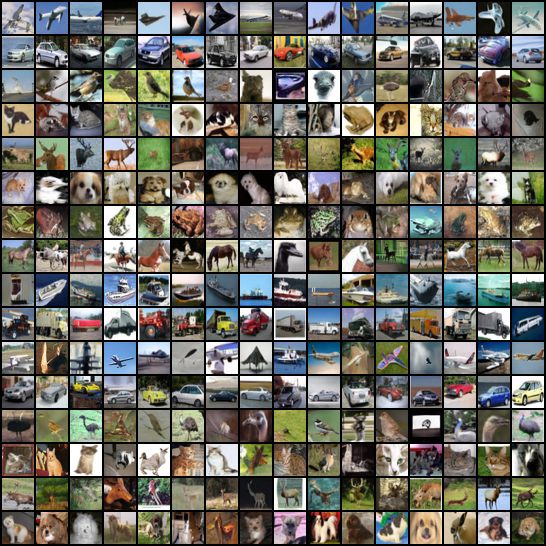}
  \caption{Uncurated samples for \emph{conditional} generation by \textbf{our} one-step RealUID ($\alpha=0.98, \beta=0.96\mid \alpha_{\text{FT}} = 0.96, \beta_{\text{FT}}=0.92$) trained on CIFAR-10. Quantitative results are reported in Table~\ref{tab:cifar10_uncond}.}
\end{figure}

\pagebreak
\subsubsection{CelebA generated images} \label{sec: celeba samples}

\begin{figure}[ht!]
  \centering
  \includegraphics[width=\linewidth]{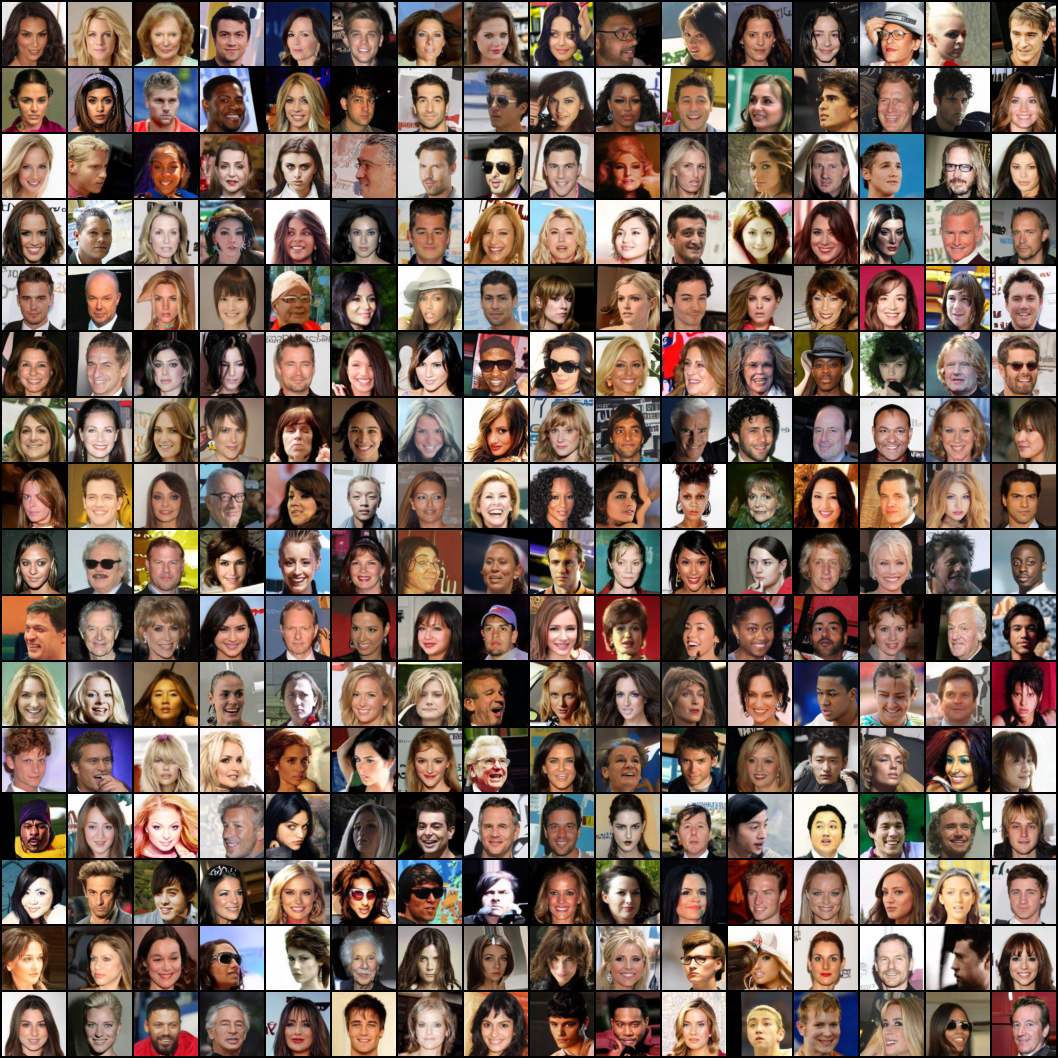}
  \caption{ \color{black} Uncurated samples by the one-step data-free baseline UID trained on CelebA. Quantitative results are reported in Table~\ref{tab: ablation table celeba}.}
\end{figure}
\newpage
\begin{figure}[ht!]
  \centering
 \includegraphics[width=\linewidth]{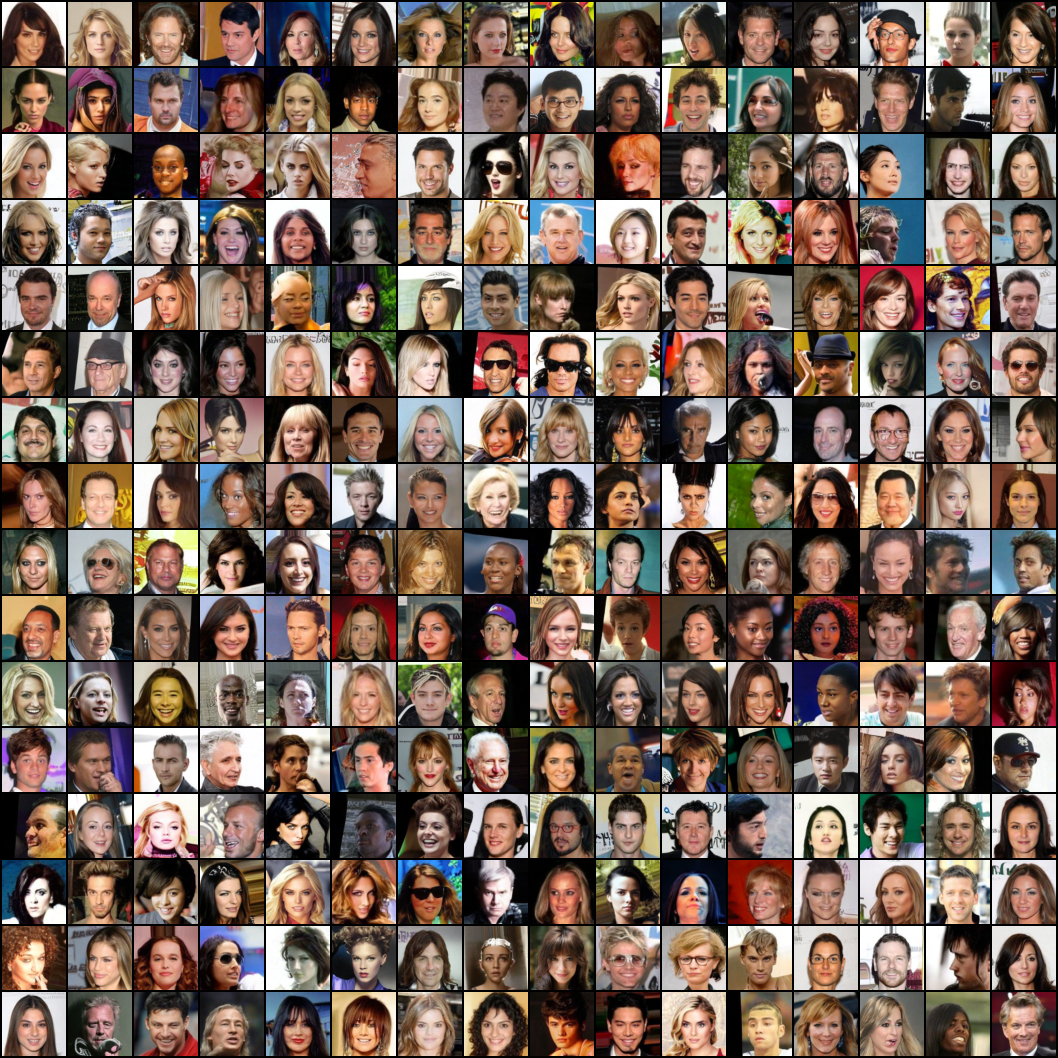}
  \caption{\color{black} Uncurated samples by the one-step UID + GAN ($\lambda_{\text{adv}}^{G_{\theta}} = 1.0, \lambda_{\text{adv}}^{D} = 3.0$) trained on CelebA. Quantitative results are reported in Table~\ref{tab: ablation table celeba}.}
\end{figure}
\newpage
\begin{figure}[ht!]
  \centering
 \includegraphics[width=\linewidth]{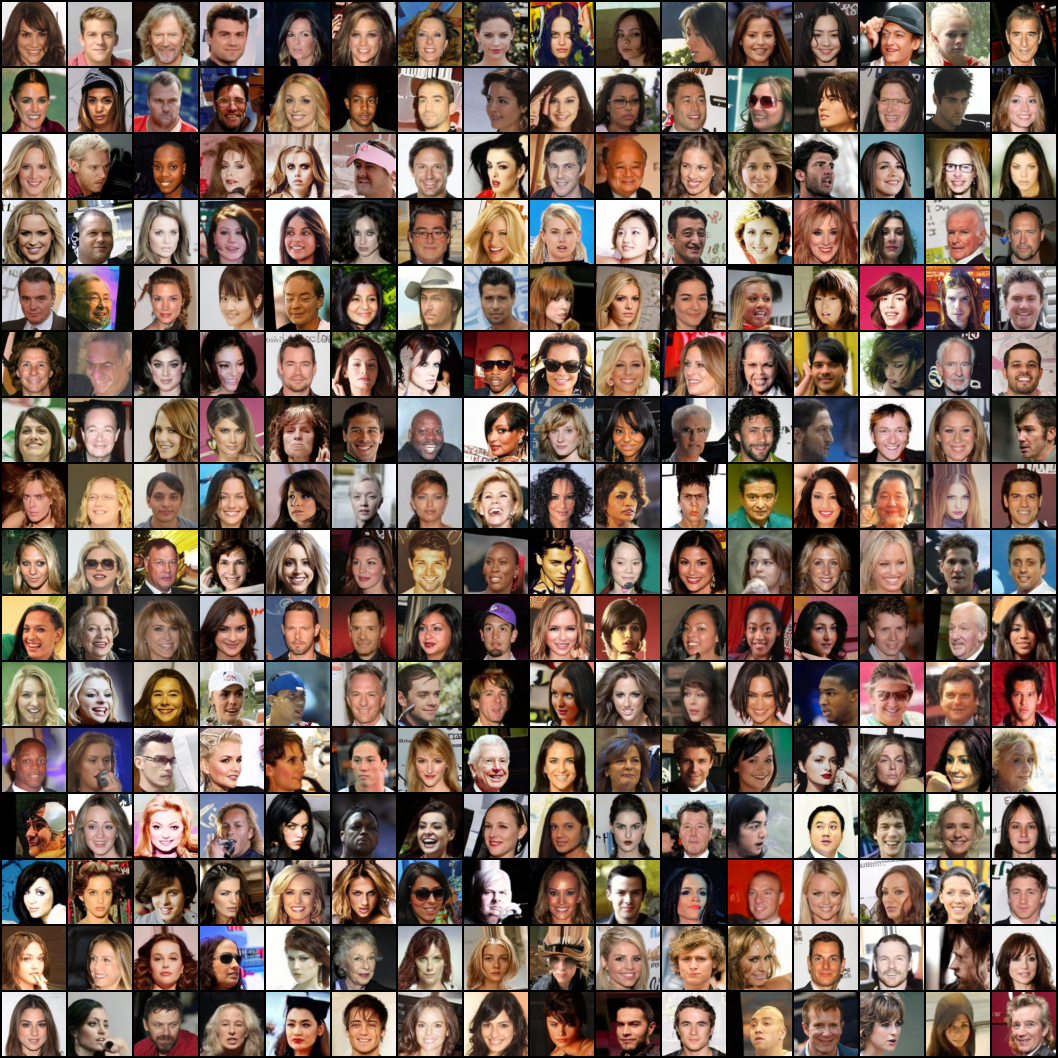}
  \caption{\color{black} Uncurated samples by \textbf{our} one-step RealUID ($\alpha=0.88, \beta=0.9$) trained on CelebA. Quantitative results are reported in Table~\ref{tab: ablation table celeba}.}
\end{figure}

\end{document}